\theoremstyle{thmstyleone}%
\newtheorem{theorem}{Theorem}
\newtheorem{proposition}[theorem]{Proposition}%
\theoremstyle{thmstyletwo}%
\newtheorem{remark}{Remark}%
\newtheorem{definition}{Definition}
\newtheorem{lemma}{Lemma}
\newtheorem{thm}{Theorem}
\newtheorem{corollary}{Corollary}
\theoremstyle{thmstylethree}%
\newcommand{\com}[1]{{\color{red}\textbf{Parinaz's Comment}: #1}}
\newcommand{\comy}[1]{{\color{orange}\textbf{Yifan's Comment}: #1}}
\newcommand{\resp}[1]{{\color{cyan}\textbf{Response}: #1}} 
\newcommand{\com}[1]{}
\newcommand{\comy}[1]{}
\newcommand{\resp}[1]{}
\begin{document}

\title[Article Title]{Generalization Error Bounds for Learning under Censored Feedback}


\author*[1]{\fnm{Yifan} \sur{Yang}}\email{yang.5483@osu.edu}

\author[2]{\fnm{Ali} \sur{Payani}}\email{apayani@cisco.com}

\author[3]{\fnm{Parinaz} \sur{Naghizadeh}}\email{pnaghizadeh@ucsd.edu}

\affil*[1]{\orgname{The Ohio State University}, \orgaddress{\city{Columbus}, \state{OH}, \country{USA}}}

\affil[2]{\orgname{Cisco Research}, \orgaddress{ \city{San Jose},  \state{CA}, \country{USA}}}

\affil[3]{\orgname{University of California San Diego}, \orgaddress{\city{San Diego}, \state{CA}, \country{USA}}}


\abstract{Generalization error bounds from learning theory provide statistical guarantees on how well an algorithm will perform on previously unseen data. In this paper, we characterize the impacts of data non-IIDness due to \emph{censored feedback} (a.k.a. selective labeling bias) on such bounds. Censored feedback is ubiquitous in many real-world online selection and classification tasks (e.g., hiring, lending, recommendation systems) where the true label of a data point is only revealed if a favorable decision is made (e.g., accepting a candidate, approving a loan, displaying an ad), and remains unknown otherwise. We first derive an extension of the well-known Dvoretzky-Kiefer-Wolfowitz (DKW) inequality, which characterizes the gap between empirical and theoretical data distribution CDFs learned from \emph{IID} data, to problems with \emph{non-IID data due to censored feedback}. We then use this CDF error bound to provide a bound on the generalization error guarantees of a classifier trained on such non-IID data. We show that existing generalization error bounds (which do not account for censored feedback) fail to correctly capture the model's generalization guarantees, verifying the need for our bounds. We further analyze the effectiveness of (pure and bounded) exploration techniques, proposed by recent literature as a way to alleviate censored feedback, on improving our error bounds. Together, our findings illustrate how a decision maker should account for the trade-off between strengthening the generalization guarantees of an algorithm and the costs incurred in data collection when future data availability is limited by censored feedback. }


\keywords{Generalization Error Bounds, Censored Feedback, Selective Labeling Bias, Data Non-IIDness, Learning Theory}

\maketitle

\section{Introduction}\label{sec:intro}
Generalization error bounds are a fundamental concept in machine learning theory. They provide (statistical) guarantees on how well a machine learning algorithm trained on some given dataset will perform on new, unseen data. In order to derive such theoretical guarantees for the performance of ML models, existing works often make (implicit or explicit) assumptions about the training data. These assumptions include access to independent and identically distributed (IID) training data, the availability of correct labels, and static underlying data distributions \citep{bartlett2002rademacher,bousquet2002stability,cortes2019region,cortes2020adaptive}. Some studies in this area, e.g. \citet{mohri2007stability,mohri2008rademacher,kuznetsov2017generalization,cheng2018simple}, have provided bounds when these assumptions are removed. In this paper, we are similarly interested in the impact of non-standard assumptions on the training data, specifically due to \emph{censored feedback}, on the learned algorithm's generalization error guarantees. 

Censored feedback, also known as {selective labeling bias}, arises in many applications wherein human or algorithmic decision-makers set certain thresholds or criteria for favorably classifying individuals, and subsequently only observe the true label of individuals who pass these requirements. For example, schools may require a minimum GPA or standardized exam score for admission; yet, graduation rates are only observed for admitted students. Financial institutions may set limits on the minimum credit score required for loan approval; yet, loan return rates are only observed for approved applicants. In these types of classification tasks, the algorithm's training dataset grows over time (as students are admitted, loans are granted); however, the new data is selected in a non-IID manner from the underlying domain, due to the unobservability of the true label of rejected data. This type of bias also arises when determining recidivism in courts, evaluating the effectiveness of medical treatments, assessing the interest of a user in an ad or content,  flagging fraudulent online credit card transactions, etc. Despite this ubiquity, to the best of our knowledge, generalization error bounds given non-IID training data due to censored feedback remain unexplored. \emph{We close this gap by providing such bounds in this work, show the need for them, and formally establish the extent to which censored feedback hinders generalization.} 

We further explore the effects of a common strategy for handling censored feedback on improving learning guarantees. Specifically, one of the commonly proposed methods to alleviate the impacts of censored feedback is to \emph{explore} the data domain, and admit (some of) the data points that would otherwise be rejected, with the goal of expanding the training data. Existing approaches to exploration can be categorized into \emph{pure exploration} \citep{nie2018adaptively,bechavod2019equal,kazerouni2020active,kilbertus2020fair}, where any individual in the exploration range may be admitted (with some probability $\epsilon$), and \emph{bounded exploration} \citep{balcan2007margin, wei2021decision,yang2022adaptive, lee2023partitioned}, in which the exploration range is further limited based on cost or informativeness of the new samples. The additional data samples collected through (pure or bounded) exploration may not only help improve the accuracy of the learned model when evaluated on a given test data (as shown by these prior works), but may also help tighten the generalization error guarantees of the learned model; we formalize the latter improvement, and show how the frequency and range of exploration can be adjusted accordingly. 

We note that censored feedback may or may not be avoidable depending on the application (given, e.g., the costs or legal implications of exploration). We therefore present generalization error bounds both with and without exploration, establishing the extent to which the decision maker should be concerned about censored feedback's impact on the learned model's guarantees, and how well they might be able to alleviate it if exploration is feasible. 

\paragraph{Our approach} We characterize the generalization error bounds as a function of the gap between the empirically estimated cumulative distribution function (CDF) obtained from the training data, and the ground truth underlying distribution of data. At the core of our approach is noting that although censored feedback leads to training data being sampled in a non-IID fashion from the true underlying distribution, this non-IID data can be split into IID ``subdomains''. Existing error bounds for IID data, notably the Dvoretzky-Kiefer-Wolfowitz (DKW) inequality~\citep{dvoretzky1956asymptotic,massart1990tight}, can provide bounds on the deviation of the empirical and theoretical subdomain CDFs, as a function of the number of available data samples in each subdomain. The challenge, however, lies in reassembling such subdomain bounds into an error bound on the full domain CDFs. Specifically, this will require us to shift and/or scale the subdomain CDFs, with shifting and scaling factors that are themselves empirically estimated from the underlying data, and can be potentially re-estimated as more data is collected. Our analysis identifies these factors, and highlights the impacts of each on the error bounds.

\paragraph{Summary of findings and contributions} 
\begin{enumerate}
\item We generalize the well-known DKW inequality, which characterizes the gap between empirical and theoretical CDFs given \emph{IID} data, to problems with \emph{non-IID data due to censored feedback} without exploration (Theorem~\ref{thm:two_subdomains}) and with exploration (Theorem~\ref{thm:three_subdomains}), and formally show the extent to which censored feedback hinders generalization. 

\item We characterize the change in these error bounds as a function of the severity of censored feedback (Proposition~\ref{prop:B(LB) < B(theta)}) and the exploration frequency (Proposition~\ref{prop:B(LB, theta, 1) < B(LB, theta, epsilon)}). We further show (Section~\ref{subsec: illustration}) that a minimum level of exploration is needed to tighten the error bound.   

\item We derive a generalization error bound (Theorem~\ref{thm:error}) for a classification model learned in the presence of censored feedback using the CDF error bounds in Theorems~\ref{thm:two_subdomains} and~\ref{thm:three_subdomains}. 

\item We numerically illustrate our findings (Section~\ref{sec:numerical}). We show that existing generalization error bounds (which do not account for censored feedback) fail to correctly capture the generalization error guarantees of the learned models. We also illustrate how a decision maker should account for the trade-off between strengthening the generalization guarantees of an algorithm and the costs incurred in data collection for reaching enhanced learning guarantees. 
\end{enumerate}
 
\section{Related Works}\label{sec:related}

Although existing literature have studied generalization error bounds for learning from non-IID data, non-IIDness raised by censored feedback has been overlooked. Here, we discuss works most closely related to ours. We also provide a more detailed review of other related work in Appendix~\ref{app:additional_review}.

First, our work is closely related to generalization theory in the PAC learning framework in non-IID settings, including works such as \citep{ mohri2007stability, mohri2008rademacher, kuznetsov2017generalization, yu1994rates}; These works consider dependent samples generated through a stationary or and non-stationary $\beta$-mixing sequence, respectively, where the dependence between samples weakens over time. To address the vanishing dependence issue, these works identify building blocks within which the data can be treated as IID. The study of \citet{yu1994rates} is based on the VC-dimension, while \citet{mohri2008rademacher} and \citet{mohri2007stability} focus on the Rademacher complexity and algorithm stability, respectively. \citet{kuznetsov2017generalization} further extends this analysis to nonstationary mixing sequences. Our work shares a similar high-level approach: we also construct IID blocks to address data non-IIDness. However, we differ in key aspects: the method of reassembling these blocks, the underlying source of data non-IIDness, and our explicit consideration of exploration's impact. Conceptually, our generalization bounds are also distinct. While prior work derives bounds using the mixing parameter $\beta$, treating samples across blocks as IID, our bounds emerge from a threshold-based data collection mechanism. They are constructed by reassembling multiple IID blocks while explicitly accounting for the effects of censored feedback.

Our work is also closely related to partitioned active learning, including \citet{cortes2019region, cortes2020adaptive, lee2023partitioned, zheng2019generalization}. \citet{cortes2019region} partition the entire domain to find the best hypothesis for each subdomain, and a PAC-style generalization bound is derived compared to the best hypothesis over the entire domain. This idea is further extended to adaptive partitioning in \citet{cortes2020adaptive}. In \citet{lee2023partitioned}, the domain is partitioned into a fixed number of subdomains, and the most uncertain subdomain is explored to improve the mean-squared error. The work of \citet{zheng2019generalization} considers a special data non-IIDness where the data-generating process depends on the task property, partitions the domain according to the task types, and analyzes each subdomain separately. Our work is similar to these studies in that we also consider (active) exploration techniques, and partition the data domain to build IID blocks. However, we differ in problem setup and analysis approach, and in accounting for the cost of exploration when we consider bounded exploration techniques. More specifically, their bounds are derived from the aggregation of multiple subdomains with requested labels (we refer to as exploration). The key distinction lies in data availability: while they can request labels from any subdomain without considering the cost of doing so, we are constrained to explore samples from certain subdomains due to the presence of censored feedback.

The technique of identifying IID-blocks within non-IID datasets has also been used in other (application) contexts to address the challenge of generalization guarantees given non-IID data. For instance, \citet{wang2023cross} investigate generalization performance with covariate shift and spatial autocorrelation in geostatistical learning. They address the non-IIDness issue by removing samples from the buffer zone to construct spatially independent folds. Similarly, \citet{tang2021personalized} study generalization performance within the Federated Learning paradigm with non-IID data. They employ clustering techniques to partition clients into distinct clusters based on statistical characteristics, thus treating samples from clients within each cluster as IID and analyzing each cluster separately. We similarly explore generalization performance with non-IID data samples and employ the technique of identifying IID subdomains/blocks. However, we differ in the reason for the occurrence of non-IIDness, the setup of the problem, and our analytical approaches.

We are further related to the literature on active learning with censored feedback. \citet{leme2023pricing} investigate the query complexity in auction settings, leveraging the notion of relative flatness to efficiently explore the optimal reserved price in the right region of the distribution without fully recovering the entire value distribution. \citet{zhang2023learning} actively learn the threshold under censored feedback, deriving PAC-style query complexity bounds under monotonicity and Lipschitz conditions, and extend their results to adversarial online settings using bandit-based algorithms. Like these works, we study learning under threshold-based censoring of the collected data, but differ in the threshold setting, the setup of the problem, and the impact of exploration on the generalization error bounds. 

Lastly, our work is also connected to broader literature on learning under censored feedback. The multi-armed bandit learning \citep{bubeck2012regret,lattimore2020bandit} explores the exploration-exploitation trade-off, which similarly arises in online/machine learning scenarios where collecting additional data can improve generalization but incurs cost. \citet{chien2023algorithmic} examine harms from censored feedback and propose randomized exploration (and recourse) strategies to mitigate the impact of censored feedback. \citet{guinet2022bandit} studies sequential decision-making in bandit settings under strategic and random censored feedback, analyzing regret through the effective dimension and a generalized Elliptical Potential Inequality. \citet{abernethy2016threshold} investigate the threshold bandit problem with binary rewards, proposing UCB-style algorithms with regret guarantees based on DKW and Kaplan-Meier estimators under uncensored and censored feedback. \citet{lugosi2022hardness} analyze a repeated newsvendor problem under censored demand using regret-minimizing algorithms that combine tailored cost estimators with randomized exploration to achieve optimal performance even in adversarial, nonstationary settings. The key difference in our approach is that we consider \emph{bounded} exploration (motivated by works such as \citep{balcan2007margin,lee2023partitioned, wei2021decision, yang2022adaptive, Yang2025}), where a bound is set to limit the ``arms'' that are considered for exploration; here, ranges of data samples that may be admitted). This is due to the assumption that the cost of wrong decisions increases as samples further away from the current decision threshold are admitted, making some arms too costly for exploration. Furthermore, in the bandit literature, regret analysis is conducted to analyze the model performance compared to the best actions in hindsight. In contrast, we analyze the model performance from a different angle: our goal is to improve the generalization error guarantees (upper bound on the difference between the model's performance on training data and unseen testing data) by utilizing the newly collected samples through exploration.

\section{Problem Setting}\label{sec:model}

We consider a supervised learning setup where a learner (equivalently, the learning algorithm) selects a classifier based on an initial training dataset and subsequently uses it to make binary decisions (e.g., accept/reject) for new data samples arriving sequentially. We use a bank granting loans as a running example. 

\vspace{0.1in}
\textbf{The data.} Each data sample is represented as a pair $(x,y)$, where $x\in\mathcal{X}\subseteq \mathbb{R}$ is the feature used for decision-making (e.g., credit score)\footnote{Extensions to high-dimensional samples are discussed in Appendix~\ref{app:higher_dimension}.}, and $y\in \mathcal{Y}=\{0,1\}$ is the true label indicating qualification status, with $y=1$ denoting that the sample is qualified to receive a favorable decision (e.g., the applicant will repay the loan if granted). We denote the corresponding random variables as $X$ and $Y$, and we use $F^y(x) = \mathbb{P}(X \leq x|Y=y)$ to denote the cumulative distribution function (CDF) of $X$ conditional on $Y=y$, and $p_y = \mathbb{P}(Y=y)$ to denote the label rates in the population. 

\vspace{0.1in}
\textbf{The learning algorithm.} The learner begins with an initial training dataset consisting of $n_y$ IID samples\footnote{We assume that any non-IIDness is introduced due to censored feedback impacting subsequent data collection. Extension to initially biased training data is possible but at the expense of additional notation.} $\{x^y_i\}_{i=1}^{n_y}$ for each label $y \in \{0,1\}$. Based on the initial dataset, the learner selects a threshold-based binary classifier $f_{\theta}(x):\mathcal{X}\rightarrow\{0,1\}$ (i.e., $f_\theta(x) = \mathbbm{1}(x\geq \theta)$) to decide whether to accept or reject (equivalently, assign labels 1 or 0) incoming applications, where $\theta$ denotes the decision threshold (e.g., $\theta$ could be the minimum credit score to be approved for a loan).\footnote{The threshold classifier assumptions is not too restrictive: \citet[Thm 3.2]{corbett2017algorithmic} and \citet{raab2021unintended} have shown that threshold classifiers can be optimal if multi-dimensional features can be appropriately converted into a one-dimensional scalar (e.g., with a neural network).} 

\vspace{0.1in}
\textbf{Censored vs. disclosed regions.} The decision threshold $\theta$ divides the data domain into two regions: the upper, \emph{disclosed} region, where the true label of future admitted samples will become known to the learner, and the lower, \emph{censored} region, where true labels are no longer observed. As new samples arrive, due to this censored feedback, additional data is only collected from the disclosed region of the data domain (e.g., we only find out if an individual repays the loan if it is granted the loan in the first place). This is what causes the non-IIDness of the (expanded) dataset: after new samples arrive, the training dataset consists of $n_y$ initial IID samples from both censored and disclosed regions on each label $y$, and an additional $k_y$ samples collected afterward from each label $y$, but only from the disclosed region, making the entire $n_y+k_y$ samples a non-IID subset of the respective label $y$'s data domain. 

\begin{remark}\label{remark:prior-posterior} We note that there are two possible ways to interpret the additional samples $\{k_y\}_{y\in\{0,1\}}$: \emph{a posteriori} (i.e., outcomes after collecting exactly $k_y$ new samples in the disclosed region), or \emph{a priori} (i.e., possible values once a total of $T$ new samples arrive, only some of which will fall in the disclosed region). The former is a reasonable assumption if a learner has already collected samples under censored feedback, or alternatively, is willing to wait to collect the exact required number of samples until it can achieve a desired error bound. The latter is from the viewpoint of a learner contemplating potential outcomes if it waits for a total of $T$ new samples to arrive. We will present our new error bound under both interpretations.  
\end{remark}

\textbf{Estimating the underlying CDF of the data.} Formally, let $F^y(x)$ denote the theoretical (ground truth) CDF for label $y$ samples. Let $\alpha^y:={F}^y(\theta)$ be the theoretical fraction of samples in the censored region, and $m_y$ be the random number of the initial $n_y$ training samples from label $y$ samples located in the censored region. It is worth noting that $\frac{m_y}{n_y}$ can provide an empirical estimate of $\alpha^y$, but the two are in general not equal. After new samples are collected, the learner has access to $n_y+k_y$ total samples from label $y$ samples, which are not identically distributed: $m_y$ are in the censored region, and $n_y-m_y+k_y$ are in the disclosed region. Let $F^y_{n_y+k_y}(x)$ denote the empirical CDF of the feature distribution for label $y$ samples based on these $n_y+k_y$ training data points. Our first goal is to provide an error bound, similar to the DKW inequality, of the discrepancy between $F^y_{n_y+k_y}(x)$ and the ground truth CDF $F^y(x)$, for each label $y$. We will then use these to bound the generalization error guarantees of the learned model from the (non-IID) $\{n_y+k_y\}_{y\in \{0,1\}}$ data points.

\vspace{0.1in}
\textbf{\emph{When} will an error bound be found? Two viewpoints.} In the described setup, the threshold-based binary classifier $f_\theta(x)$ is selected based on a collection of initial training data, and then the resulting threshold $\theta$ is adopted and impacts future data collection. We consider two viewpoints of \emph{when} in this process a CDF/generalization error bound is found:
\begin{itemize}
    \item An \emph{ex-post analysis}, where the initial training data is an already collected \emph{realization} of $\{n_y\}_{y\in\{0,1\}}$ data points drawn from the underlying data distributions (shown in red in Fig.~\ref{fig:randomness}). This results in a realization (fixed) $\theta$, with future data drawn randomly from the underlying data distributions and selected or rejected according to this threshold. In our running example, this is akin to saying that a bank has some historical training data, which it will use to set a threshold to approve future loans, which will in turn induce censored feedback in its future data. 
    \item An \emph{ex-ante analysis}, where the initial training samples have not been collected yet, and may be any of the possible realizations of the underlying data distribution (shown in green in Fig.~\ref{fig:randomness}), which leads to a random threshold $\theta$. In our running example, this is akin to saying that a bank has no data, so it will decide to collect an initial set of $\{n_y\}_{y\in\{0,1\}}$ data points at random, and then use those to set the threshold and collect additional (censored) data.  
\end{itemize} 

We first discuss our findings under the \emph{ex-post analysis} case, and then present the \emph{ex-ante analysis} case in Section~\ref{sec:full_random}.  

\begin{figure}[ht]
	\centering
    \includegraphics[width=0.75\textwidth]{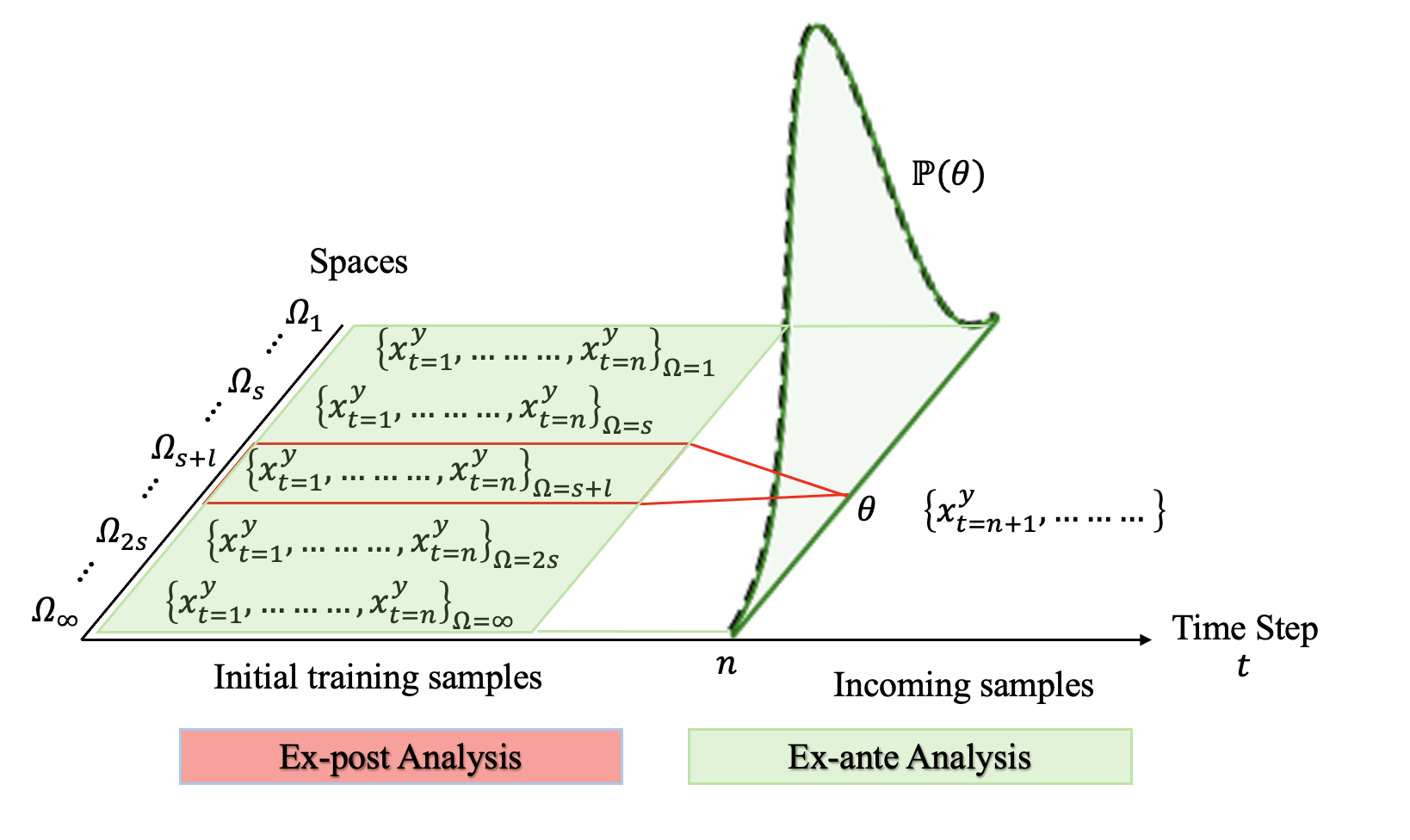}
    \caption{Illustration on the difference between the \emph{ex-post analysis} and the \emph{ex-ante analysis} cases.}
	\label{fig:randomness}
\end{figure}

\vspace{0.1in}
\textbf{Summary of problem setting.} Our problem setting is summarized below. We also summarize all notations in Table~\ref{tab1:notation}. 

\vspace{0.05in}
\emph{$\bullet$ Stage I: Initial Training Samples and Threshold Selection.} The learner's initial training data consists of $n_y$ data points, $\{x^y_i\}_{i=1}^{n_y}$, from each label $y\in\{0,1\}$ (either collected in advance, or to be collected). These data points are drawn IID from the corresponding true underlying distributions with CDF $F^y(x)$. Accordingly, the learner selects a decision threshold $\theta$. Once $\theta$ is found, the initial $n_y$ samples with label $y$ are divided into $m_y$ samples below $\theta$ ($m_y=|\{i: x^y_i< \theta\}|$, referred to as the censored region) and $n_y-m_y$ samples above $\theta$ (referred to as the disclosed region). 

\vspace{0.05in}
\emph{$\bullet$ Stage II: Arrival of New Data Samples.} At each time $t$, a new sample $(\hat{x},\hat{y})$ arrives. Its true label is $\hat{y}=y$ with probability $p_y$, and its feature $\hat{x}$ is drawn uniformly at random from the corresponding conditional distribution with CDF $F^{\hat{y}}(x)$. The sample's feature $\hat{x}$ is observed, and it is admitted if and only if $\hat{x}\geq \theta$. Due to censored feedback, $\hat{y}$ will only be observed if the sample is admitted. When a sample is admitted, its data is used to expand the corresponding dataset of $y=\hat{y}$ samples to $\{x^y_1, \ldots, x^y_{n_y}, x^y_{n_y+1}, \ldots, x^y_{n_y+k^t_y-1}, \hat{x}\}$.

\vspace{0.05in}
\emph{Finding Error Bounds.} Stage II is repeated $T$ times (which can be fixed in advance, or can be the time at which a certain number of samples have been collected). At this time, the learner has access to $k_y$ new samples for each label $y$. These samples have expanded the initial (IID) training dataset for label $y$ into a now non-IID collection $\{x^y_1, \ldots, x^y_{n_y}, x^y_{n_y+1}, \ldots, x^y_{n_y+k_y-1}, x^y_{n_y+k_y}\}$. The learner finds $F^y_{n_y+k_y}(x)$, the empirical CDF of the feature distribution for label $y$, based on the combined {$n_y+k_y$} data points. The \emph{ex-post analysis} sets a bound on the error in this CDF estimate after Stage I. The \emph{ex-ante analysis} sets a bound on the error in this CDF estimate before Stage I.  

\begin{table}
\caption{Notation Summary}\label{tab1:notation}
\begin{tabular}{@{}lp{10cm}@{}}
\toprule
\textbf{Symbol} & \textbf{Explanation} \\
\midrule
$(x,y)$ & Paired (feature, label) information of samples\\
  $\theta$   & Decision threshold\\
  $LB$   & Exploration lower bound \\
  $n ~ (n_y)$ & Number of initial samples (from label $y$) \\
  $l, m$   & Number of initial samples that fall below the $LB, \theta$\\
  $\alpha, \beta$   & Theoretical fraction of samples that fall below the $\theta, LB$ \\
  $k$   & Additional samples collected under no-exploration case \\
  $k_{e}, k_{d}$   & Additional samples collected under exploration case, where $k_{e}$ and $k_{d}$ represent samples collected in the exploration and disclosed regions \\
  $T$   & Total number of sequential arriving samples \\
  $p_y$   & Label portions in the populations $\mathbb{P}(Y = y)$ \\
  $F^y, F^y_n$   & Theoretical and empirical CDF $\mathbb{P}(X \leq x|Y=y)$ on the full data domain\\
  $G^y, G^y_m$   & Theoretical and empirical CDF $\mathbb{P}(X \leq x|Y=y)$ on the censored region\\
  $E^y, E^y_{m-l+k_1}$   & Theoretical and empirical CDF $\mathbb{P}(X \leq x|Y=y)$ on the exploration region\\
  $K^y, K^y_{n-m+k}$   & Theoretical and empirical CDF $\mathbb{P}(X \leq x|Y=y)$ on the disclosed region\\
  $R(\theta), R_{emp}(\theta)$   & Expected and empirical risk incurred by an algorithm with a decision threshold $\theta$.\\ 
\botrule
\end{tabular}
\end{table}

\section{Error Bounds on Cumulative Distribution Function Estimates (Ex-post Analysis)} \label{sec:split}

Recall that our first goal is to provide an error bound, similar to the DKW inequality, of the discrepancy between the empirical CDF  of feature distribution $F^y_{n_y+k_y}(x)$ and the ground truth CDF $F^y(x)$, for each label $y$. Note that the empirical CDF is found for each label $y$ separately based on its own data samples. Therefore, we drop the label $y$ from our notation throughout this section for simplicity. Further, we first derive the \emph{a posteriori} bounds for given realizations of $k_y$, and develop the \emph{a priori} version of the bound accordingly in Corollary~\ref{cor:a-priori-bound} {(c.f. Remark~\ref{remark:prior-posterior} for the distinction between \emph{a priori} and \emph{a posteriori} cases)}.

We first state the DKW inequality (an extension of the Vapnik–Chervonenkis (VC) inequality for real-valued data) which provides a CDF error bound given IID data. 

\begin{thm}[The DKW inequality \citep{dvoretzky1956asymptotic,massart1990tight}]\label{thm:GC_theorem}
Let $Z_1, \ldots, Z_n$ be IID real-valued random variables with cumulative distribution function $F(z) = \mathbb{P}(Z_1\leq z)$. Let the empirical distribution function be $F_n(z) = \frac{1}{n}\sum_{i=1}^{n}\mathbbm{1}(Z_i\leq z)$. Then, for every $n$ and $\eta {> 0}$,
\[\mathbb{P}\bigg(\sup_{z\in \mathbb{R}} \Big|F(z) - F_n(z)\Big|\geq \eta \bigg) \leq 2\exp{(-2n\eta^2)}~.\]
\end{thm}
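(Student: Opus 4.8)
Since Theorem~\ref{thm:GC_theorem} is the classical DKW inequality stated with Massart's sharp constant, the plan is to reconstruct the standard argument in four moves. First I would reduce to the uniform case. Assuming $F$ is continuous, the probability integral transform gives that $U_i := F(Z_i)$ are IID $\mathrm{Unif}[0,1]$, and since $F$ is monotone one obtains the distributional identity $\sup_{z\in\mathbb{R}}|F_n(z)-F(z)| \stackrel{d}{=} \sup_{u\in[0,1]}|G_n(u)-u|$, where $G_n$ is the empirical CDF of $U_1,\dots,U_n$. For a general (possibly discontinuous) $F$ I would pass to the generalized inverse and use a monotonicity/approximation argument showing that the uniform case dominates, so it suffices to bound the uniform empirical process.

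Second, I would replace the supremum over the continuum by a finite maximum. Because $G_n(u)-u$ only jumps (upward) at the order statistics $U_{(1)}\le\cdots\le U_{(n)}$ and decreases linearly in between, the upward deviation is attained at a jump:
\[
\sup_{u\in[0,1]}\big(G_n(u)-u\big) = \max_{1\le i\le n}\Big(\tfrac{i}{n}-U_{(i)}\Big),
\]
with the analogous expression for the downward deviation. This recasts the problem as controlling how far the uniform order statistics stray from their expected positions, turning an infinite-dimensional supremum into a maximum over $n$ quantities.

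Third --- and this is the crux --- I would establish the one-sided exponential bound $\mathbb{P}\big(\sup_u (G_n(u)-u)\ge \eta\big)\le e^{-2n\eta^2}$. A quick but lossy route uses the representation above: each event $\{\tfrac{i}{n}-U_{(i)}\ge\eta\}$ is a binomial tail (since $nG_n(u)\sim\mathrm{Binomial}(n,u)$), so Hoeffding/Chernoff bounds control it, and a union bound over the $n$ indices yields the correct $e^{-2n\eta^2}$ exponential order up to a polynomial-in-$n$ prefactor. Removing that prefactor to reach the clean constant $2$ with no prefactor is exactly Massart's refined analysis of the uniform empirical process (equivalently, reflection-principle estimates for the associated Brownian-bridge-type random walk), and this is the step I expect to be the main obstacle.

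Finally, the two-sided statement follows by symmetry and a union bound: the downward deviation $\sup_u(u-G_n(u))$ obeys the same one-sided bound, so
\[
\mathbb{P}\Big(\sup_{u}|G_n(u)-u|\ge\eta\Big)\le \mathbb{P}\big(\sup_u(G_n(u)-u)\ge\eta\big)+\mathbb{P}\big(\sup_u(u-G_n(u))\ge\eta\big)\le 2e^{-2n\eta^2},
\]
which accounts for the factor of $2$ in the claimed bound. Thus the genuinely hard content is entirely contained in the sharp one-sided estimate of the third step; the reductions of the first two steps and the union bound of the last step are routine.
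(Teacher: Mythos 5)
The paper does not prove this statement at all: Theorem~\ref{thm:GC_theorem} is imported verbatim as a classical result, with the proof delegated entirely to the cited references \citep{dvoretzky1956asymptotic,massart1990tight}. So there is no in-paper argument to compare against; what can be assessed is whether your outline would constitute a proof on its own. Your first, second, and fourth steps are the standard and correct reductions (probability integral transform to the uniform case, restriction of the supremum to the order statistics, and the symmetry-plus-union-bound passage from one-sided to two-sided). But the proposal has a genuine gap exactly where you flag it: step three. The Hoeffding-plus-union-bound route you describe yields a bound of the form $2(n+1)e^{-2n\eta^2}$ (a Glivenko--Cantelli/VC-type bound), not $2e^{-2n\eta^2}$, and removing that polynomial prefactor is the entire content of Massart's theorem --- it requires a careful analysis of the uniform empirical process (Massart works with an exact expression for the one-sided exceedance probability due to Smirnov/Dempster and then optimizes it), not merely ``reflection-principle estimates.'' Since you assert rather than supply this step, the proposal is an accurate roadmap but not a proof. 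The distinction is not cosmetic in the context of this paper: Appendix~\ref{app:additional_review} explicitly motivates the use of DKW over the GC/VC bounds by the elimination of the $(n+1)$ factor, so a proof that only reaches the prefactored bound would prove a materially weaker statement than the one used downstream in Theorems~\ref{thm:two_subdomains} and~\ref{thm:three_subdomains}.

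One further small caution on step four: Massart's sharp \emph{one-sided} bound $\mathbb{P}(\sup_u(G_n(u)-u)\geq\eta)\leq e^{-2n\eta^2}$ holds only for $\eta\geq\sqrt{\ln 2/(2n)}$; for smaller $\eta$ the one-sided inequality with constant $1$ can fail. Your union-bound step therefore only directly covers the regime $\eta\geq\sqrt{\ln 2/(2n)}$, and you need to observe separately that for smaller $\eta$ the two-sided claim is vacuous because $2e^{-2n\eta^2}\geq 1$. This is easily patched, but as written the final step silently assumes the unrestricted one-sided bound.
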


\textbf{Failing to address the censored feedback.} The DKW inequality shows how the likelihood that the maximum discrepancy between the empirical and true CDFs exceeds a tolerance level $\eta$ decreases in the number of (IID) samples $n$. In accordance with the strong law of large numbers, the bounds eventually converge to zero as $n$ becomes large. However, when a decision threshold is set—indicating that only samples above the threshold are collected to improve the bounds—the situation changes. In this case, the collected samples are no longer representative of the entire data domain, as they are limited to the disclosed region. Consequently, the bounds associated with the censored region cannot be improved. This results in a persistent gap between the theoretical and empirical CDFs, preventing convergence to zero. 

We now extend the DKW inequality to the case of non-IID data due to censored feedback. We do so by first splitting the data domain into blocks containing IID data, to which the DKW inequality is applicable. Specifically, although the expanded training dataset is non-IID, the decision maker has access to $m$ IID samples in the censored region, and $n-m+k$ IID samples in the disclosed region. Let $G_m$ and $K_{n-m+k}$ denote the corresponding empirical feature distribution CDFs. The DKW inequality can be applied to bound the difference between these empirical CDFs and the corresponding ground truth CDFs $G$ and $K$. 

It remains to identify a connection between the full CDF $F$, and $G$ (the censored CDF) and $K$ (the disclosed CDF), to reach a DKW-type error bound on the full CDF estimate (see Figure~\ref{fig:comparison} for an illustration). This reassembly from the bounds on the IID blocks into the full data domain is however more involved, as it requires us to consider a set of scaling and shifting factors, which are themselves empirically estimated and different from the ground truth values. We will account for these differences when deriving our generalization of the DKW inequality, as detailed in the remainder of this section. All proofs are given in the Appendix~\ref{app:all_proofs}. 

\begin{figure}[ht]
	\centering
    \includegraphics[width=0.45\textwidth]{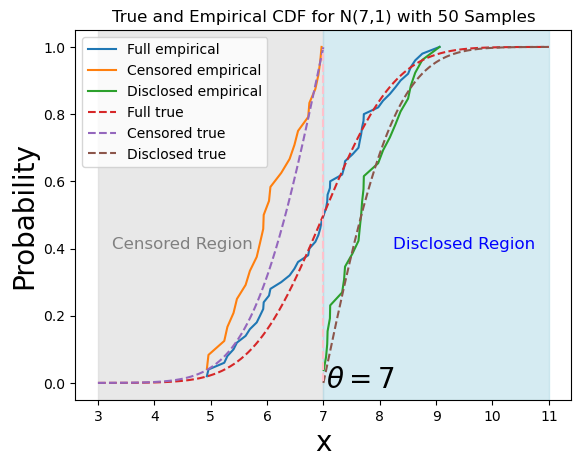}
    \caption{The empirical CDFs $F_{n+k}$ (Full domain), $G_m$ (Censored region), and $K_{n-m+k}$ (Disclosed region), and the theoretical CDFs of $F$, $G,$ and $K$. Experiments based on randomly drawn samples from Gaussian data $N(7,1)$, $\theta=7$, $n=50$, $m=24$, and $k=0$.}
	\label{fig:comparison}
\end{figure}

\subsection{CDF bounds under censored feedback} \label{subsec: no_exploration}

We first present two lemmas that establish how the deviation of $G_m$ and $K_{n-m+k}$ from their corresponding theoretical values, {for a given realization of $m$ data points in the censored region}, relate to the deviation of the full empirical CDF $F_{n+k}$ from its theoretical value $F$. 

\begin{lemma}[Censored Region]\label{lemma:left}
Let $Z = \{x_i|x_i\leq \theta\}$ denote the  $m$ out of $n+k$ samples that are in the censored region. Let $G$ and $G_{m}$ be the theoretical and empirical CDFs of $Z$, respectively. Then, 
{\begin{align*}
    \sup_{x \in (-\infty, \theta)}|F(x)-F_{n+k}(x)| \leq  \sup_{x \in (-\infty, \theta)}\underbrace{\Big|\min\Big(\alpha, \frac{m}{n}\Big)(G(x)  - G_{m}(x))\Big|}_{\text{(scaled) censored subdomain error}} + \underbrace{\Big|\alpha - \frac{m}{n} \Big|}_{\text{scaling error}} .
\end{align*}}
\end{lemma}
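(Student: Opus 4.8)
The plan is to turn the supremum over the censored region into a pointwise algebraic inequality and only take the supremum at the very end. Throughout I work on the region $(-\infty,\theta)$.

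First I would write both CDFs in closed form on the censored region. Conditioning gives $G(x)=\mathbb{P}(X\le x\mid X\le\theta)=F(x)/F(\theta)$ for $x<\theta$, hence $F(x)=\alpha\,G(x)$ with $\alpha=F(\theta)$. On the empirical side, the full empirical CDF $F_{n+k}$ on this region is assembled from the censored subdomain by scaling $G_m$ by the empirically estimated censored fraction $\tfrac{m}{n}$ --- the unbiased estimate of $\alpha$ obtained from the initial $n$ IID samples, and the natural choice here since the $k$ newly collected samples all land in the disclosed region and so must not enter the censored normalization. Thus $F_{n+k}(x)=\tfrac{m}{n}\,G_m(x)$ for $x<\theta$.

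Next I would prove the pointwise bound. Fix $x<\theta$ and abbreviate $a=\alpha$, $b=\tfrac{m}{n}$, $g=G(x)$, $h=G_m(x)$, so that $|F(x)-F_{n+k}(x)|=|ag-bh|$ with $g,h\in[0,1]$ and $a,b\ge 0$. The key manipulation is to split the difference in either of two algebraically equivalent ways,
\[
ag-bh=(a-b)\,g+b\,(g-h)=(a-b)\,h+a\,(g-h).
\]
Applying the first identity when $b\le a$ and the second when $a\le b$, and in each case bounding the factor multiplying $(a-b)$ by $1$, yields in both cases
\[
|ag-bh|\le \min(a,b)\,|g-h|+|a-b|.
\]
Translated back, this reads $|F(x)-F_{n+k}(x)|\le \min(\alpha,\tfrac{m}{n})\,|G(x)-G_m(x)|+|\alpha-\tfrac{m}{n}|$ for every $x<\theta$.

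Finally I would take the supremum over $x\in(-\infty,\theta)$. The constant $|\alpha-\tfrac{m}{n}|$ pulls out of the supremum, and since $\min(\alpha,\tfrac{m}{n})\ge 0$ it may be absorbed inside the absolute value as $|\min(\alpha,\tfrac{m}{n})(G(x)-G_m(x))|$, which reproduces the stated inequality. I expect the main obstacle to be the $\min(\alpha,\tfrac{m}{n})$ coefficient: a single fixed decomposition only delivers one of $\alpha$ or $\tfrac{m}{n}$, and obtaining the tighter minimum requires the case split above, choosing in each case the decomposition whose leftover constant term is the one multiplied by a quantity in $[0,1]$ (so that bounding it by $1$ is not lossy). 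A secondary point to verify carefully is the normalization of the empirical CDF on the censored region, namely that the scaling factor is $\tfrac{m}{n}$ rather than $\tfrac{m}{n+k}$, since this is precisely what makes $|\alpha-\tfrac{m}{n}|$ the correct scaling-error term.
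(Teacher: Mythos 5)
Your proposal is correct and follows essentially the same route as the paper's proof: both rewrite $F=\alpha G$ and $F_{n+k}=\tfrac{m}{n}G_m$ on the censored region, apply the add-and-subtract decomposition in its two variants (the paper phrases this as "add and subtract $\alpha G_m$" versus "$\tfrac{m}{n}G$"), bound the leftover CDF factor by $1$, and take the smaller of the two resulting coefficients to obtain $\min(\alpha,\tfrac{m}{n})$. Your explicit case split and your remark on why the censored normalization is $\tfrac{m}{n}$ rather than $\tfrac{m}{n+k}$ only make explicit what the paper leaves implicit.
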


The (partial) error bound in this lemma shows the maximum difference between the true $F$ and the empirical $F_{n+k}$ in the censored region (i.e., for $x \in (-\infty, \theta)$) can be bounded by the maximum difference between $G$ and $G_m$, modulated by the \emph{scaling} ($\min(\alpha, \frac{m}{n})$) that is required to map from partial CDFs to full CDFs. 

Specifically, to match the partial and full CDFs, we need to consider the different endpoints of the censored region's CDF and the full CDF at $\theta$, which are $G_m(\theta) = G(\theta) = 1$, $F(\theta) = \alpha$, and $F_{n+k}(\theta) = \frac{m}{n}$, respectively. The first term in the bound above accounts for this by scaling the deviation between the true and empirical partial CDF accordingly. The second term accounts for the error in this scaling since the empirical estimate $\frac{m}{n}$ is generally not equal to the true endpoint $\alpha$. 

The following is a similar result in the disclosed region. 

\begin{lemma}[Disclosed Region]\label{lemma:right}
Let $Z = \{x_i|x_i\leq \theta\}$ denote the  $n-m+k$ out of the $n+k$ samples in the disclosed region. Let $K$ and $K_{n-m+k}$ be the theoretical and empirical CDFs of $Z$, respectively. Then, 
{{\begin{align*} 
    &\sup_{x \in (\theta, \infty)}|F(x)-F_{n+k}(x)| \\
    &\qquad \leq  \sup_{x \in (\theta, \infty)}\underbrace{\Big|\min(1-\alpha,1-\frac{m}{n}) (K(x)  - K_{n-m+k}(x))\Big|}_{\text{(scaled) disclosed subdomain error}} + \underbrace{2\Big|\alpha - \frac{m}{n}\Big|}_{\text{shifting and scaling errors}}
\end{align*}}}
\end{lemma}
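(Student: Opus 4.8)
The plan is to reduce the full-CDF deviation in the disclosed region to the disclosed-subdomain deviation $K - K_{n-m+k}$, by writing both $F$ and the reassembled $F_{n+k}$ as affine functions of their respective disclosed CDFs and then tracking the mismatch between the theoretical and empirical affine coefficients. For $x \geq \theta$, the theoretical relation is $F(x) = \alpha + (1-\alpha)K(x)$ (since $\alpha = F(\theta)$ and $K$ is conditioned on $X \geq \theta$), while the empirical reassembly starts at height $\frac{m}{n}$ and is filled in by $K_{n-m+k}$, giving $F_{n+k}(x) = \frac{m}{n} + (1-\frac{m}{n})K_{n-m+k}(x)$; these match the endpoint conventions $F(\theta) = \alpha$, $F_{n+k}(\theta) = \frac{m}{n}$ already used in Lemma~\ref{lemma:left}. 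Subtracting yields $F(x) - F_{n+k}(x) = (\alpha - \frac{m}{n}) + (1-\alpha)K(x) - (1-\frac{m}{n})K_{n-m+k}(x)$.

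The key step is an add-and-subtract decomposition isolating $K(x) - K_{n-m+k}(x)$. Grouping the two $K$-terms around the common coefficient $1-\alpha$ rewrites the right-hand side as $(1-\alpha)(K(x) - K_{n-m+k}(x)) + (\alpha - \frac{m}{n}) + (\frac{m}{n} - \alpha)K_{n-m+k}(x)$. Bounding the two residual terms via $K_{n-m+k}(x) \in [0,1]$ gives $|F(x) - F_{n+k}(x)| \leq (1-\alpha)|K(x) - K_{n-m+k}(x)| + 2|\alpha - \frac{m}{n}|$. The factor of two is the structural difference from the censored region: here both a \emph{shift} (the differing starting heights $\alpha$ vs.\ $\frac{m}{n}$) and a \emph{scale} (the differing multipliers $1-\alpha$ vs.\ $1-\frac{m}{n}$) each contribute one copy of $|\alpha - \frac{m}{n}|$, whereas in Lemma~\ref{lemma:left} only the scale is present.

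Grouping instead around the coefficient $1-\frac{m}{n}$ produces the symmetric bound with $(1-\frac{m}{n})$ replacing $(1-\alpha)$; since both are valid upper bounds on the same quantity, I would take their minimum to obtain the coefficient $\min(1-\alpha, 1-\frac{m}{n})$. Finally, taking the supremum over $x \in (\theta, \infty)$ and pulling the nonnegative, $x$-independent coefficient back inside the absolute value yields the stated inequality. I expect the main obstacle to be bookkeeping rather than conceptual: keeping the two residual terms correctly signed and uniformly bounded, and recognizing that the sharpest scaling factor comes from intersecting the two equally-valid decompositions rather than from any single manipulation.
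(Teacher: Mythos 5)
Your proposal is correct and follows essentially the same route as the paper's proof: both write $F(x)=\alpha+(1-\alpha)K(x)$ and $F_{n+k}(x)=\frac{m}{n}+\frac{n-m}{n}K_{n-m+k}(x)$, add and subtract $(1-\alpha)K_{n-m+k}$ (or symmetrically $\frac{n-m}{n}K$), bound the residuals using $K_{n-m+k}\in[0,1]$ to get the two copies of $|\alpha-\frac{m}{n}|$, and take the smaller of the two resulting scaling factors to obtain $\min(1-\alpha,1-\frac{m}{n})$. No gaps.
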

 
Similar to Lemma~\ref{lemma:left}, we observe the need for a scaling factor. However, in contrast to Lemma~\ref{lemma:left}, this lemma introduces an additional \emph{shifting error}, resulting in a factor of two in the last term $|\alpha - \frac{m}{n}|$. In particular, we need to consider the different starting points of the disclosed region's CDF and full CDF at $\theta$, which are $K_m(\theta) = K(\theta) = 0$, $F(\theta) = \alpha$, and $F_{n+k}(\theta) =\frac{m}{n}$, respectively, when mapping between the CDFs; one of the $|\alpha - \frac{m}{n}|$ captures the error of shifting the starting point of the partial CDF to match that of the full CDF. 

Building on these lemmas, the following theorem generalizes the well-known DKW inequality to problems with censored feedback under the \emph{ex-post analysis} case. 

\begin{thm}\label{thm:two_subdomains}
Let $x_1, x_2, \ldots, x_n$ be realized initial data samples, drawn IID from a distribution with CDF $F(x)$. Let a (realized) $\theta$ partition the data domain into two regions, such that $\alpha=F(\theta)$, and $m$ of the initial $n$ samples are located below $\theta$. Assume we have collected $k$ additional samples above the threshold $\theta$, and let $F_{n+k}(x)$ denote the empirical CDF estimated from these $n+k$ (non-IID) data. 
Then, for every $\eta, k >0$,
{
\begin{align*}
     &\mathbb{P}\bigg[\sup_{x\in \mathbb{R}} \Big|F(x) - F_{n+k}(x)\Big|  \geq \eta \bigg]  \\
     &\qquad \hspace{1in}\leq \underbrace{2\exp\Big({\tfrac{-2m(\eta-|\alpha - \frac{m}{n}|)^2}{\min\big(\alpha, \frac{m}{n}\big)^2}}\Big)}_{\text{censored region error (constant)}} + \underbrace{2\exp\Big({\tfrac{-2(n-m+k)(\eta-2|\alpha - \frac{m}{n}|)^2}{\min\big(1-\alpha,\frac{n-m}{n}\big)^2}}\Big)}_{\substack{\text{disclosed region error} \\ \text{(decreasing with additional data)}}}
\end{align*}}
\end{thm}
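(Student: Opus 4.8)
The plan is to combine the two subdomain lemmas with the DKW inequality applied separately to each IID block, and then convert the resulting deviation statements into a probability bound via a union bound. The overall supremum over $x \in \mathbb{R}$ naturally splits into the supremum over the censored region $(-\infty,\theta)$ and the disclosed region $(\theta,\infty)$ (the point $x=\theta$ contributes nothing new since the CDFs are handled by the one-sided limits). Thus the event $\{\sup_{x\in\mathbb{R}}|F(x)-F_{n+k}(x)|\geq \eta\}$ is contained in the union of $\{\sup_{x\in(-\infty,\theta)}|F-F_{n+k}|\geq\eta\}$ and $\{\sup_{x\in(\theta,\infty)}|F-F_{n+k}|\geq\eta\}$, so by the union bound it suffices to bound each of these two probabilities and add them.

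For the censored region, I would invoke Lemma~\ref{lemma:left}, which gives the pointwise bound $\sup_{x\in(-\infty,\theta)}|F-F_{n+k}| \leq \min(\alpha,\tfrac{m}{n})\sup_{x}|G-G_m| + |\alpha-\tfrac{m}{n}|$. Therefore the event that the left-hand side exceeds $\eta$ forces $\min(\alpha,\tfrac{m}{n})\sup_x|G(x)-G_m(x)| \geq \eta - |\alpha-\tfrac{m}{n}|$, i.e. $\sup_x|G(x)-G_m(x)| \geq (\eta - |\alpha-\tfrac{m}{n}|)/\min(\alpha,\tfrac{m}{n})$. Since $G_m$ is the empirical CDF of $m$ IID samples drawn from $G$, the DKW inequality (Theorem~\ref{thm:GC_theorem}) applied with $n\mapsto m$ and $\eta\mapsto (\eta-|\alpha-\tfrac{m}{n}|)/\min(\alpha,\tfrac{m}{n})$ yields exactly the first term $2\exp\!\big(\tfrac{-2m(\eta-|\alpha-\frac{m}{n}|)^2}{\min(\alpha,\frac{m}{n})^2}\big)$. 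The disclosed region is entirely analogous: Lemma~\ref{lemma:right} contributes the shifting-and-scaling error $2|\alpha-\tfrac{m}{n}|$ and the scaling factor $\min(1-\alpha,\tfrac{n-m}{n})$, and applying DKW to the $n-m+k$ IID disclosed samples (with the appropriately shifted tolerance) produces the second term.

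The main obstacle is a subtle one about the independence structure needed to legitimately treat the two blocks as arising from separate IID DKW applications, and to justify the union bound. The censored block consists of the $m$ initial samples below $\theta$, while the disclosed block consists of the $n-m+k$ samples above $\theta$; because $m$ itself is a fixed realized number (by Remark~\ref{remark:prior-posterior}, $\theta$ and $m$ are non-random given the initial data), the conditioning is clean and the two collections are genuinely independent IID draws from $G$ and $K$ respectively, so DKW applies to each without correction. I would want to be careful to state that we are working conditionally on the realized $m$ (and $\theta$), so that $G_m$ is honestly the empirical CDF of $m$ IID samples from the conditional law $G$, and similarly for the disclosed side; the union bound over the two events is then unconditional and additive. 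The remaining steps are purely mechanical substitutions into DKW, so once the event decomposition and the conditioning are set up correctly the bound follows directly.

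One further point worth verifying is that the tolerances passed into DKW remain positive for the bound to be meaningful: when $\eta \leq |\alpha-\tfrac{m}{n}|$ (resp. $\eta \leq 2|\alpha-\tfrac{m}{n}|$) the corresponding exponential exceeds $1$ and the term is vacuous, which is consistent since DKW itself is only informative for positive tolerance. I expect the write-up to note this regime but not dwell on it, since the stated inequality holds trivially in that case (the right-hand side is at least $2$, hence at least the probability). Thus the proof reduces to: decompose the supremum, apply each lemma to pass from the full-CDF deviation to a scaled subdomain-CDF deviation, invoke DKW on each IID block with the shifted tolerance, and combine via the union bound.
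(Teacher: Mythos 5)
Your proposal is correct and follows essentially the same route as the paper's proof: split the supremum at $\theta$, apply the union bound, invoke Lemmas~\ref{lemma:left} and~\ref{lemma:right} to reduce each event to a scaled subdomain-CDF deviation, and finish with DKW on each IID block with the shifted and rescaled tolerance. Your additional remarks on conditioning on the realized $m$ and $\theta$ and on the vacuous regime where the tolerance is non-positive are sensible clarifications that the paper leaves implicit.
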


The proof proceeds by applying the DKW inequality to each subdomain, and combining the results using a union bound on the results of Lemmas~\ref{lemma:left} and~\ref{lemma:right}. 

\vspace{0.1in}
\textbf{Numerical illustration of bound behavior for each term in Theorem~\ref{thm:two_subdomains}.} From the above expression, we observe that as $n$ (the number of initial samples across the entire data domain) becomes large, the maximum discrepancy between the theoretical and empirical CDFs decreases, following the strong law of large numbers. In such cases, the effect of censored feedback on the bounds becomes minimal, as the data distribution can already be well estimated. More interestingly, as shown in Fig.~\ref{fig:censored_disclosed_bounds}, when the initial training dataset is small, censored feedback can have a substantial impact on the bounds. To illustrate this, we conduct a numerical experiment using initial random samples $n \in \{50, 200, 500, 1000\}$ drawn from a Gaussian distribution with $\mu = 7$ and $\sigma = 3$. For significance levels $\delta \in \{0.01, 0.05, 0.1\}$, Fig.~\ref{fig:censored_disclosed_bounds} shows that, for any fixed $n$, the CDF bounds for the censored region remain constant since this term does not depend on newly collected samples; thus, improving the bounds requires increasing $n$. Meanwhile, as the number of samples collected under censored feedback increases ($k \rightarrow \infty$), the error term for the disclosed region decreases asymptotically, behaving as $2exp(-2k\eta^2)$. This results in an overall decreasing trend, which is also reflected in the numerical illustration in Fig.~\ref{fig:bound}, where the orange line falls below the blue line. However, unlike the DKW bound, this error bound does not go to zero due to a constant error term from the censored region of the data domain. This means that unless exploration strategies are adopted, arbitrarily good generalization in censored feedback tasks cannot be guaranteed. Finally, we note that the DKW inequality can be recovered as the special case of our Theorem~\ref{thm:two_subdomains} by letting $\theta \rightarrow -\infty$ (which makes $\alpha \approx 0, m \approx 0$). 

\begin{figure}[tbhp!]
	\centering
\includegraphics[width=0.4\textwidth]{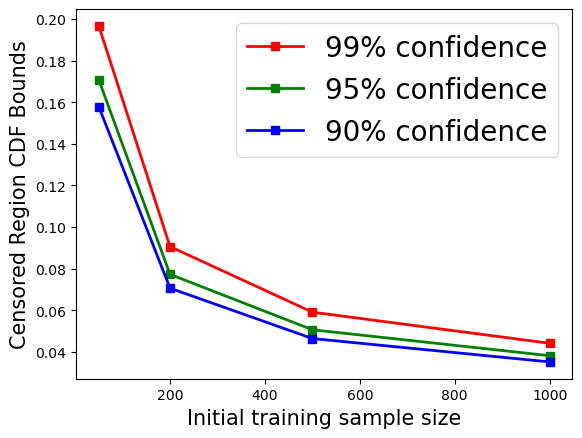}
		\includegraphics[width=0.4\textwidth]{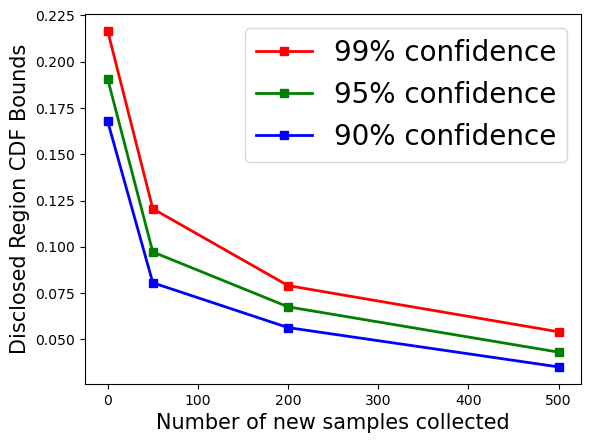}
	\caption{Behavior of the censored and disclosed region error terms.}
	\label{fig:censored_disclosed_bounds}
\end{figure}

\vspace{0.1in}
\textbf{\emph{A priori} bound.} Finally, recall from Remark~\ref{remark:prior-posterior} that instead of waiting to observe that there are exactly $k$ new samples in the disclosed region (i.e., after the random arrival of new data points), a decision maker may want to know the error bound \emph{a priori}, and assess in advance where they will stand after waiting for $T$ data points to arrive (only some of which will fall in the disclosed region). The following corollary provides an error bound under this viewpoint. 

\begin{corollary}\label{cor:a-priori-bound}
Let $x_1, x_2, \ldots, x_n$ be realized initial data samples, drawn IID from a distribution with CDF $F(x)$. Let a (realized) $\theta$ partition the data domain into two regions, such that $\alpha=F(\theta)$, and $m$ of the initial $n$ samples are located below $\theta$. Assume we have waited for $T$ additional samples to arrive, and let $F_{n+T}(x)$ denote the empirical CDF estimated accordingly. 
Then, for every $\eta >0$,
\begin{align*}
     \mathbb{P}\bigg[\sup_{x\in \mathbb{R}} \Big|F(x) - F_{n+T}(x)\Big|  \geq \eta \bigg]  &\leq \underbrace{2\exp\Big({\tfrac{-2m(\eta-|\alpha - \frac{m}{n}|)^2}{\min\big(\alpha, \frac{m}{n}\big)^2}}\Big)}_{\text{censored region error (constant)}} 
     \\
    &\qquad \hspace{-0.2in} +\underbrace{\sum_{k=0}^T 2{T\choose k}(1-\alpha)^k\alpha^{T-k}\exp\Big({\tfrac{-2(n-m+k)(\eta-2|\alpha - \frac{m}{n}|)^2}{\min\big(1-\alpha,\frac{n-m}{n}\big)^2}}\Big)}_{\text{disclosed region error (decreasing with wait time $T$)}}~
\end{align*}
\end{corollary}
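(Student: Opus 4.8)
The plan is to obtain this a priori bound from Theorem~\ref{thm:two_subdomains} by conditioning on how many of the $T$ new arrivals actually land in the disclosed region, and then averaging over that count. Let $\kappa$ denote the (random) number of the $T$ freshly arriving samples whose feature falls at or above $\theta$. Since each new arrival has its feature drawn IID from $F$, it lands in the disclosed region independently with probability $\mathbb{P}(X \geq \theta) = 1 - F(\theta) = 1-\alpha$. Hence $\kappa$ is distributed as $\mathrm{Binomial}(T, 1-\alpha)$, so $\mathbb{P}(\kappa = k) = \binom{T}{k}(1-\alpha)^k \alpha^{T-k}$; this already accounts for the combinatorial weights appearing in the claim. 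Writing $F_{n+T}$ for the (random) empirical CDF after the $T$ arrivals, on the event $\{\kappa = k\}$ it coincides with the $F_{n+k}$ of Theorem~\ref{thm:two_subdomains}.

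Next I would condition on $\{\kappa = k\}$ and apply Theorem~\ref{thm:two_subdomains} verbatim for this realized $k$. The point that makes this legitimate is that, given $\kappa = k$, the $k$ accepted new samples are IID draws from the disclosed (truncated-above-$\theta$) distribution with CDF $K$: conditioning IID samples on passing a fixed threshold yields IID samples from the truncated law, and conditioning further on their total count leaves this structure unchanged. Together with the $n-m$ initial samples already above $\theta$ (IID from $K$), the disclosed region then holds $n-m+k$ IID samples from $K$, while the censored region still contains exactly the $m$ fixed initial samples below $\theta$. Thus the hypotheses of Theorem~\ref{thm:two_subdomains} hold for this realization, yielding the conditional bound
\[
\mathbb{P}\Big[\sup_{x}|F(x)-F_{n+k}(x)| \geq \eta \,\Big|\, \kappa = k\Big] \leq 2\exp\Big(\tfrac{-2m(\eta-|\alpha-\frac{m}{n}|)^2}{\min(\alpha,\frac{m}{n})^2}\Big) + 2\exp\Big(\tfrac{-2(n-m+k)(\eta-2|\alpha-\frac{m}{n}|)^2}{\min(1-\alpha,\frac{n-m}{n})^2}\Big).
\]

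Finally I would assemble the unconditional bound via the law of total probability, $\mathbb{P}[\sup_x |F - F_{n+T}| \geq \eta] = \sum_{k=0}^T \mathbb{P}(\kappa = k)\,\mathbb{P}[\,\cdot \mid \kappa = k\,]$, substitute the two-term conditional bound, and split the resulting sum. The censored-region term is independent of $k$, so it factors out and the binomial weights collapse through the identity $\sum_{k=0}^T \binom{T}{k}(1-\alpha)^k \alpha^{T-k} = ((1-\alpha)+\alpha)^T = 1$, reproducing the constant first term exactly. The disclosed-region term stays weighted by $\binom{T}{k}(1-\alpha)^k\alpha^{T-k}$ inside the sum, which is precisely the ``expected disclosed region error'' in the statement, i.e. the expectation over $\kappa$ of the Theorem~\ref{thm:two_subdomains} disclosed term. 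I expect the only genuinely delicate step to be the conditional-IID justification in the second paragraph, namely confirming that conditioning on the count $\kappa = k$ preserves the IID-from-$K$ structure that the DKW applications inside Theorem~\ref{thm:two_subdomains} rely on; once that is secured, the rest is just the law of total probability together with the binomial identity.
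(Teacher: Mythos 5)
Your proposal is correct and matches the paper's own (very brief) argument: condition on the binomial count $k$ of the $T$ arrivals landing in the disclosed region, apply Theorem~\ref{thm:two_subdomains} for each realization, and recombine via the law of total probability, with the constant censored-region term collapsing through $\sum_k \binom{T}{k}(1-\alpha)^k\alpha^{T-k}=1$. Your explicit justification that conditioning on the count preserves the IID-from-$K$ structure of the accepted samples is a detail the paper leaves implicit, but it is the right point to check and you handle it correctly.
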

The proof is straightforward, and follows from writing the law of total probability for the left-hand side of the inequality by conditioning on the realization $k$ of the samples in the disclosed region. We first note that the {censored region} error term, as expected, is unaffected by the wait time $T$. The second term is the disclosed region error from Theorem~\ref{thm:two_subdomains}; it is decreasing with $T$ as the exponential error terms decrease with $k$, and higher $k$'s are more likely at higher $T$.

\subsection{CDF bounds under censored feedback and exploration} \label{subsec: bounded_exploration}

A commonly proposed method to alleviate censored feedback, as noted in Section~\ref{sec:intro}, is to introduce exploration in the data domain. From the perspective of the CDF error bound, exploration has the advantage of reducing the constant error term in Theorem~\ref{thm:two_subdomains}, by collecting more data samples from the censored region. Formally, we consider (bounded) exploration in the \emph{range} $x\in(LB, \theta)$, where samples in this range are admitted with an exploration \emph{frequency} $\epsilon$. When $LB\rightarrow -\infty$, this is a pure exploration strategy. 

Now, the lowerbound $LB$ and the decision threshold $\theta$ partition the data domain into three IID subdomains (see Figure~\ref{fig:comparison_with_exploration} for an illustration). However, the introduction of the additional \emph{exploration region} $(LB, \theta)$ will enlarge the CDF bounds, as it introduces new scaling and shifting errors when 
reassembling subdomain bounds into full domain bounds. 

\begin{figure}[ht]
	\centering
    \includegraphics[width=0.45\textwidth]{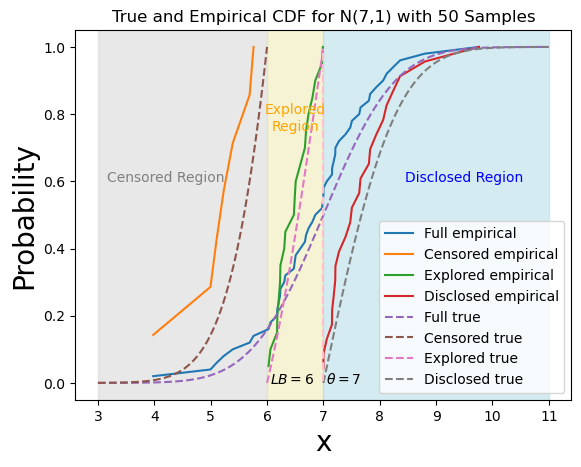}
    \caption{The empirical CDFs $F_{n+k_e+k_d}$ (Full domain), $G_l$ (Censored region), $E_{m-l+k_e}$ (Explored region), and $K_{n-m+k_d}$ (Disclosed region), and the theoretical CDFs of $F, G, E,$ and $K$. Experiments based on randomly drawn samples from Gaussian data $N(7,1)$, $\theta=7$ $LB = 6$, $n=50$, $l=7, m=27$, and $k_e=k_d=0$.}
	\label{fig:comparison_with_exploration}
\end{figure}
Specifically, of the $n$ initial data, let $l$, $m-l$, and $n-m$ of them be in the censored (below $LB$), exploration (between $LB$ and $\theta$), and disclosed (above $\theta$) regions, respectively. Let $\beta=F(LB)$ and $\alpha=F(\theta)$, with initial empirical estimates $\frac{l}{n}$ and $\frac{m}{n}$, respectively. 

As new agents arrive, let $k_e$ and $k_d$ denote the additional samples collected in the exploration range and disclosed range, respectively. One main difference of this setting with that of Section~\ref{subsec: no_exploration} is that as additional samples are collected, the empirical estimate of $\alpha$ can be re-estimated. Accordingly, we present a lemma similar to Lemmas~\ref{lemma:left} and~\ref{lemma:right} for the exploration region. 

\begin{lemma}[Exploration Region]\label{lemma:exploration}
Let {$Z = \{x_i | \text{LB} \leq x_i \leq \theta\}$} denote the $m-l+k_e$ samples out of the $n+k_e+k_d$ samples that are in the exploration range. Let $E$ and $E_{m-l+k_e}$ be the theoretical and empirical CDFs of $Z$, respectively. Then,  
{\begin{align*}
    &\sup_{x \in (\text{LB}, \theta)}|F(x)-F_{n+k_e+k_d}(x)| \leq \underbrace{\Big|\beta - \frac{l}{n} \Big|}_{\text{shifting error}} + \underbrace{\Big|\alpha-\beta - \frac{n-l}{n}\frac{m-l+k_e}{n-l+k_e+\epsilon k_d} \Big|}_{\text{re-estimated scaling error}} \\
    &\qquad \hspace{1in}
    + \underbrace{\sup_{x \in (\text{LB}, \theta)}\Big|\min\big(\alpha-\beta, \tfrac{n-l}{n}\tfrac{m-l+k_e}{n-l+k_e+\epsilon k_d}\big)(E(x) - E_{m-l+k_e}(x))\Big|}_{\text{scaled exploration subdomain error}}
\end{align*}}
\end{lemma}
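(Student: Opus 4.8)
The plan is to follow the same reassembly strategy used in the proofs of Lemmas~\ref{lemma:left} and~\ref{lemma:right}: on the exploration interval $(\text{LB},\theta)$ I would write both the ground-truth CDF $F$ and the reassembled empirical CDF $F_{n+k_1+k_2}$ as affine (shift-and-scale) images of the conditional exploration-region CDFs $E$ and $E_{m-l+k_1}$, subtract the two representations, and control the three resulting contributions --- a shift mismatch, a scale mismatch, and a scaled subdomain deviation --- by the triangle inequality.

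First I would record the exact affine representations. For any $x\in(\text{LB},\theta)$ the conditioning identity $E(x)=\frac{F(x)-F(\text{LB})}{F(\theta)-F(\text{LB})}=\frac{F(x)-\beta}{\alpha-\beta}$ gives
\[
F(x)=\beta+(\alpha-\beta)\,E(x).
\]
On the empirical side, because censored feedback collects no new samples below $\text{LB}$, the mass of $(-\infty,\text{LB}]$ is estimated by $\tfrac{l}{n}$, while the mass of the exploration region is re-estimated, accounting for the exploration frequency $\epsilon$ which subsamples arrivals in $(\text{LB},\theta)$ at rate $\epsilon$ relative to the fully observed disclosed region, by $\widehat{(\alpha-\beta)}:=\tfrac{n-l}{n}\cdot\tfrac{m-l+k_1}{n-l+k_1+\epsilon k_2}$. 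The reassembled empirical CDF on the interval is then $F_{n+k_1+k_2}(x)=\tfrac{l}{n}+\widehat{(\alpha-\beta)}\,E_{m-l+k_1}(x)$, which is consistent with its endpoints $\tfrac{l}{n}$ at $\text{LB}$ and $\tfrac{l}{n}+\widehat{(\alpha-\beta)}$ at $\theta$.

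Next I would subtract the two representations, isolating the constant shift term $\beta-\tfrac{l}{n}$ and the residual $(\alpha-\beta)E(x)-\widehat{(\alpha-\beta)}E_{m-l+k_1}(x)$. I would split this residual by adding and subtracting a common term in one of two symmetric ways: either as $[(\alpha-\beta)-\widehat{(\alpha-\beta)}]E(x)+\widehat{(\alpha-\beta)}[E(x)-E_{m-l+k_1}(x)]$, or as $(\alpha-\beta)[E(x)-E_{m-l+k_1}(x)]+[(\alpha-\beta)-\widehat{(\alpha-\beta)}]E_{m-l+k_1}(x)$. Using $0\le E(x),E_{m-l+k_1}(x)\le 1$, the triangle inequality bounds each split by the scale-mismatch term $|(\alpha-\beta)-\widehat{(\alpha-\beta)}|$ plus the subdomain deviation scaled by $\widehat{(\alpha-\beta)}$ (first split) or by $\alpha-\beta$ (second split). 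Since both scalings are constants independent of $x$, I am free to take whichever is smaller, producing the $\min\big(\alpha-\beta,\widehat{(\alpha-\beta)}\big)$ coefficient. Taking $\sup_{x\in(\text{LB},\theta)}$ and pulling the two constant terms outside the supremum then yields exactly the stated bound.

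The algebra of the affine representations and the two-way splitting are routine. The main obstacle, and the genuinely new ingredient relative to Lemmas~\ref{lemma:left} and~\ref{lemma:right}, is justifying the re-estimated scaling factor $\widehat{(\alpha-\beta)}$: one must argue that the correct empirical proxy for $\alpha-\beta$ combines $\tfrac{n-l}{n}$ for the mass above $\text{LB}$ with the conditional fraction $\tfrac{m-l+k_1}{n-l+k_1+\epsilon k_2}$, where the $\epsilon k_2$ in the denominator reweights the fully observed disclosed arrivals so they are comparable to the $\epsilon$-subsampled exploration arrivals. This reweighting is what allows $\widehat{(\alpha-\beta)}$ to track $\alpha-\beta$ as $k_1,k_2$ grow, and getting it right is what makes the scale-mismatch term --- and hence the overall exploration-region error --- shrink with additional data.
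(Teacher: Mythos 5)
Your proof is correct and follows essentially the same shift-and-scale reassembly used in the paper's proofs of Lemmas~\ref{lemma:left} and~\ref{lemma:right} (the paper gives no standalone proof of Lemma~\ref{lemma:exploration}; its content is folded into the proof of Theorem~\ref{thm:three_subdomains}, which uses exactly this decomposition). Your affine representations, the two symmetric splittings yielding the $\min$, and the justification of the re-estimated scaling factor $\tfrac{n-l}{n}\tfrac{m-l+k_1}{n-l+k_1+\epsilon k_2}$ via the $\epsilon k_2$ reweighting all match the paper's conventions.
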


Observe that here, we need both scaling and shifting factors to relate the partial and full CDF bounds, as in Lemma~\ref{lemma:right}, but with an evolving scaling error as more data is collected. In particular, the initial empirical estimate $\frac{m}{n}$ is updated to $ \frac{l}{n} +  \frac{n-l}{n}\frac{m-l+k_e}{n-l+k_e+\epsilon k_d}$ after the observation of the additional $k_e$ and $k_d$ samples. 

We now extend the DKW inequality to the setting where data is collected under censored feedback with exploration. Similar to the no-exploration case, the following theorem can be considered under the \emph{ex-post analysis} case and the \emph{ex-ante analysis} case. Here, we present the results for the \emph{ex-post analysis} case and defer the other case to Appendix~\ref{app:partial_full_exploration_bounds}.

\begin{thm}\label{thm:three_subdomains}
Let $x_1, x_2, \ldots, x_n$ be realized initial data samples, drawn IID from a distribution with CDF $F(x)$. Let (realized) $LB$ and $\theta$ partition the domain into three regions, such that $\beta=F(LB)$ and $\alpha=F(\theta)$, with $l$ and $m$ of the initial $n$ samples located below $LB$ and $\theta$, respectively. Assume we have collected an additional $k_e$ samples between $LB$ and $\theta$, under an exploration probability $\epsilon$, and an additional number of $k_d$ samples above $\theta$. Let $F_{n+k_e+k_d}(x)$ denote the empirical CDF estimated from these $n+k_e+k_d$ non-IID samples. Then, for every {$\eta, k_e, k_d >0$},
{\small
\begin{align*}
    \mathbb{P}\bigg[\sup_{x\in \mathbb{R}} \Big|F(x) - F_{n+k_e+k_d}(x)\Big|\geq \eta \bigg] &\leq 
    \underbrace{2\exp\Big({\tfrac{-2l(\eta-|\beta - \frac{l}{n} |)^2}{\min\big(\beta, \frac{l}{n}\big)^2}}\Big)}_{\text{(still) censored region error (constant)}} \\
    &\qquad \hspace{-0.3in} + \underbrace{2\exp\Big({\tfrac{-2(m-l+k_e)\big(\eta-|\beta - \frac{l}{n}| - \big|\alpha -\beta - \frac{n-l}{n}\frac{m-l+k_e}{n-l+k_e+\epsilon k_d}\big|\big)^2}{\min\big(\alpha-\beta,\frac{n-l}{n}\frac{m-l+k_e}{n-l+k_e+\epsilon k_d}\big)^2}}\Big)}_{\text{exploration region error (decrease with $k_e$)}} \\
    &\qquad \hspace{-0.3in} + \underbrace{2\exp\Big({\tfrac{-2(n-m+k_d)\big(\eta-2\big|\alpha - \frac{l}{n} -  \frac{n-l}{n}\frac{m-l+k_e}{n-l+k_e+\epsilon k_d}\big|\big)^2}{\min\big(1-\alpha,\frac{n-l}{n}\frac{n-m+\epsilon k_d}{n-l+k_e+\epsilon k_d}\big)^2}}\Big)}_{\text{disclosed region error (decrease with $k_d$)}}.
\end{align*}
}
\end{thm}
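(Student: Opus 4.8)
The plan is to mirror the proof of Theorem~\ref{thm:two_subdomains}, but now partitioning the real line into the three IID blocks induced by $LB$ and $\theta$: the censored region $(-\infty, LB)$ with $l$ samples, the exploration region $(LB,\theta)$ with $m-l+k_1$ samples, and the disclosed region $(\theta,\infty)$ with $n-m+k_2$ samples. Since $\sup_{x\in\mathbb{R}}|F(x)-F_{n+k_1+k_2}(x)|$ equals the maximum of the three region-wise suprema, the event that the global supremum exceeds $\eta$ is contained in the union of the three region-wise events, so a union bound gives $\mathbb{P}[\sup_{\mathbb{R}}|F-F_{n+k_1+k_2}|\ge\eta]\le\sum_{\text{regions}}\mathbb{P}[\sup_{\text{region}}|F-F_{n+k_1+k_2}|\ge\eta]$, and it suffices to bound each term by the corresponding summand in the theorem.

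For the censored region I would reuse Lemma~\ref{lemma:left} verbatim after relabeling $\theta\mapsto LB$, $\alpha\mapsto\beta$, $m\mapsto l$, which gives $\sup_{x<LB}|F-F_{n+k_1+k_2}|\le\sup_{x<LB}|\min(\beta,\tfrac{l}{n})(G(x)-G_l(x))|+|\beta-\tfrac{l}{n}|$. Subtracting the deterministic shifting error, dividing by the scaling factor, and invoking the DKW inequality (Theorem~\ref{thm:GC_theorem}) on the $l$ IID censored samples yields exactly the first summand. For the exploration region I would apply Lemma~\ref{lemma:exploration} directly: moving its two deterministic error terms (the shifting error $|\beta-\tfrac{l}{n}|$ and the re-estimated scaling error $|\alpha-\beta-\tfrac{n-l}{n}\tfrac{m-l+k_1}{n-l+k_1+\epsilon k_2}|$) to the other side, dividing by the scaling factor, and then applying DKW to the $m-l+k_1$ IID exploration samples, reproduces the second summand.

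The one ingredient not stated in the excerpt is the disclosed-region analog of Lemma~\ref{lemma:right}, which I would derive next. The key difference from Section~\ref{subsec: no_exploration} is that the empirical estimate of $\alpha=F(\theta)$ is re-estimated from $\tfrac{m}{n}$ to $\tfrac{l}{n}+\tfrac{n-l}{n}\tfrac{m-l+k_1}{n-l+k_1+\epsilon k_2}$ once the additional samples arrive; this re-estimated value is what the starting point of the disclosed CDF $K$ must be shifted to. Repeating the shift-and-scale argument of Lemma~\ref{lemma:right} with this re-estimated endpoint produces a factor-of-two deterministic error $2|\alpha-\tfrac{l}{n}-\tfrac{n-l}{n}\tfrac{m-l+k_1}{n-l+k_1+\epsilon k_2}|$ and a scaling factor $\min(1-\alpha,\,1-\tfrac{l}{n}-\tfrac{n-l}{n}\tfrac{m-l+k_1}{n-l+k_1+\epsilon k_2})$; a short computation simplifies the empirical complement to $\tfrac{n-l}{n}\tfrac{n-m+\epsilon k_2}{n-l+k_1+\epsilon k_2}$, matching the denominator of the third summand. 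Applying DKW to the $n-m+k_2$ IID disclosed samples then gives the final term.

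I expect the main obstacle to be twofold. First, one must justify that the samples within each region really are IID draws from the conditional distributions $G$, $E$, $K$ even after conditioning on the realized counts $l$, $m-l+k_1$, $n-m+k_2$; for the exploration region this requires noting that admitting each in-range agent independently with probability $\epsilon$ does not alter the conditional feature distribution on $(LB,\theta)$, so DKW remains applicable, which is the same conditioning argument used for Theorem~\ref{thm:two_subdomains}. Second, the bookkeeping of the re-estimation algebra is delicate: one must carefully track how $\tfrac{m}{n}$ is updated and verify the simplification of the disclosed-region scaling factor above, since this is precisely where the three-region bound departs from the two-region one.
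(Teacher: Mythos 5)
Your proposal matches the paper's proof essentially step for step: a union bound over the three regions, Lemma~\ref{lemma:left} relabeled ($\theta\mapsto LB$, $\alpha\mapsto\beta$, $m\mapsto l$) for the censored region, Lemma~\ref{lemma:exploration} for the exploration region, a re-estimated analog of Lemma~\ref{lemma:right} for the disclosed region, and then DKW applied to each IID block after moving the deterministic shift/scale errors across and dividing by the scaling factor. Your simplification of the disclosed-region scaling factor to $\tfrac{n-l}{n}\tfrac{n-m+\epsilon k_2}{n-l+k_1+\epsilon k_2}$ is exactly the quantity appearing in the paper's third term, so the proposal is correct and takes the same route.
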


Comparing this expression with Theorem~\ref{thm:two_subdomains}, we first note that the last term corresponding to the error bound in the disclosed region are similar when setting $k=k_d$, with the difference being in the impact of re-estimating $\alpha$. 

Theorem~\ref{thm:three_subdomains} provides an extension of the DKW inequality to account for censored feedback and exploration, introducing three distinct regions: the (still) censored region $(-\infty, LB)$, the exploration region $(LB, \theta)$, and the disclosed region $(\theta, \infty)$. The (still) censored region contributes a constant error term dependent on $l$, the number of initial samples in this region, due to the absence of exploration. The exploration region introduces $k_e$, the number of samples collected under an exploration probability $\epsilon$, which reduces the error in this region as $k_e \rightarrow \infty$, ultimately approaching zero. In contrast, the disclosed region contributes an error based on $n-m$, the number of initial samples above $\theta$, and $k_d$, the number of additional samples collected in this region. As $k_d$ increases, the error in the disclosed region also diminishes. 

A key insight from Theorem~\ref{thm:three_subdomains} is that although there can still be a non-vanishing error term in the (still) censored region, additional samples collected in the exploration and disclosed regions can reduce their respective error terms. Similar to Theorem~\ref{thm:two_subdomains}, due to the newly collected samples $k_e$ and $k_d$, the error term for the exploration and disclosed region decreases asymptotically, behaving as $2exp(-2k_e\eta^2)$ and $2exp(-2k_d\eta^2)$, respectively, similar to the findings in Fig.~\ref{fig:censored_disclosed_bounds}. Further, if we adopt pure exploration ($LB\rightarrow -\infty$, which makes $\beta \approx 0, l \approx 0$), the first term will vanish as well (however, note that pure exploration may not be a feasible option if exploration is highly costly). Lastly, we note that an \emph{a priori} version of this bound can be derived using similar techniques to that of Corollary~\ref{cor:a-priori-bound}.

\subsection{When will exploration improve generalization guarantees?} \label{subsec: illustration}

It might seem at first sight that the new vanishing error term in the exploration range of Theorem~\ref{thm:three_subdomains} necessarily translates into a tighter error bound than that of Theorem~\ref{thm:two_subdomains} when exploration is introduced. Nonetheless, the shifting and scaling factors, as well as the introduction of an additional union bound, enlarge the CDF error bound. In this section, we elaborate on the trade-off between these factors, and evaluate when the benefits of exploration outweigh its drawbacks in providing error bounds on the data CDF estimates. 

We begin by presenting two propositions that assess the change in the bounds of Theorems~\ref{thm:two_subdomains} and~\ref{thm:three_subdomains} as a function of the severity of censored feedback (as measured by $\theta$) and the exploration frequency $\epsilon$. 

\begin{proposition}~\label{prop:B(LB) < B(theta)}
Let $B(\theta)$ denote the error bound in Theorem~\ref{thm:two_subdomains}, and assume the conditions of that theorem hold. Assume also that we can collect an additional $k=O(n)$ samples above the threshold. Then, $B(\theta)$ is increasing in $\theta$. 
\end{proposition}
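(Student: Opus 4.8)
The plan is to treat $B(\theta)$ as the sum of its two summands, $B(\theta) = B_{\mathrm{c}}(\theta) + B_{\mathrm{d}}(\theta)$, where $B_{\mathrm c}$ is the censored-region term and $B_{\mathrm d}$ is the disclosed-region term, and to show $\frac{d}{d\theta}B(\theta) \ge 0$. The only way $\theta$ enters these expressions is through $\alpha = F(\theta)$ and through the realized count $m = |\{i : x_i < \theta\}|$; since $F$ is a CDF we have $\frac{d\alpha}{d\theta} = f(\theta) \ge 0$, and $m$ is non-decreasing in $\theta$. First I would record these monotonicities and, to keep the argument tractable, track $m/n$ as the empirical estimate of $\alpha$ so that the two move together as $\theta$ grows, carrying the scaling/shifting residual $|\alpha - \tfrac{m}{n}|$ along explicitly.

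Next I would handle the censored term. Writing $B_{\mathrm c}(\theta) = 2\exp(-2\phi_{\mathrm c}(\theta))$ with exponent $\phi_{\mathrm c} = m(\eta - |\alpha - \tfrac{m}{n}|)^2 / \min(\alpha, \tfrac{m}{n})^2$, the claim is that $\phi_{\mathrm c}$ is decreasing in $\theta$, hence $B_{\mathrm c}$ is increasing. The intuition is that as $\theta$ rises the censored mass $\alpha$ and the scaling $\min(\alpha, \tfrac{m}{n})$ grow, so the denominator grows faster than the numerator; to leading order $m \approx n\alpha$ this is the clean statement $\phi_{\mathrm c} \approx n\eta^2/\alpha$, which is manifestly decreasing. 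This step is routine once the scaling error is shown to be subdominant.

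The substantive difficulty is the disclosed term $B_{\mathrm d}(\theta) = 2\exp(-2\phi_{\mathrm d}(\theta))$, with $\phi_{\mathrm d} = (n - m + k)(\eta - 2|\alpha - \tfrac m n|)^2 / \min(1-\alpha, 1 - \tfrac m n)^2$. Here $\theta$ pulls in opposite directions: raising $\theta$ shrinks the effective sample count $n - m + k$ (pushing $B_{\mathrm d}$ up) but also shrinks the scaling $\min(1-\alpha, 1 - \tfrac m n)$ (pushing $B_{\mathrm d}$ down), while the shifting error $2|\alpha - \tfrac m n|$ in the numerator moves as well. Consequently $B_{\mathrm d}$ is not monotone for free, and the sign of $\frac{d}{d\theta}(B_{\mathrm c} + B_{\mathrm d})$ is the crux. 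After differentiating, $\frac{d}{d\theta}B \ge 0$ reduces to a comparison of the form $|\phi_{\mathrm c}'|\,e^{-2\phi_{\mathrm c}} \ge \phi_{\mathrm d}'\,e^{-2\phi_{\mathrm d}}$, so the plan is to bound $\phi_{\mathrm d}'$ and play it against the strictly positive $\frac{d}{d\theta}B_{\mathrm c}$.

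This is exactly where the hypothesis $k = O(n)$ is load-bearing, and I expect it to be the main obstacle to make rigorous. With $k = O(n)$ the disclosed exponent $\phi_{\mathrm d}$ is of order $n$ and no smaller than $\phi_{\mathrm c}$ in the regime that controls the bound, so $e^{-2\phi_{\mathrm d}} \le e^{-2\phi_{\mathrm c}}$ there, while the same assumption keeps $\phi_{\mathrm d}'$ (which carries a $k/(1-\alpha)^3$ contribution) at a comparable order rather than blowing up. I would therefore finish by (i) isolating the regime where the censored term dominates the bound and verifying the derivative comparison directly, and (ii) checking the complementary small-$\alpha$ regime separately, where Theorem~\ref{thm:two_subdomains} degenerates toward the plain DKW bound, using the growth of the shifting error $2|\alpha - \tfrac m n|$ inside $\phi_{\mathrm d}$ to rule out a net decrease. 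Reconciling these two regimes into a single monotone statement, while controlling the empirically estimated scaling and shifting errors, is the step I expect to require the most care.
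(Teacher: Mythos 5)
Your plan follows essentially the same route as the paper's proof: argue that the change in the scaling/shifting error $|\alpha-\tfrac{m}{n}|$ is subdominant to the change in $m$, conclude the censored term is increasing in $\theta$ via the leading-order behavior $m\approx n\alpha$, and use $k=O(n)$ (the paper writes $k=c(n-m)$ with $c$ above an explicit threshold, which is exactly the condition that the disclosed exponent dominates the censored one) to make the disclosed term subdominant. The paper's own argument is no more rigorous than your sketch on the points you flag as delicate, so this is a faithful reconstruction.
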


\begin{proposition}~\label{prop:B(LB, theta, 1) < B(LB, theta, epsilon)}
Let $B^e(LB, \theta, \epsilon)$ denote the error bound in Theorem~\ref{thm:three_subdomains}, and assume the conditions of that theorem hold. Then, $B^e(LB, \theta, \epsilon)$ is decreasing in $\epsilon$.
\end{proposition}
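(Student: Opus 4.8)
The plan is to treat $B^e(LB,\theta,\epsilon)$ as a deterministic function of $\epsilon$ with all other quantities ($n,l,m,k_1,k_2,\alpha,\beta,\eta$) held fixed, and to show it is non-increasing. First I would observe that the first (still-censored) exponential term in Theorem~\ref{thm:three_subdomains} carries no $\epsilon$ at all, so it suffices to show the exploration-region and disclosed-region terms are each non-increasing in $\epsilon$. The entire $\epsilon$-dependence of both terms is channelled through the re-estimated scaling factor
\[ S(\epsilon) \;:=\; \frac{n-l}{n}\cdot\frac{m-l+k_1}{\,n-l+k_1+\epsilon k_2\,}, \]
together with its disclosed-region complement $\tfrac{n-l}{n}-S(\epsilon)=\tfrac{n-l}{n}\cdot\tfrac{n-m+\epsilon k_2}{n-l+k_1+\epsilon k_2}$ and the re-estimated value $\hat\alpha(\epsilon):=\tfrac{l}{n}+S(\epsilon)$ from Lemma~\ref{lemma:exploration}. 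Differentiating the rational function $S$ (or simply noting its denominator grows in $\epsilon$ while its numerator is fixed), $S(\epsilon)$ is strictly decreasing in $\epsilon$ when $k_2>0$ (and constant if $k_2=0$), so the complementary scale is correspondingly increasing and $\hat\alpha(\epsilon)$ is decreasing.

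Next, each $\epsilon$-dependent term has the form $2\exp(-c\,R(\epsilon))$ with a fixed positive coefficient $c$ (namely $c=2(m-l+k_1)$ and $c=2(n-m+k_2)$), so such a term is non-increasing in $\epsilon$ exactly when the ratio
\[ R(\epsilon)=\frac{\big(\eta-\text{(accumulated shift/scale errors)}\big)^2}{\min(\cdot)^2} \]
is non-decreasing. I would establish this by a case split on which argument attains the $\min$ in the denominator and on the signs inside each absolute value. For the exploration term, when $S(\epsilon)\ge\alpha-\beta$ the denominator equals the constant $\alpha-\beta$ while the scaling error $|\alpha-\beta-S(\epsilon)|=S(\epsilon)-(\alpha-\beta)$ shrinks as $\epsilon$ grows, so $R$ increases directly; when $S(\epsilon)<\alpha-\beta$ both numerator and denominator $\min(\alpha-\beta,S(\epsilon))^2=S(\epsilon)^2$ shrink, and I would rewrite $R(\epsilon)=\big(1+\tfrac{\eta-|\beta-\frac{l}{n}|-(\alpha-\beta)}{S(\epsilon)}\big)^2$ to read off its monotonicity in $S$. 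The disclosed term is treated identically, now using that $|\alpha-\hat\alpha(\epsilon)|$ and the increasing scale $\tfrac{n-l}{n}-S(\epsilon)$ both move monotonically with $\epsilon$.

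The main obstacle is precisely the second branch (where $S(\epsilon)$ lies below the true fraction), since there numerator and denominator both decrease and the sign of $\tfrac{dR}{d\epsilon}$ is not automatic: from the rewriting above, $R$ moves in the desired direction only once one restricts to the \emph{meaningful regime} in which the residual tolerance $\eta$ exceeds the accumulated shift/scale errors (the regime in which the subdomain DKW step, and hence the theorem's bound, is valid). I would therefore carry out the monotonicity check inside that regime, and additionally verify continuity of $R$ at the kinks where the $\min$ switches branch and where an absolute-value argument crosses zero, so that piecewise monotonicity upgrades to global monotonicity. Finally, adding the constant first term to the two non-increasing terms shows $B^e(LB,\theta,\epsilon)$ is non-increasing in $\epsilon$; intuitively, a higher exploration frequency upweights the disclosed count in the denominator of $S(\epsilon)$, which both sharpens the re-estimate $\hat\alpha(\epsilon)$ of $\alpha$ and shrinks the scaling applied to the subdomain errors.
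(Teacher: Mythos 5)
Your proposal holds $k_1$ and $k_2$ fixed and locates all of the $\epsilon$-dependence of $B^e(LB,\theta,\epsilon)$ in the re-estimated scaling factor $S(\epsilon)=\frac{n-l}{n}\cdot\frac{m-l+k_1}{n-l+k_1+\epsilon k_2}$. That is not the mechanism the paper's proof uses, and the difference is substantive. In the paper's argument the dominant effect is that $k_1$ itself grows with $\epsilon$: a higher exploration frequency means more arriving agents in $(LB,\theta)$ are actually admitted, so the sample-count prefactor $2(m-l+k_1)$ in the exploration-region exponent increases and that exponential term shrinks toward zero. The paper then asserts that the induced changes in the shifting/scaling estimation errors are minor relative to this effect, and that the censored and disclosed terms are essentially unaffected. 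In other words, you analyze exactly the piece the paper dismisses as second-order, and you discard exactly the piece the paper's proof is built on.

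The route you chose also does not close on its own terms. In the branch you yourself flag as the obstacle ($S(\epsilon)<\alpha-\beta$), you rewrite the ratio as $R(\epsilon)=\bigl(1+\frac{\eta-|\beta-\frac{l}{n}|-(\alpha-\beta)}{S(\epsilon)}\bigr)^2$ and claim monotonicity follows once $\eta$ exceeds the accumulated shift/scale errors. But that regime only guarantees the quantity inside the square is positive; it does not determine the sign of $\eta-|\beta-\frac{l}{n}|-(\alpha-\beta)$. When that sign is negative (a residual tolerance smaller than the true exploration-region mass, which is entirely compatible with the bound being meaningful), decreasing $S(\epsilon)$ decreases $R(\epsilon)$, so the exploration-region term \emph{increases} with $\epsilon$ and the claimed monotonicity fails. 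With $k_1$ frozen there is no countervailing gain from extra data, so the statement cannot be recovered from the scaling-factor dependence alone; you need to let $k_1$ scale with $\epsilon$, which is the ingredient the paper's (admittedly informal) proof supplies.
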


In words, as intuitively expected, these propositions state that the generalization bounds worsen (i.e., are less tight) when the censored feedback region is larger, and that they can be improved (i.e., made more tight) as the frequency of exploration increases.  
\begin{figure}[ht]
	\centering
    \includegraphics[width=0.45\textwidth]{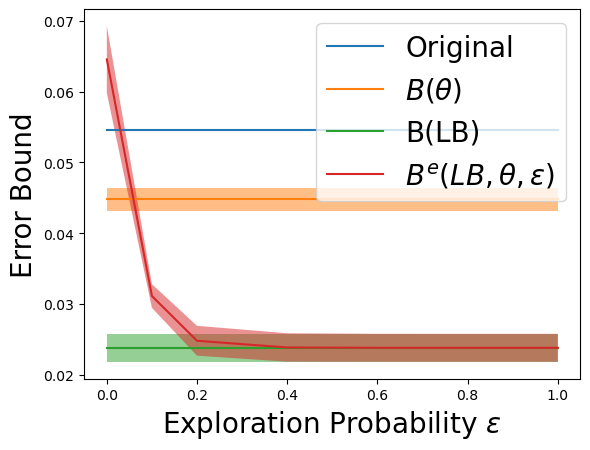}
    \caption{A minimum exploration frequency is needed to tighten the CDF error bound.}
	\label{fig:bound}
\end{figure}

\textbf{Numerical illustration}. We also conduct a numerical experiment to illustrate the bounds derived in Theorems~\ref{thm:two_subdomains} and~\ref{thm:three_subdomains}. We proceed as follows: $n=8000$ random samples are drawn from a Gaussian distribution with $\mu=7$ and $\sigma=3$, with an additional {$T=40000$} samples arriving subsequently, randomly sampled from across the entire data domain. We set the significance level $\delta=0.01$, the threshold $\theta=8$, and the lower bound $LB=6$. We run the experiment 5 times and report the error bounds from Theorems~\ref{thm:two_subdomains} and \ref{thm:three_subdomains} accordingly.\footnote{While we evaluate the \emph{a posteriori} bounds under the realizations of the new samples falling in the disclosed and exploration regions ($k$, $k_1$, and $k_2$), we show the averages of the bounds over multiple runs, which can be viewed as an approximation of the \emph{a priori} version of the bounds. See also Appendix~\ref{app:more-experiments} for a comparison of the bounds from Corollary~\ref{cor:a-priori-bound} and Theorem~\ref{thm:two_subdomains}.}

In Figure~\ref{fig:bound}, the ``original'' (blue) line represents the DKW CDF bound of the initial samples without additional data. The ``$B(\theta)$'' (orange) line and ``$B(LB)$'' (green) line represent the CDF bound in Theorem~\ref{thm:two_subdomains} without exploration, where the decision threshold is at $\theta$ and $LB$, respectively. The ``$B^e(LB, \theta,\epsilon)$'' (red) line represents the bound in Theorem~\ref{thm:three_subdomains} with exploration probability $\epsilon$.

From Figure \ref{fig:bound}, we first observe that the green line ($B(LB)$, which observes new samples with $x\geq LB=6$) provides a tighter bound than the orange line ($B(\theta)$, which observes new samples with $x\geq \theta=8$), with both providing tighter bounds than the blue line (original DKW bound, before any new samples are observed). This is shown by that the green line is below the orange line, which is also below the blue line. This improvement is due to collecting more samples from the disclosed region results in a decrease in the CDF error bound, as noted by Proposition~\ref{prop:B(LB) < B(theta)}. Additionally, we can observe from the trajectory of the red line ($B^e(LB, \theta,\epsilon)$, which observes a fraction $\epsilon$ of new samples from $(LB, \theta)$, and all new samples above $\theta$) that introducing exploration enlarges the CDF error bound due to the additional union bound, but it also enables the collection of more samples, leading to a decrease in the CDF error bound as $\epsilon$ increases (evidenced by the red line is decreasing when $\epsilon$ increases along the x-axis); note that this observation aligns with Proposition~\ref{prop:B(LB, theta, 1) < B(LB, theta, epsilon)}. 
        
Notably, we see that a minimum level of exploration probability $\epsilon$ (accepting around 10\% of the samples in the exploration range) is needed to improve the CDF bounds over no exploration. Note that this may or may not be feasible for a decision maker depending on the costs of exploration (see also Section~\ref{sec:how-to-explore}). However, if exploration is feasible, we also see that accepting around 20\% of the samples in the exploration range can be sufficient to provide bounds nearly as tight as observing all samples in the exploration range. In other words, we can see from Fig.~\ref{fig:bound} that the $\epsilon$ is around 10\% when the red line crosses the orange line, and it is around 20\% when the red line is close to the green line.

\subsection{How to choose an exploration strategy?}\label{sec:how-to-explore} 

We close this section by discussing potential considerations in the choice of an exploration strategy in light of our findings. Specifically, a decision maker can account for a tradeoff between \emph{the costs of exploration} and \emph{the improvement in the error bound} when choosing its exploration strategy. Recall that the exploration strategy consists of selecting an exploration lowerbound/range $LB$ and an exploration probability $\epsilon$. 
Formally, the decision maker can solve the following optimization problem to choose these parameters:
\begin{align}
    \max_{\epsilon\in[0,1], LB\in[0,\theta]} ~~ \big(B(\theta)-B^e(LB, \theta, \epsilon)\big) -  C(LB, \theta, \epsilon)~,
    \label{eq:optimizing-exploration}
\end{align}
where $B(\theta)$ and $B^e(LB, \theta, \epsilon)$ denote the error bounds in Theorems~\ref{thm:two_subdomains} and~\ref{thm:three_subdomains}, respectively,  
and $C(LB, \theta, \epsilon)$ is an exploration cost which is non-increasing in $(\theta-LB)$ (reducing the exploration range will weakly decrease the costs) and non-decreasing in $\epsilon$ (exploring more samples will weakly increase the cost). As an example, the cost function $C(LB, \theta, \epsilon)$ can be given by
\begin{align}
C(LB, \theta, \epsilon) =  \epsilon \int_{LB}^{\theta} e^{\frac{\theta-x}{c}} f^0(x) \mathrm{d}x.
\label{eq:cost-function-example}
\end{align}
In words, unqualified (costly) samples at $x$ have a density $f^0(x)$, and when selected (as captured by the $\epsilon$ multiplier), they incur a cost $e^{\frac{\theta-x}{c}}$, where $c>0$ is a constant. Notably, observe that the cost is increasing as the sample $x$ gets further away from the threshold $\theta$. For instance, in the bank loan example, this could capture the assumption that individuals with lower credit scores default on a larger portion of their loans. 

As noted in Proposition~\ref{prop:B(LB, theta, 1) < B(LB, theta, epsilon)}, $B^e(LB, \theta, \epsilon)$ is decreasing in $\epsilon$; coupled with any cost function $C(LB, \theta, \epsilon)$ that is (weakly) increasing in $\epsilon$, this means that the decision maker's objective function in \eqref{eq:optimizing-exploration} captures a tradeoff between reducing generalization errors and modulating exploration costs. 

The optimization problem in \eqref{eq:optimizing-exploration} can be solved (numerically) by plugging in for the error bounds from Theorems~\ref{thm:two_subdomains} and~\ref{thm:three_subdomains} and an appropriate cost function (e.g., \eqref{eq:cost-function-example}). For instance, in the case of the numerical example of Fig.~\ref{fig:bound}, under the cost function of \eqref{eq:cost-function-example} with $c=5$, and fixing $LB=6$, the decision maker should select $\epsilon=11.75\%$. 

Another potential solution for modulating exploration costs is to use multiple exploration subdomains, each characterized by an exploration range $[LB_i, LB_{i-1})$, and with a higher exploration probability $\epsilon_i$ assigned to the subdomains closer to the decision boundary (which are less likely to contain high cost samples). For instance, with the choice of $b$ subdomains, the cost function of \eqref{eq:cost-function-example} would change to (the lower) cost: 
\begin{align}
C(\{LB_i\}_{i=1}^b, \theta, \{\epsilon_i\}_{i=1}^b) =  \sum_{i=1}^{b} \epsilon_i \int_{LB_i}^{LB_{i-1}} e^{\frac{\theta-x}{c}} f^0(x) \mathrm{d}x.
\label{eq:cost-function-example-modified}
\end{align}
It is worth noting that while this approach can reduce the costs of exploration, it will also weaken generalization guarantees when we reassemble the $b$ exploration subdomains' bounds back into an error bound of the full domain (similar to what was observed in Fig.~\ref{fig:bound} for $b=1$). This again highlights a tradeoff between improving learning error bounds and restricting the costs of data collection.

\section{Generalization Error Bounds under Censored Feedback (Ex-post Analysis)} \label{sec:theoretical}

In this section, we use the CDF error bounds from Section~\ref{sec:split} to characterize the generalization error of a classification model that has been learned from data collected under censored feedback. Specifically, we will first establish a connection between the generalization error of a classifier (the quality of its learning) and the CDF error bounds on its training dataset (the quality of its data). With this relation in hand, we can then use any of the CDF error bounds from Theorems~\ref{thm:GC_theorem}-\ref{thm:three_subdomains} to bound how well algorithms learned on data suffering from censored feedback (and without or with exploration) can generalize to future unseen data.

Formally, we consider a 0-1 learning loss function $\mathcal{L}: \mathcal{Y} \times \mathcal{Y} \rightarrow \{0,1\}$. Denote $R(\theta)=\mathbb{E}_{XY}\mathcal{L}(f_{\theta}(X), Y)$ as the expected risk incurred by an algorithm with a decision threshold $\theta$. Similarly, we define the empirical risk as $R_{emp}(\theta)$. The \emph{generalization error bound} is an upper bound to the error $|R(\hat{\theta}) - R_{emp}(\hat{\theta})|$, where $\hat{\theta}$ is the minimizer of the empirical loss, i.e., $\hat{\theta}:= \arg \min_{\theta} R_{emp}(\theta)$. In words, the bound provides a (statistical) guarantee on the performance $R(\hat{\theta})$, when using the learned $\hat{\theta}$ on unseen data, relative to the performance $R_{emp}(\hat{\theta})$ assessed on the training data. Our objective is to characterize this bound under censored feedback, and to evaluate how utilizing (pure or bounded) exploration can improve the bound. 

Recall that the decision maker starts with a training data containing $n_y$ IID samples from each label $y$, drawn from an underlying distribution with CDF $F^y(x)$. Let $n=n_0+n_1$ denote the size of the initial training data. Then, the expected loss of a binary classifier with decision threshold $\theta$ is given by,
\[R(\theta)= \mathbb{E}_{XY}\mathcal{L}(f(X),Y) = p_1F^1(\theta) + p_0(1-F^0(\theta))~,\]
while the empirical loss $R_{emp}(\theta)$ is given by,
\begin{align*}
    R_{emp}(\theta) =\frac{n_1}{n}\frac{1}{n_1}\sum_{(x_i,y_i)} \mathbbm{1}{\{x_i \leq \theta,y_i = 1\}} + \frac{n_0}{n}\Big(1-\frac{1}{n_0}\sum_{(x_i,y_i)} \mathbbm{1}{\{x_i \leq \theta,y_i = 0\}}\Big).
\end{align*}
Similarly, if the decision maker can collect an additional $k_y$ samples of agents with features above the threshold $\theta$, the above empirical risk expression can be updated accordingly, by considering the $n_y+k_y$ samples available from each label $y$.

We detail the derivations of these expressions in Appendix~\ref{app:loss_derivation}. Using these expressions of the expected and empirical risks, the following theorem provides an upper bound on the generalization error $|R(\hat{\theta}) - R_{emp}(\hat{\theta})|$ as a function of the CDF error bound, where $\hat{\theta}$ denotes the minimizer of the empirical loss, i.e., $\hat{\theta}:=\arg\min_{\theta} R_{emp}(\theta)$. 

\begin{thm}~\label{thm:error}
Consider a threshold-based classifier $f_{\hat{\theta}}(x):\mathcal{X}\rightarrow \{0,1\}$, determined from a dataset containing $n_y$ initial IID training samples from each label $y$, with $n=n_0+n_1$, under a 0-1 loss function. Let $p_y$ denote the proportion of samples from label $y$. Subsequently, due to the censored feedback, the algorithm collects $k_y$ additional samples from each label $y$. Let $F^y$ and $F_m^y$ denote the CDFs and empirical CDFs, respectively, given $m$ samples from label $y$ agents. 
Then, with probability at least $1-2\delta$,
\begin{align*}
    &\Big|R(\hat{\theta}) - R_{emp}(\hat{\theta})\Big| \leq 3\Big|p_0 - \frac{n_0}{n}\Big| + \sum_{y\in \{0,1\}} \min\Big(p_y, \frac{n_y}{n}\Big)\sup_{\theta}\Big|F^y(\theta) - F^y_{n_y+k_y}(\theta)\Big|~.
\end{align*}
\end{thm}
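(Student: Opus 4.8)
The plan is to prove the statement as a deterministic, pointwise-in-$\theta$ algebraic bound on $|R(\theta) - R_{emp}(\theta)|$, and then upgrade to the data-dependent minimizer $\hat\theta$ by passing to a uniform bound over thresholds. Since $\hat\theta$ is itself a function of the (random) training data, I cannot simply evaluate the risk gap at a fixed threshold; instead I would invoke the standard inequality $|R(\hat\theta) - R_{emp}(\hat\theta)| \leq \sup_\theta |R(\theta) - R_{emp}(\theta)|$, so that it suffices to control the right-hand side uniformly in $\theta$.

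First I would substitute the closed forms $R(\theta) = p_1 F^1(\theta) + p_0(1 - F^0(\theta))$ and $R_{emp}(\theta) = \tfrac{n_1}{n}F^1_{n_1+k_1}(\theta) + \tfrac{n_0}{n}(1 - F^0_{n_0+k_0}(\theta))$, and group their difference by label. The constant ``$1$'' contributions from the label-$0$ terms combine into a single offset $(p_0 - \tfrac{n_0}{n})$, leaving two mixing terms of the form $p_y F^y(\theta) - \tfrac{n_y}{n}F^y_{n_y+k_y}(\theta)$ for $y \in \{0,1\}$.

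The crucial step is bounding each mixing term tightly. I would write it in two ways: by adding and subtracting $p_y F^y_{n_y+k_y}(\theta)$, or by adding and subtracting $\tfrac{n_y}{n}F^y(\theta)$. The first produces a coefficient $p_y$ on the CDF deviation $|F^y - F^y_{n_y+k_y}|$ plus a leftover term $(p_y - \tfrac{n_y}{n})F^y_{n_y+k_y}$; the second produces a coefficient $\tfrac{n_y}{n}$ plus a leftover $(p_y - \tfrac{n_y}{n})F^y$. In both cases the leftover is bounded in magnitude by $|p_y - \tfrac{n_y}{n}|$ since $0 \leq F \leq 1$. Choosing the tighter decomposition, a $\theta$-independent choice dictated by whether $p_y$ or $\tfrac{n_y}{n}$ is smaller, yields the $\min(p_y, \tfrac{n_y}{n})$ coefficient appearing in the statement. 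This min-selection, which mirrors the scaling factors of Lemmas~\ref{lemma:left}--\ref{lemma:exploration}, is the main obstacle: the point is to obtain the tight coefficient $\min(p_y, \tfrac{n_y}{n})$ rather than a crude $p_y$ or $1$, and to verify that the selection can be made uniformly before the $\sup_\theta$ is taken.

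Finally I would collect the scaling errors. The triangle inequality produces three $|p_y - \tfrac{n_y}{n}|$-type contributions: one from each label's mixing term and one from the constant offset. Using $p_0 + p_1 = 1$ together with $\tfrac{n_0}{n} + \tfrac{n_1}{n} = 1$ gives $|p_1 - \tfrac{n_1}{n}| = |p_0 - \tfrac{n_0}{n}|$, so these collapse into the single constant $3|p_0 - \tfrac{n_0}{n}|$. Since the $\min(p_y, \tfrac{n_y}{n})$ coefficients do not depend on $\theta$, taking $\sup_\theta$ of the two CDF-deviation terms produces exactly the stated bound, which holds deterministically for every realization of the data. The probabilistic wrapper ``with probability at least $1-2\delta$'' then enters only when the high-probability CDF error bounds of Theorem~\ref{thm:two_subdomains} (or Theorem~\ref{thm:three_subdomains}) are substituted for the two $\sup_\theta |F^y - F^y_{n_y+k_y}|$ factors, applying a union bound over the two labels $y \in \{0,1\}$.
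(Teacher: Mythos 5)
Your proposal is correct and follows essentially the same route as the paper's proof: bound the gap at $\hat\theta$ by the uniform gap $\sup_\theta|R(\theta)-R_{emp}(\theta)|$, substitute the closed-form risks, split off the constant offset $|p_0-\tfrac{n_0}{n}|$, and handle each mixing term $p_yF^y-\tfrac{n_y}{n}F^y_{n_y+k_y}$ via the same add-and-subtract device as Lemma~\ref{lemma:left} to extract the $\min(p_y,\tfrac{n_y}{n})$ coefficient, with the three scaling leftovers collapsing to $3|p_0-\tfrac{n_0}{n}|$ via $|p_1-\tfrac{n_1}{n}|=|p_0-\tfrac{n_0}{n}|$. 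Your explicit remarks that the min-selection is $\theta$-independent (so it commutes with the supremum) and that the $1-2\delta$ arises from a union bound over the two labels are correct and in fact slightly more careful than the paper's own write-up.
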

First, we note that tightening the CDF error bounds leads to tightening the generalization error guarantees. More specifically, using this theorem together with Theorems~\ref{thm:GC_theorem}, \ref{thm:two_subdomains}, and \ref{thm:three_subdomains}, we can provide a generalization error guarantee for an algorithm in terms of the number of available data samples in its training data from each label and in different parts of the data domain, particularly when future data availability is non-IID due to censored feedback. 

For instance, the DKW inequality can be alternatively expressed as follows: given $n_y$ IID samples from a label $y$, with probability at least $1-\delta$, the following inequality holds:
\[\sup_{z}\Big|F(z) - F^y_{n_y}(z)\Big| \leq {\sqrt{\frac{\log \frac{2}{\delta}}{2n_y}}~.}\]
Using this expression in Theorem~\ref{thm:error}, we conclude that (without censored feedback, or with pure exploration with $\epsilon=1$) with probability at least $1-2\delta$, 
\begin{align*}
    \Big|R(\hat{\theta}) - R_{emp}(\hat{\theta})\Big|   &\leq 3\Big|p_0 - \frac{n_0}{n}\Big| + \sum_{y\in \{0,1\}} \min\Big(p_y, \frac{n_y}{n}\Big){\sqrt{\frac{\log \frac{2}{\delta}}{2n_y}}}.
\end{align*}

We can similarly specialize Theorem~\ref{thm:error} to tasks with censored feedback by linking it with Theorems~\ref{thm:two_subdomains} and ~\ref{thm:three_subdomains}. Given the complexity of the CDF error bounds under censored feedback, while we cannot derive a closed-form expression for the bound as done for the DKW inequality, we can compute the bounds numerically, as shown in the next section.

\section{Numerical Experiments} \label{sec:numerical}

\subsection{CDF error bounds}
We first illustrate our derived bounds (with $\delta = 0.015$) on the empirical CDF. We start with 50 random samples from a Gaussian distribution N(7,1). Next, 200 new samples are drawn from the same distribution, with all samples with features $x \geq \theta=7$ accepted, and samples with features $LB=6 \leq x \leq \theta$ accepted with a probability $\epsilon\in\{0, 0.5, 1\}$; higher values of $\epsilon$ represent less censored feedback ($\epsilon=1$ means no censored feedback). 

\begin{figure}[ht]
	\centering
    \includegraphics[width=0.6\textwidth]{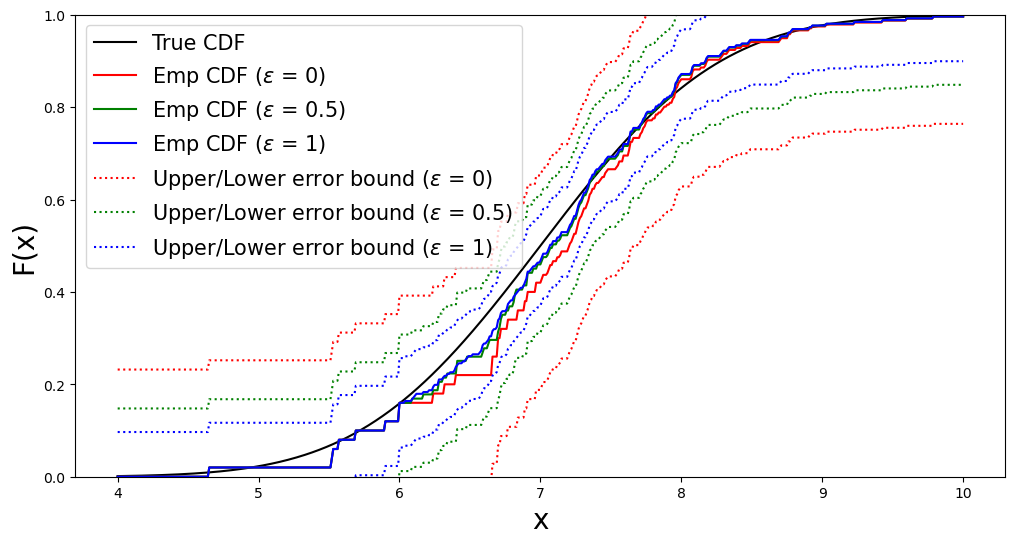}
    \caption{CDF error bounds evaluated under different exploration probabilities.}
	\label{fig:DKW}
\end{figure}

From Figure~\ref{fig:DKW}, we first note that our bounds (the dotted lines) effectively enclose the true distribution, evidenced by that the true distribution is upper and lower bounded by the dotted lines. We also note the distinction between empirical CDFs in the disclosed region ($x\geq7$) and the censored region ($x \leq 7$): as intuitively expected, empirical CDFs (solid lines) in the disclosed region are ``smoother'' compared to those in the censored region. Furthermore, as $\epsilon$ (exploration) increases, we overcome censored feedback in the exploration region, resulting in more accurate empirical estimates. Additionally, as $\epsilon$ increases, our error bounds improve (i.e., more tightly enclose the true CDF). 

\subsection{Model generalization error bounds: real-world data}

We now illustrate the ability of our generalization error bounds (derived in Theorem~\ref{thm:error}) in providing guarantees on the error of the learned models from data affected by censored feedback, using experiments on a real-world dataset: \emph{FICO} \citep{hardt2016equality}, \emph{Retiring Adult} \citep{ding2021retiring}, \emph{Adult} \citep{Dua:2019}.

\textbf{Experiments on \emph{FICO} dataset.} The \emph{FICO}
dataset is used to predict whether an individual will default. It includes one-dimensional features (e.g., credit scores) with a specific focus on the distribution information of the credit scores. We employ a logistic regression algorithm and 0-1 loss for the classification task, and compare the generalization error across different exploration probabilities ($\epsilon=\{0.5, 1\}$). We start with a 1000 training data samples. A total of 175000 new samples arrive throughout the experiment; in addition to accepting all samples with feature $x \geq \hat{\theta}$, the algorithm also accepts some samples that fall below $\hat{\theta}$. We report our experiment results for an average of 5 runs, where the randomness comes from the order of samples arrived and the exploration. 

\begin{figure}[ht]
	\centering
    \includegraphics[width=0.45\textwidth]{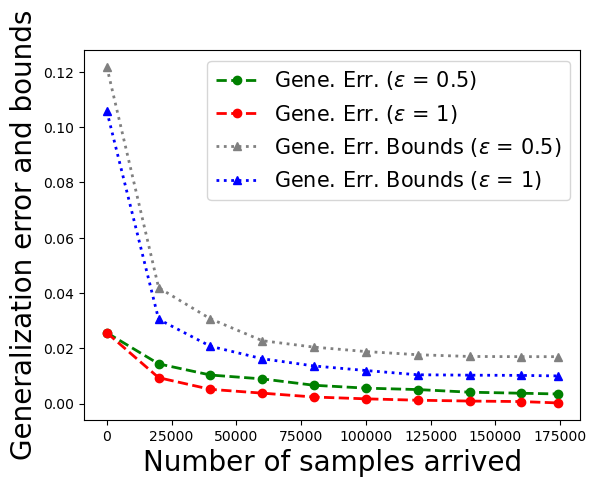}
    \caption{Generalization error and bounds using \emph{FICO} dataset.}
	\label{fig:GEB_FICO}
\end{figure}

\textbf{Experiments on \emph{Retiring Adult} dataset.} The \emph{Retiring Adult} census dataset is used to predict whether an individual can earn more than \$50k/year, based on a multi-dimensional feature set. Similar to the experiments on \emph{FICO} dataset, we employ a logistic regression algorithm and 0-1 loss for the classification task, and compare the generalization error across different exploration probabilities ($\epsilon=\{0.5, 1\}$). A total of 1600000 new samples arrive throughout the experiment. We report our experiment results for an average of 5 runs, where the randomness comes from the order of samples arrived and the exploration. 

\begin{figure}[ht]
	\centering
    \includegraphics[width=0.45\textwidth]{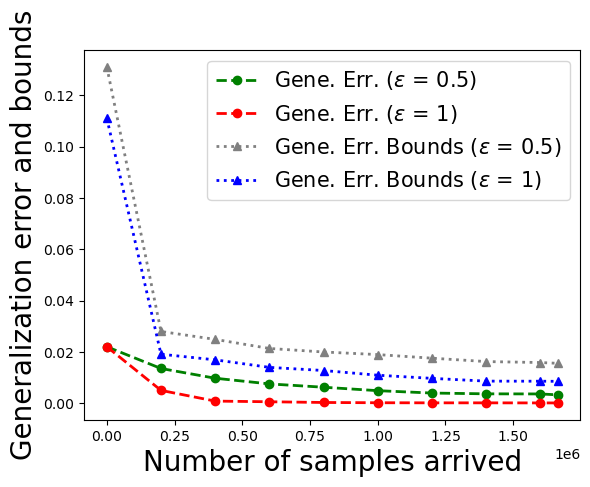}
    \caption{Generalization error and bounds using \emph{Retiring Adult} dataset.}
	\label{fig:GEB_Retiring}
\end{figure}

In Fig.~\ref{fig:GEB_FICO} and \ref{fig:GEB_Retiring}, the y-axis represents the generalization error and its bounds, where a lower value is preferable. A smaller value of the bounds indicates tighter bounds enclosing the generalization error curve. From the experiment results from both \emph{FICO} and \emph{Retiring Adult} datasets in Fig.~\ref{fig:GEB_FICO} and \ref{fig:GEB_Retiring}, we can see that our bounds (shown in gray and blue) can effectively contain the true generalization errors of the model (for both $\epsilon = \{0.5, 1\}$). Furthermore, we can see that when the exploration probability $\epsilon$ is increased, the bounds get tighter (the blue line is below the gray line) due to the additional samples explored during data collection. 

\textbf{Experiments on \emph{Adult} dataset.} The \emph{Adult} census dataset is similar to the \emph{Retiring Adult} dataset, but it has smaller amount of samples. It is also used to predict whether an individual can earn more than \$50k/year, based on a multi-dimensional feature set. A total of 45000 new samples arrive throughout the experiment; We report our experiment results for an average of 5 runs, where the randomness comes from the order of samples arrived and the exploration. In addition, we further consider the model is updated as new samples are collected. Therefore, in the following experiments using \emph{Adult} dataset, we also assess the performance of our bounds based on whether we adaptively update the decision threshold $\hat{\theta}$ with new samples.

\begin{figure}[ht]
	\centering
    \includegraphics[width=0.4\textwidth]{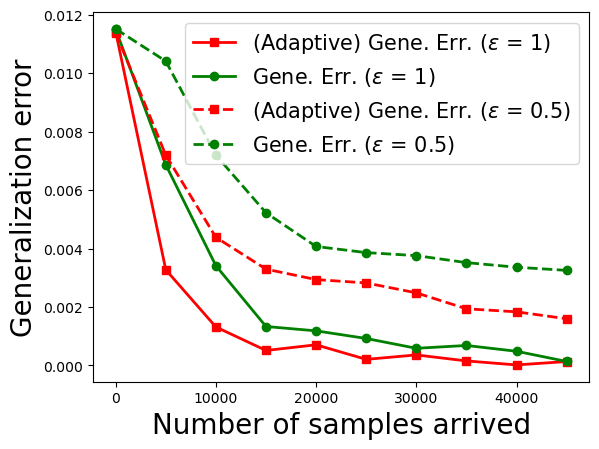}
    \includegraphics[width=0.4\textwidth]{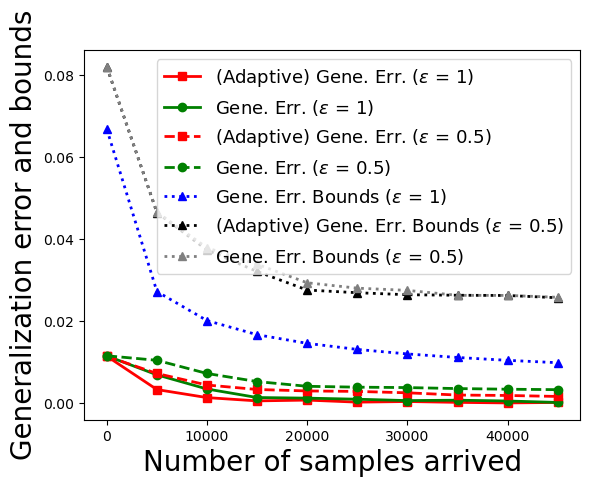}
    \caption{Generalization error with(out) an adaptively updated model ($\hat{\theta}$) and varying exploration ($\epsilon$).}
	\label{fig:GEB_adaptive}
\end{figure}

From Figure~\ref{fig:GEB_adaptive}, we observe that as the decision threshold $\hat{\theta}$ is adaptively updated when more samples are collected, it has even better generalization performance compared to a non-adaptive decision threshold (evidenced by the red curve being lower than the green curve) This is expected as a refined decision threshold yields better performance on unseen data. Further, for the generalization error bounds (dotted lines in the right panel), we see that our bounds effectively contain the true generalization errors of the model for both the fixed model and adaptively updated model cases (all dotted lines are above the red/green curves). Notably, in the presence of censored feedback, we observe that the generalization error bound with adaptive updating is tighter than the non-adaptive one (the black curve is below the gray curve), pointing to a potential future research direction for further improving our bounds. 

\subsection{Comparison with existing generalization error bounds} 

\begin{figure}[ht]
	\centering
    \includegraphics[width=0.45\textwidth]{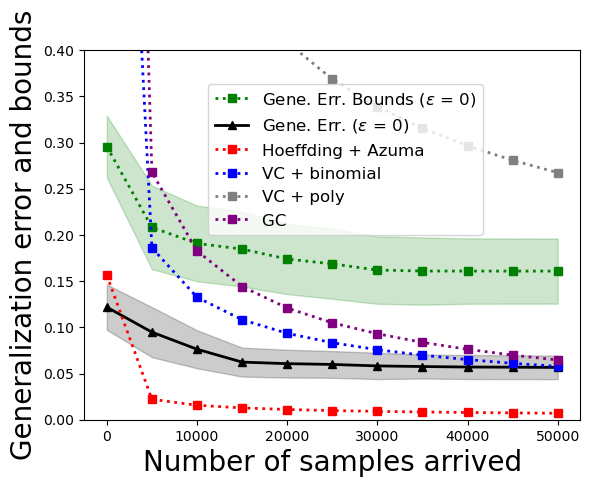}
    \caption{Existing bounds fail to capture generalization error when there is censored feedback.}
	\label{fig:benchmark}
\end{figure}

We now compare the performance of our bounds with a number of existing generalization error bounds, and show that by failing to account for censored feedback, prior works fail to correctly capture how well a model learned on data suffering from censored feedback generalizes to unseen data. We consider the following four benchmarks: The `Hoeffding + Azuma' bounds represent those derived from Hoeffding and Azuma inequalities \citep{hoeffding1994probability,azuma1967weighted}. The `VC + binomial' bounds are VC generalization bounds \citep[Thm 2.5]{vapnik2015uniform,Mostafa2012} where the shatter coefficient is bounded through the binomial theorem. The `VC + poly' bounds represent VC generalization bounds \citep[Thm 13.11]{vapnik2015uniform,devroye2013probabilistic} applicable to any linear classifier whose empirical error is minimal, where the shatter coefficient is bounded by a polynomial function. Lastly, the `GC' bounds \citep{glivenko1933sulla, cantelli1933sulla} are derived based on the Glivenko-Cantelli Theorem for a threshold classifier and 0-1 loss.

We conduct this experiment on synthetic data. We start with 50 initial training samples for each label $y \in \{0, 1\}$ randomly drawn from Gaussian distributions N(9,1) and N(10,1), respectively. The decision threshold $\hat{\theta}$ is selected to be the one minimizing the misclassification error on the training data. Then, a total of 50000 new samples arrive throughout the experiment. They will be accepted if the feature $x \geq \hat{\theta}$, otherwise, they are rejected. We run the experiments 5 times and report the average results with corresponding error bars. From Figure~\ref{fig:benchmark}, we can clearly see that the `Hoeffding-Azuma' (red), `VC+binomial' (blue), and `GC' (purple) bounds are inadequate for accurately estimating the true generalization error guarantees of the model. This inadequacy is demonstrated by the fact that all three bounds cross the true error (black) line as new samples are collected under the presence of censored feedback. For the `VC+poly' (gray) bound, although it provides a very loose estimate compared to our bounds for the given number of new samples—evidenced by the gray bounds being above our green bounds—it ultimately exhibits similar behavior to the other three benchmarks, in that it will go lower than the true generalization error. 

\section{Error Bounds for the Ex-ante Analysis Case}\label{sec:full_random}

In the previous sections, we explored the \emph{ex-post analysis} case where the initial training data is collected in advance, leading to a realized (known) decision threshold. We now extend our study to cases where the initial training data may be any collection of (uniformly drawn) random samples from the underlying data distributions. Consequently, the decision threshold becomes a data-dependent random variable. 

First, assume the distribution of this random variable $\theta$, denoted $\mathbb{P}(\theta)$, is known. The following is an extension of the DKW inequality to problems with censored feedback, accounting for randomness in the initial training data with a data-dependent $\theta$. (That is, an extension of Theorem~\ref{thm:two_subdomains} to the ex-ante analysis case.) 
 
\begin{corollary}\label{thm:two_subdomains_full}
Let $x_1, x_2, \ldots, x_n$ be random initial data samples, drawn IID from a distribution with CDF $F(x)$. Let the optimal $\theta$, derived from a collection of initial data samples with \emph{both} labels, partition the data domain into two regions, such that $\alpha=F(\theta)$, and $m$ of the initial $n$ samples are located below $\theta$. Assume we have collected $k$ additional samples above the threshold $\theta$, and let $F_{n+k}(x)$ denote the empirical CDF estimated from these $n+k$ (non-IID) data. 
Then, for every $\eta, k >0$,
\begin{align*}
    &\hspace{-0.15in}\mathbb{P}\bigg[\sup_{x\in \mathbb{R}} \Big|F(x) - F_{n+k}(x)\Big|  \geq \eta \bigg] \\
     &\qquad \hspace{-0.35in} \leq \int_{-\infty}^{\infty}  \Bigg (\sum_{m=1}^{n-1}2{n\choose m}\alpha^m(1-\alpha)^{n-m}\Big(\exp\Big({\tfrac{-2m(\eta-|\alpha - \frac{m}{n}|)^2}{\min\big(\alpha, \frac{m}{n}\big)^2}}\Big) \\
    &\qquad + \exp\Big({\tfrac{-2(n-m+k)(\eta-2|\alpha - \frac{m}{n}|)^2}{\min\big(1-\alpha,\frac{n-m}{n}\big)^2}}\Big)\Big) + 2(1-\alpha)^n\Big(1+\exp\Big(\tfrac{-2(n+k)(\eta-2\alpha)^2}{\big(1-\alpha\big)^2}\Big)\Big)\\
    &\qquad +2\alpha^n\Big(\exp\Big(\tfrac{-2n(\eta-|\alpha-1|)^2}{\alpha^2}\Big)+1\Big)\Bigg ) \mathbb{P}(\theta)d\theta.
\end{align*}
\end{corollary}

The proof of Corollary~\ref{thm:two_subdomains_full} is straightforward, and follow from the law of total probability for the bounds by conditioning on the realization $m$ of the samples in the censored region. In corner cases where no samples fall within the censored or disclosed regions, the bounds are determined by applying the DKW inequality directly in the absence of samples. Lastly, we note that an \emph{a priori} version of both bounds can be derived using similar techniques to that of Corollary~\ref{cor:a-priori-bound}.

We now return to the question of knowledge of the probability density $\mathbb{P}(\theta)$, which is needed to evaluate these bounds. 

\paragraph{The probability density $\mathbb{P}(\theta)$} 
\textbf{The challenge.} As the numerical example in Fig.~\ref{fig:p_theta} illustrates, the distribution of the threshold $\theta$ will depend on both the underlying data distributions, as well as the number of samples $(n_0, n_1)$ drawn from each feature-label distribution. In particular, in this numerical example, we consider different combinations of $(n_0, n_1) \in \{(100, 50), (50, 50), (50, 100)\}$ with samples drawn IID from exponential and Gaussian distributions with varying parameters, as well as $(n_0, n_1) \in \{(100, 50), (50, 100)\}$ drawn IID from uniform distributions with different parameters. For each configuration, we replicate the experiment 5000 times, compute the optimal threshold $\theta$ in each instance, and report the resulting empirical density $\mathbb{P}(\theta)$. 

\begin{figure}[tbhp!]
	\centering
\includegraphics[width=0.7\textwidth]{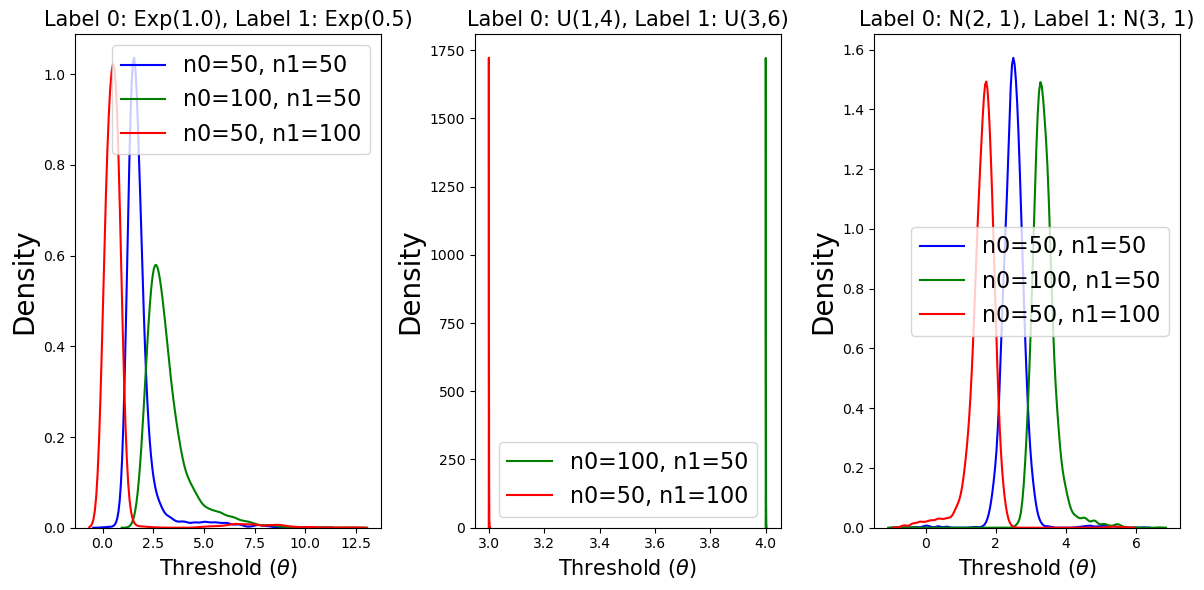}
	\caption{$\mathbb{P}(\theta)$ under different distributions with different sample size $n_0$ and $n_1$.}
	\label{fig:p_theta}
\end{figure}

\textbf{Special cases.} With $\theta$ being largely sensitive to the statistics of the initial training dataset, a universal closed-form expression for $\mathbb{P}(\theta)$ across all label distributions in not available. However, if some information about the feature-label distribution family is known, it may be possible to characterize this distribution. In particular, consider the case where $n_0$ label 0 samples are drawn IID from $U(a, a+c)$, and $n_1$ label 1 samples drawn IID from $U(b, b+c)$ with $a < b < a+c$, and $n_0 \neq n_1$. Theoretically, if $n_1 > n_0$, the optimal decision threshold will be $\hat{b}$, where $\hat{b}$ is the minimum value among all label 1 samples. Then, by the order statistics of the uniform distribution, we can find that  $\theta$ follows a distribution $b + 2Beta(1,n_1)$. Conversely, if $n_1 < n_0$, the optimal decision threshold will be $\hat{a}+c$, which follows a distribution $a+c + 2Beta(1,n_0)$.  

\textbf{Estimating the distribution.} In practice, one can envision obtaining the empirical density of $\theta$. For example, a bank's headquarters could collect decision thresholds $\theta$ from all its local branches, assuming each branch has had its own (slightly different) decision-making rule depending on the data available to them. In this case, provided sufficient samples, according to the DKW inequality, the empirical distribution could serve as a good estimation of the true $\mathbb{P}(\theta)$. 

\paragraph{Comparing the CDF error bounds between \emph{ex-post} and \emph{ex-ante} cases}

We illustrate our derived bounds (with $\delta = 0.015$) on the empirical CDF. We start with 50 random samples each from the label 0 distribution N(7,1) and label 1 distribution N(10,1). Then, 200 new samples are drawn from N(7,1). Throughout this experiment, the LB is set to 7. The decision threshold $\theta$ is determined from a collection of training samples from both labels. In the \emph{ex-post analysis} case, the decision threshold is found and fixed at 8.44. This means that all samples with features $x \geq \theta=8.44$ are accepted, and samples with features $LB=7 \leq x \leq \theta$ are accepted with a probability $\epsilon = 0.5$. 

\begin{figure}[ht]
	\centering
    \includegraphics[width=0.6\textwidth]{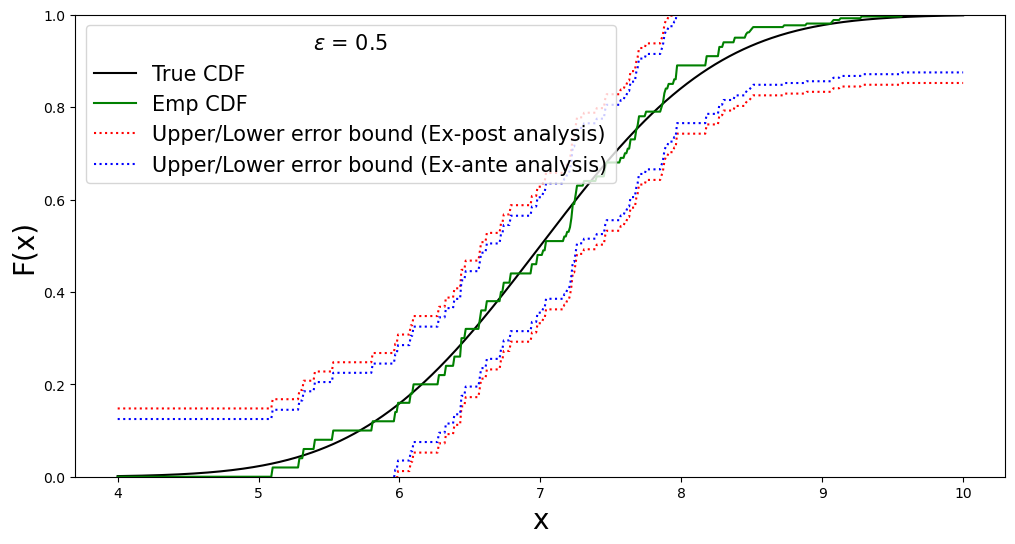}
    \caption{CDF error bounds evaluated under different types of randomness.}
	\label{fig:DKW_randomness_comparison}
\end{figure}

Similar to the results in Figure~\ref{fig:DKW}, we first note that our bounds (the dotted lines) effectively enclose the true distribution, evidenced by that the true distribution is upper and lower bounded by the dotted lines. We also note the empirical CDFs (solid lines) in the disclosed region are ``smoother'' compared to those in the censored region. Furthermore, we can also note that incorporating the dependence of $\theta$ on the random initial data can yield tighter bounds. However, this has required knowledge of the data distributions for both labels.

\section{Conclusion, Limitations and Future Work}\label{sec:conclusion}

We studied generalization error bounds for classification models learned from non-IID data collected under censored feedback. We presented two generalizations of the Dvoretzky-Kiefer-Wolfowitz (DKW) inequality, which characterizes the gap between empirical and theoretical CDFs given \emph{IID} data, to problems with \emph{non-IID} data due to censored feedback without exploration (Theorem~\ref{thm:two_subdomains}) and with exploration (Theorem~\ref{thm:three_subdomains}), and connected these bounds to generalization error guarantees of the learned model (Theorem~\ref{thm:error}). Our findings establish the extent to which a decision maker should be concerned about censored feedback's impact on the learned model's performance guarantees, and show that a minimum level of exploration is needed to alleviate it. Additionally, we analyze the bounds under the \emph{ex-ante analysis} setting in Section~\ref{sec:full_random} and show that it can yield tighter bounds compared to the \emph{ex-post analysis} case.

\textbf{Distribution-dependent bounds.} As discussed in Section~\ref{sec:full_random}, a key limitation in the \emph{ex-ante analysis} case is the need to know the label distribution to derive the prior probability $\mathbb{P}(\theta)$. While empirical estimation of $\theta$ may be feasible in practice, our generalization error bounds—derived from CDF error bounds—still inherently rely on the label distributions. An alternative approach would be to develop error bounds based on distribution-free complexity measures (e.g., VC dimension, Rademacher complexity, etc.), leveraging the richness of the hypothesis class. This remains a potential direction for future work.

\textbf{Other future work.} For future work, we are also interested in strengthening our bounds by allowing the model ($\theta$) to be adaptively updated as new samples are collected; as noted in Section~\ref{sec:numerical}, this could help further strengthen our error bounds. Generalization error bounds under a combination of censored feedback and domain adaptation are also worth exploring, wherein the initial training data distribution differs from the target domain distribution. Finally, we have provided extensions of the DKW inequality, which strengthens the VC inequality when data is real-valued, under censored feedback; providing similar extensions of the VC inequality for \emph{multi-dimensional data} could be an interesting direction of future work. We discuss some initial findings and potential challenges of this extension below. 

\textbf{Bounds for higher dimensional data}. When assessing generalization error under censored feedback in higher dimensional data, one approach could be to first reduce the dimensionality, enabling direct application of our findings. For instance, we have performed a mapping of multi-dimensional features to a single-dimensional representation in our experiments on the real-world \emph{Adult} census dataset. However, this reduction may lead to some loss of information, potentially impacting algorithm performance. An alternative would be to follow our approach of identifying IID subspaces in the higher-dimensional data space, apply a \emph{multivariate} DKW inequality (e.g., \citep{naaman2021tight}) in these subspaces, and then identify the appropriate error coefficients to re-assemble the subdomain bounds and find a CDF error bound for the entire data domain. We provide an analysis for 2D spaces based on this approach in Appendix~\ref{app:higher_dimension}. A main challenge when doing so is that while the decision boundary can be any arbitrary line (determining the two subspaces in which data can be viewed as IID), the standard joint CDF calculates the probability that  $X \leq x$ and $Y \leq y$, where $x$ and $y$ are vertical and horizontal cutoff values. To circumvent this mismatch, we start with an \emph{adjusted} CDF which measures data density and counts existing vs. newly collected samples in a ``rotated'' data space, and subsequently map the CDF error bound of the adjusted CDF to a CDF error bound for the standard CDF (as detailed in Appendix~\ref{app:higher_dimension}). Alternative error bounds that build on the VC inequality for multi-dimensional data (instead of multi-dimensional DKW inequalities), remain as a potential direction for future work. 

\begin{appendices}

\section{Additional and detailed related work}\label{app:additional_review}

\textbf{Censored feedback (Adaptive sampling)}. The problem of censored feedback has been extensively studied in the literature. Prior works, such as \citep{deshpande2018accurate, ensign2018runaway, wei2021decision}, have explored various aspects of this problem. For instance, \citet{ensign2018runaway} view the problem as a semi-bandit problem and apply existing regret bounds to obtain their theoretical result. \citet{deshpande2018accurate} investigate inference in high-dimensional linear models and analyze the sample adaptivity from a bias and variance trade-off perspective in both time series analysis and batched data collection. \citet{wei2021decision} studies the learning policies with censored feedback, finding the optimal decision policy that maximizes the utility function. This work considers both adaptive sampling and cost exploration and formulates the problem as a partially observed Markov decision process, which is solved through dynamic programming. Our work is similar to these prior works in that we also consider censored feedback, but we mainly focus on deriving a generalization error bound in the presence of censored feedback, and we propose to use a bounded exploration technique to improve the bounds.

\textbf{Generalization with non-IIDness.} In addition to the $\beta$-mixing sequence non-IIDness discussed in the main paper, \citep{steinwart2009fast,steinwart2009learning} have studied a more general non-IID setting called $\alpha$-mixing sequence, where stationarity is not necessary. Furthermore, \citep{modha1996minimum,zou2009generalization} have explored the generalization performance based on exponentially strongly mixing observations, where the $\alpha$-mixing coefficient is bounded by an exponential decay term. Apart from the mixing sequence, \citep{cheng2018simple} have imitated human learning behavior by discarding training samples with lower losses during training, and refreshing them when a new training epoch starts. They experimentally show that their method of using non-IID samples can lead to better performance than uniformly sampling using IID samples. \citep{smale2009online} have investigated non-IIDness in online learning algorithms through a Markov sampling method, where samples are drawn according to a sequence of probability measures. Meanwhile, \citep{zhao2022gray} explore non-IIDness through a mixing of in-distribution and out-of-distribution samples. They show that the performance gap is related to the distribution discrepancy between in-distribution and out-of-distribution (similar to \citep{ben2010theory}, which refers to source and target discrepancy), empirical losses, and empirical estimation error.

\textbf{DKW-typed inequalities with non-IIDness.} Our work is also related to DKW-type inequalities with non-IID data. \citep{kontorovich2014uniform} derive deviation bounds using DKW-type inequalities for strongly mixing non-stationary Markov chains. Extending the DKW inequality with one-dimensional IID samples sharpened by \citep{massart1990tight}, \citep{naaman2021tight} generalize it to the multi-dimensional case, incorporating an $\alpha$-mixing condition. Furthermore, DKW-type inequalities have been applied to the Kaplan-Meier (KM) estimator in the analysis of right censored data in survival analysis, as shown by \citep{bitouze1999dvoretzky}. They utilize a weight factor, similar to our scaling factor, to control uniform deviation in the uncensored data model. Moreover, \citep{goldberg2017support} introduce a censored SVM learning algorithm with a inverse probability censoring weighting (IPCW) loss function. Additionally, \citep{goldberg2019hoeffding} develope Hoeffding-type and Bernstein-type inequalities based on DKW inequalities to bound the difference between the IPCW estimator and its expectation. 

\textbf{Vapnik-Chevronenkis (VC) Theory.} In binary classification with the 0-1 loss function, the VC theory \citep[Thm 12.6]{pollard2012convergence} provides a bound on the maximum difference between the empirical estimates of whether an event happens and its expectation over IID data. It states that the difference could be bounded by $8S(\mathcal{F},n)\exp(-n\eta^2/32)$, where $S(\mathcal{F},n)$ is the shattering coefficient of $\mathcal{F}$ and $n$ is the sample size. The shattering coefficient satisfies $S(\mathcal{F},n) \leq (n+1)^d$ with finite VC dimension $d$ so that the generalization error of linear classifiers with one dimensional IID samples could be bounded as $8(n+1)^2\exp(-n\eta^2/32)$. Alternatively, as the VC theory is a mighty generalization of Gilvenko-Cantelli (GC) bound \citep[Thm 12.4]{pollard2012convergence}, when considering the deviation between the empirical and true CDFs, it takes the form of $8(n+1)\exp(-n\eta^2/32)$ by replacing $S(\mathcal{F},n)$ with $n+1$. In our study, we employ the DKW bound, which is an extension of the GC bound by eliminating the logarithmic term $(n+1)$. The main distinction between our bounds and the VC/GC bounds lies in the assumption that the data input is non-IID over the entire data domain, while it may be treated as IID within each subdomain. 

\section{Proofs}\label{app:all_proofs}

\subsection{Proof of Lemma~\ref{lemma:left}} \label{app:lemma_left}

\begin{proof}Let $G$ and $G_{m}$ be the theoretical and empirical CDFs of $Z$ in the censored region, respectively.

\noindent For the full c.d.f. in $(-\infty, \theta)$, we have 
\[F(x) = \mathbbm{P}(X \leq x), \hspace{0.2in} F_{n}(x) = \frac{1}{n}\sum_{i=1}^{n} \mathbf{1}_{X_i \leq x}, \hspace{0.1in} \forall i, x \in (-\infty, \theta)\]
For the censored c.d.f. in $(-\infty, \theta)$, we have
\[G(x) = \mathbbm{P}(Z \leq x), \hspace{0.2in} G_{m}(x) = \frac{1}{m}\sum_{i=1}^{m} \mathbf{1}_{Z_i \leq x}, \hspace{0.1in} \forall i, x \in (-\infty, \theta)\]
Hence, for any fixed $x \in (-\infty,\theta)$, we have
\[\sum_{i=1}^{m} \mathbf{1}_{Z_i \leq x} = \sum_{i=1}^{n} \mathbf{1}_{X_i \leq x} = A\]
and 
\[F_{n}(x) = \frac{1}{n}\sum_{i=1}^{n} \mathbf{1}_{X_i \leq x} = \frac{1}{n} A = \frac{m}{n} \cdot \frac{1}{m}A = \frac{m}{n} G_{m}(x)\]
Similarly, 
\[G(x) = \mathbbm{P}(Z \leq x) = \mathbbm{P}(X \leq x | X \leq \theta) = \frac{\mathbbm{P}(X \leq x)}{\mathbbm{P}(X \leq \theta)} = \frac{F(x)}{\alpha}\]
Therefore, we have 
\begin{align*}
    \sup_{x \in (-\infty, \theta)}&|F-F_{n}| = \sup_{x \in (-\infty, \theta)}\Big|\alpha G - \frac{m}{n} G_{m} \Big| \\
    &\qquad =  \sup_{x \in (-\infty, \theta)}\Big|\alpha G  - \alpha G_{m} + \alpha G_{m} - \frac{m}{n} G_{m} \Big|\\
    &\qquad \leq \sup_{x \in (-\infty, \theta)}\Big|\alpha (G  - G_{m})\Big| + \sup_{x \in (-\infty, \theta)}\Big|(\alpha - \frac{m}{n}) G_{m} \Big|\\
    &\qquad = \sup_{x \in (-\infty, \theta)}\Big|\alpha (G  - G_{m})\Big| + \Big|\alpha - \frac{m}{n}\Big|
\end{align*}
where the last equality holds since $|\alpha - \frac{m}{n}|$ is constant once $\theta$ is fixed and by the monotonic property of the CDF curve. Notice that, by the definition of $G_{m} = \frac{1}{m}\sum_{i} \mathbf{1}_{Z_i \leq \theta} = 1$. We could instead choose to add and subtract $\frac{m}{n} G$, which would yield a similar result. Choosing the smaller of the two gives us the bound.
\end{proof}

\subsection{Proof of Lemma~\ref{lemma:right}} \label{app:lemma_right}

\begin{proof}
Let $K$ and $K_{n-m+k}$ be the theoretical and empirical CDFs of $Z$ in the disclosed region, respectively.
\[K(x) = \mathbb{P}(Z\leq x), \hspace{0.1in} K_{n-m+k}(x) = \frac{1}{n-m+k}\sum_{i=1}^{n-m+k} \mathbbm{1}(Z_i \leq x), \hspace{0.1in} \forall x \in (\theta, \infty).\]

Since $Z = \{X|X \geq \theta\}$, then we have 
\[K(x) = \mathbbm{P}(Z \leq x) = \mathbbm{P}(X \leq x | X \geq \theta) = \frac{\mathbbm{P}(\theta \leq X \leq x)}{\mathbbm{P}(X \geq \theta)} = \frac{F(x)-\alpha}{1-\alpha}\]
In terms of the empirical CDF, the full empirical CDF on variable $X$ in $(\theta, \infty)$ can be viewed as starting at the $F_n(\theta)=\frac{m}{n}$, then increasing to 1 as $x$ increases with the total increment $\frac{n-m}{n}$. However, for the increment, the only difference between $F_{n+k}$ and $K_{n-m+k}$ is that the earlier one focuses on the variable $X$ while the latter one focuses on the variable $Z = \{X|X\geq \theta\}$. Note that, the notation $F_{n+k}$, similar to $F_n$, means the empirical CDF is evaluated using $n+k$ samples. Hence, with additional $k$ samples, we can use a scaling factor $\frac{n-m}{n}$ to write the CDF such that 
\[F_{n+k}(x) = \frac{m}{n} +  \frac{n-m}{n}\frac{1}{n-m+k}\sum_{i=1}^{n-m+k} \mathbf{1}_{\theta \leq X_i \leq x} , \hspace{0.1in} \forall i, x \in (\theta, \infty)\]
Then, for any fixed $x \in (\theta, \infty)$, we have
\[F_{n+k}(x) - \frac{m}{n} = \frac{n-m}{n}\frac{1}{n-m+k}\sum_{i=1}^{n-m+k} \mathbf{1}_{\theta \leq X_i \leq x} = \frac{n-m}{n} K_{n-m+k}(x)\]
Therefore, we have 
\begin{align*}
    \sup_{x \in (\theta, \infty)}&|F-F_{n+k}| = \sup_{x \in (\theta, \infty)}\Big|(1-\alpha) K + \alpha - \frac{n-m}{n} K_{n-m+k} - \frac{m}{n}\Big| \\
    &\qquad \hspace{-0.5in} \leq \sup_{x \in (\theta, \infty)}\Big|(1-\alpha) K - \frac{n-m}{n} K_{n-m+k}\Big| + \Big|\alpha - \frac{m}{n}\Big|  \\
    &\qquad \hspace{-0.5in} =  \sup_{x \in (\theta, \infty)}\Big|(1-\alpha) K  - (1-\alpha) K_{n-m+k} + (1-\alpha) K_{n-m+k} - \frac{n-m}{n} K_{n-m+k}\Big| + \Big|\alpha - \frac{m}{n}\Big|\\
    &\qquad \hspace{-0.5in} \leq \sup_{x \in (\theta, \infty)}\Big|(1-\alpha) (K  - K_{n-m+k})\Big| + 2\Big|\alpha - \frac{m}{n}\Big|
\end{align*}
Similarly, we could instead choose to add and subtract $\frac{n-m}{n}K$ and choose the smaller of the two.  
\end{proof}

\subsection{Proof of Theorem~\ref{thm:two_subdomains}} \label{app:thm2}

\begin{proof}
\begin{align*}
    &\mathbb{P}\bigg(\sup_{x\in R} \Big|F(x) - F_{n+k}(x)\Big|\geq \eta \bigg) \\
    &\qquad \hspace{-0.2in} = \mathbb{P}\bigg(\max\bigg(\sup_{x\in (-\infty, \theta)} \Big|F(x) - F_{n}(x)\Big|, \sup_{x\in (\theta, \infty)} \Big|F(x) - F_{n+k}(x)\Big|\bigg)\geq \eta \bigg)\\
    &\qquad \hspace{-0.2in} = \mathbb{P}\bigg(\sup_{x\in (-\infty, \theta)} \Big|F(x) - F_{n}(x)\Big| \geq \eta ~~ \text{ or } \sup_{x\in (\theta, \infty)} \Big|F(x) - F_{n+k}(x)\Big|\geq \eta \bigg)\\
    &\qquad \hspace{-0.2in} \leq  \mathbb{P}\bigg(\sup_{x\in (-\infty, \theta)} \Big|F(x) - F_{n}(x)\Big| \geq \eta \bigg) + \mathbb{P}\bigg(\sup_{x\in (\theta, \infty)} \Big|F(x) - F_{n+k}(x)\Big|\geq \eta  \bigg)  \\
    &\qquad \hspace{-0.2in} \leq \mathbb{P}\bigg(\sup_{x \in (-\infty, \theta)}\Big|\min\Big(\alpha, \frac{m}{n}\Big) (G  - G_{m})\Big| + \Big|\alpha - \frac{m}{n} \Big| \geq \eta \bigg)  \\
    &\qquad \hspace{0.2in} + \mathbb{P}\bigg(\sup_{x \in (\theta, \infty)}\Big|\min\Big(1-\alpha,\frac{n-m}{n}\Big) (K  - K_{n-m+k})\Big| + 2\Big|\alpha - \frac{m}{n} \Big| \geq \eta \bigg) \allowdisplaybreaks\\
    &\qquad \hspace{-0.2in} = \mathbb{P}\bigg(\sup_{x \in (-\infty, \theta)}\Big|G  - G_{m}\Big| \geq \frac{\eta-\Big|\alpha - \frac{m}{n} \Big| }{\min\Big(\alpha, \frac{m}{n}\Big)} \bigg) \\
    &\qquad \hspace{0.2in} +  \mathbb{P}\bigg(\sup_{x \in (\theta, \infty)}\Big|K - K_{n-m+k}\Big| \geq \frac{\eta -2\Big|\alpha - \frac{m}{n} \Big|}{\min\Big(1-\alpha,\frac{n-m}{n}\Big)} \bigg) \\
    &\qquad \hspace{-0.2in} \leq 2\exp\Big({\tfrac{-2m(\eta-|\alpha - \frac{m}{n}|)^2}{\min\big(\alpha, \frac{m}{n}\big)^2}}\Big) + 2\exp\Big({\tfrac{-2(n-m+k)(\eta-2|\alpha - \frac{m}{n}|)^2}{\min\big(1-\alpha,\frac{n-m}{n}\big)^2}}\Big)
    \end{align*}

where the first inequality is obtained by the union bound, the second inequality is obtained according to Lemma ~\ref{lemma:left} and 
 \ref{lemma:right}, and the last inequality is obtained by plugging in the DKW inequality introduced in Theorem ~\ref{thm:GC_theorem}. 
\end{proof}

\subsection{Proof of Theorem~\ref{thm:three_subdomains}} \label{app:thm3}

\begin{proof}
\begin{align*}
    &\mathbb{P}\bigg(\sup_{x\in R} \Big|F(x) - F_{n+k_e+k_d}(x)\Big|\geq \eta \bigg) \\ 
    &\qquad \hspace{-0.2in} = \mathbb{P}\bigg(\sup_{x\in (-\infty, \text{LB})} \Big|F(x) - F_{n}(x)\Big| \geq \eta \text{ or}
    \sup_{x\in (\text{LB},\theta)} \Big|F(x) - F_{n+k_e}(x)\Big| \geq \eta\\
     &\qquad \hspace{2.8in}\text{ or} \sup_{x\in (\theta, \infty)} \Big|F(x) - F_{n+k_e+k_d}(x)\Big|\geq \eta \bigg)\\
    &\qquad \hspace{-0.2in} \leq \mathbb{P}\bigg(\sup_{x \in (-\infty, \text{LB})}\Big|\min\Big(\beta, \frac{l}{n}\Big) (G  - G_{l})\Big| + \Big|\beta - \frac{l}{n} \Big| \geq \eta \bigg) \allowdisplaybreaks \\
    &\qquad  + \mathbb{P}\bigg(\sup_{x \in (\text{LB}, \theta)}\Big|\min\big(\alpha-\beta, \tfrac{n-l}{n}\tfrac{m-l+k_e}{n-l+k_e+\epsilon k_d}\big)(E  - E_{m-l+k_e})\Big| \\
    &\qquad \hspace{2.4in} + \Big|\alpha-\beta - \tfrac{n-l}{n}\tfrac{m-l+k_e}{n-l+k_e+\epsilon k_d} \Big|+\Big|\beta - \frac{l}{n} \Big| \geq \eta \bigg)\\
    &\qquad  + \mathbb{P}\bigg(\sup_{x \in (\theta, \infty)}\Big|\min\Big(1-\alpha, \tfrac{n-l}{n}\tfrac{n-m+\epsilon k_d}{n-l+k_e+\epsilon k_d}\Big)(K  - K_{n-m+k_d})\Big| \\
    &\qquad \hspace{2.4in} + 2\Big|\alpha - \tfrac{l}{n} -  \tfrac{n-l}{n}\tfrac{m-l+k_e}{n-l+k_e+\epsilon k_d} \Big| \geq \eta \bigg)  \\ 
    &\qquad \hspace{-0.2in} = \mathbb{P}\bigg(\sup_{x \in (-\infty, \text{LB})}\Big|G  - G_{l}\Big| \geq \frac{\eta-\Big|\beta - \frac{l}{n} \Big| }{\min\Big(\beta, \frac{l}{n}\Big)}\bigg) \\
    &\qquad + \mathbb{P}\bigg(\sup_{x \in (\text{LB}, \theta)}\Big|E - E_{m-l+k_e}\Big| \geq \frac{\eta -\Big|\beta - \frac{l}{n}\Big| - \Big|\alpha -\beta - \frac{n-l}{n}\frac{m-l+k_e}{n-l+k_e+\epsilon k_d}\Big|}{\min\Big(\alpha-\beta,\frac{n-l}{n}\frac{m-l+k_e}{n-l+k_e+\epsilon k_d}\Big)} \bigg)  \\
    &\qquad + \mathbb{P}\bigg(\sup_{x \in (\theta, \infty)}\Big|K - K_{n-m+k_d}\Big| \geq \frac{\eta -2\Big|\alpha - \frac{l}{n} -  \frac{n-l}{n}\frac{m-l+k_e}{n-l+k_e+\epsilon k_d} \Big|}{\min\Big(1-\alpha,\frac{n-l}{n}\frac{n-m+\epsilon k_d}{n-l+k_e+\epsilon k_d}\Big)} \bigg) \\
    &\qquad \hspace{-0.2in} \leq 2\exp({\tfrac{-2l(\eta-|\beta - \frac{l}{n} |)^2}{\min\Big(\beta, \frac{l}{n}\Big)^2}}) \\
    &\qquad  + 2\exp({\tfrac{-2(m-l+k_e)\Big(\eta-|\beta - \frac{l}{n}| - \Big|\alpha -\beta - \frac{n-l}{n}\frac{m-l+k_e}{n-l+k_e+\epsilon k_d}\Big|\Big)^2}{\min\Big(\alpha-\beta,\frac{n-l}{n}\frac{m-l+k_e}{n-l+k_e+\epsilon k_d}\Big)^2}})\\
    &\qquad + 2\exp({\tfrac{-2(n-m+k_d)(\eta-2|\alpha - \frac{l}{n} -  \frac{n-l}{n}\frac{m-l+k_e}{n-l+k_e+\epsilon k_d}|)^2}{\min\Big(1-\alpha,\frac{n-l}{n}\frac{n-m+\epsilon k_d}{n-l+k_e+\epsilon k_d}\Big)^2}})
    \end{align*}

The proof is similar to the proof of Theorem~\ref{thm:two_subdomains}, with the introduction of exploration region and re-estimated empirical estimates $\alpha$. 
\end{proof}

\subsection{Proof of Proposition~\ref{prop:B(LB) < B(theta)}} \label{app:prop1}

\begin{proof}
Before we dive into the proof of the proposition, we first show that the impact of the estimation error under different $\theta$ is minimal compared to the change of $\theta$. 

Let's consider the censored region (first term) in the above inequality, and w.l.o.g., we assume that $m/n < \alpha$ and $\alpha - m/n = u$. Then, the first term could be written as 
\[2\exp\Big(\tfrac{-2m(\eta-u)^2}{(\frac{m}{n})^2}\Big) = 2\exp\Big(\tfrac{-2n^2(\eta-u)^2}{m}\Big)\]
As $\theta$ decreases, $m$ decreases. It is worth noting that $E(u)=0$ according to the law of large numbers. In the following analysis, we measure the change as factor $b, b'$ for $m, u$, respectively, and denote $\Delta_m, \Delta_u$ as the change in the output. We also assume $0 \leq b, b' \leq 1$, and use it as a proxy to show the decreasing in $\theta$. In other words, we are analyzing the impact of change in $u$ compared to the change in $m$ as $\theta$ decreases, our objective is to show the following 
\[\frac{\Delta_m}{\Delta_u}=\frac{2\exp\Big(\tfrac{-2n^2(\eta-u)^2}{m}\Big) - 2\exp\Big(\tfrac{-2n^2(\eta-u)^2}{bm}\Big)}{2\exp\Big(\tfrac{-2n^2(\eta-u)^2}{m}\Big) -2\exp\Big(\tfrac{-2n^2(\eta-b'u)^2}{m}\Big)} > 1\]
which is equivalent to show \[\frac{ \exp\Big(\tfrac{-2n^2(\eta-u)^2}{bm}\Big)}{\exp\Big(\tfrac{-2n^2(\eta-b'u)^2}{m}\Big)} < 1\]
Since we have, 
\begin{align*}
    \frac{ \exp\Big(\frac{-2n^2(\eta-u)^2}{bm}\Big)}{\exp\Big(\frac{-2n^2(\eta-b'u)^2}{m}\Big)} &= \exp(\frac{-2n^2(\eta-u)^2}{bm}- \frac{-2n^2(\eta-b'u)^2}{m})\\
    &\qquad = \exp(\frac{-\frac{2}{b}n^2(\eta-u)^2 + 2n^2(\eta-b'u)^2}{m})\\
    &\qquad = (\exp(-\frac{2n^2}{m}))^{(\frac{1}{b}(\eta-u)^2 - (\eta - b'u)^2)} \\
\end{align*}
It is clear that the exponent is greater than 0 since the change magnitude of $m$ could be much greater than that of $u$ as $\theta$ decreases, meaning that $0<b<<b'\approx 1$. Therefore, the impact of the estimation error under different $\theta$ is minimal compared to the change of $\theta$. As a result, the first term decreases as $\theta$ decreases, primarily due to the decrease in the denominator. 

Regarding the disclosed region (second term), the sample size increases due to a reduction in the number of samples ($m$) in the censored region and the additional samples ($k$) collected from the disclosed region. Similar to the discussion in the proof of Proposition~\ref{prop:B(LB) < B(theta)}, let $k = c(n-m)$, where $c$ is a scalar representing how many additional samples we could collect proportionally above the decision threshold. If $c \geq \frac{(n-m)(\eta-u)^2}{m(\eta-2u)^2}-1$, as more additional samples are collected, it will contribute to making the second term vanish and become dominated by the first term. Consequently, as $\theta$ decreases, the bounds also decrease, leading to $B(\theta)$ decreases as $\theta$ decreases.
\end{proof}

\subsection{Proof of Proposition~\ref{prop:B(LB, theta, 1) < B(LB, theta, epsilon)}} \label{app:prop2}

\begin{proof}
As $\epsilon$ decreases, the censored region (first term) remains unchanged since the change in $\epsilon$ only affects the value of $k_1$. For the disclosed region (third term), similar to Proposition~\ref{prop:B(LB) < B(theta)}, the impact of the estimation error $|\alpha - \frac{l}{n} -  \frac{n-l}{n}\frac{m-l+k_e}{n-l+k_e+\epsilon k_d}|$ under different values of $\epsilon$ is minimal compared to the change in $k_e$. Therefore, the whole expression is dominated by the change in the exploration region (second term).

For the exploration region, it is worth noting that when $\epsilon = 1$, we collect $k_e$ samples in the exploration region. Therefore, if $\epsilon < 1$, the number of $k_e$ will decrease leading to an increase in the exponential term. Thus, as $\epsilon$ becomes smaller, the bound in the exploration region increases, resulting in an overall increase in the bounds. This implies that $B^e(LB, \theta, \epsilon)$ is decreasing in $\epsilon$.
\end{proof}

\subsection{Proof of Theorem~\ref{thm:error}} \label{app:thm4}

\begin{proof}
Let $F^1_{n_1+k_1}(\theta):= \frac{1}{n_1}\sum_{(x_i,y_i)} \mathbbm{1}{\{x_i \leq \theta,y_i = 1\}}$ and $F^0_{n_0+k_0}(\theta):= \frac{1}{n_0}\sum_{(x_i,y_i)} \mathbbm{1}{\{x_i \leq \theta,y_i = 0\}}$ be the empirical estimate of the area of the censored region for the corresponding label. 

\begin{align*}
    &\Big|R(\hat{\theta}) - R_{emp}(\hat{\theta})\Big| \leq \sup_{\theta} \Big|R(\theta) - R_{emp}(\theta)\Big|\\
    &\qquad = \sup_{\theta} \Big|p_1F^1(\theta) + p_0\Big(1-F^0(\theta)\Big) - \frac{n_1}{n}F^1_{n_1+k_1}(\theta)- \frac{n_0}{n}\Big(1-F^0_{n_0+k_0}(\theta)\Big)\Big|\\
    &\qquad \leq \sup_{\theta} \Big|p_1F^1(\theta) - \frac{n_1}{n}F^1_{n_1+k_1}(\theta)\Big| + \sup_{\theta} \Big|p_0F^0(\theta) - \frac{n_0}{n}F^0_{n_0+k_0}(\theta)\Big| +|p_0 - \frac{n_0}{n}|\\
    &\qquad \leq \sum_{y \in \{0,1\}}\min\Big(p_y, \frac{n_y}{n}\Big)\sup_{\theta}\Big|F^y(\theta) - F^y_{n_y+k_y}(\theta)\Big| + 3|p_0 - \frac{n_0}{n}|
\end{align*}
The last inequality follows the same technique as the proof of Lemma~\ref{app:lemma_left}.

\end{proof}

\section{Ex-ante analysis bounds with exploration }\label{app:partial_full_exploration_bounds}

\textbf{Bounds with exploration under the \emph{ex-ante analysis} case:}
\begin{corollary}\label{thm:three_subdomains_full}
Let $x_1, x_2, \ldots, x_n$ be random initial data samples, drawn IID from \emph{single} distribution with CDF $F(x)$. Let the optimal $\theta$, derived from a collection of initial data samples with \emph{both} labels, and $LB$ partition the domain into three regions, such that $\beta=F(LB)$ and $\alpha=F(\theta)$, with $l$ and $m$ of the initial $n$ samples located to the left of $LB$ and $\theta$, respectively. Assume we have collected an additional $k_e$ samples between $LB$ and $\theta$, under an exploration probability $\epsilon$, and an additional number of $k_d$ samples above $\theta$. Let $F_{n+k_e+k_2}(x)$ denote the empirical CDF estimated from these $n+k_e+k_d$ non-IID samples. Then, for every $\eta, k_e, k_d >0$,
{
\begin{align*}
    &\mathbb{P}\bigg(\sup_{x\in R} \Big|F(x) - F_{n+k_e+k_d}(x)\Big|\geq \eta \bigg) \\ 
    &\qquad \hspace{-0.2in} \leq \int_{-\infty}^{\infty} \Bigg [\sum_{m=1}^{n-1}\sum_{l=1}^{m-1} \Bigg(2\exp({\tfrac{-2l(\eta-|\beta - \frac{l}{n} |)^2}{\min\Big(\beta, \frac{l}{n}\Big)^2}}) \\
    &\qquad  \hspace{1in}+ 2\exp({\tfrac{-2(m-l+k_e)\Big(\eta-|\beta - \frac{l}{n}| - \Big|\alpha -\beta - \frac{n-l}{n}\frac{m-l+k_e}{n-l+k_e+\epsilon k_d}\Big|\Big)^2}{\min\Big(\alpha-\beta,\frac{n-l}{n}\frac{m-l+k_e}{n-l+k_e+\epsilon k_d}\Big)^2}})\\
    &\qquad \hspace{1in} + 2\exp({\tfrac{-2(n-m+k_d)(\eta-2|\alpha - \frac{l}{n} -  \frac{n-l}{n}\frac{m-l+k_e}{n-l+k_e+\epsilon k_d}|)^2}{\min\Big(1-\alpha,\frac{n-l}{n}\frac{n-m+\epsilon k_d}{n-l+k_e+\epsilon k_d}\Big)^2}})\Bigg)\mathbb{P}(l,m|\theta) \\
    &\qquad \hspace{-0.2in}+ \sum_{m=1}^{n-1}\Bigg(1 + \exp({\tfrac{-2(m+k_e)\Big(\eta-\beta - \Big|\alpha -\beta - \frac{m+k_e}{n-l+k_e+\epsilon k_d}\Big|\Big)^2}{\min\Big(\alpha-\beta,\frac{m+k_e}{n-l+k_e+\epsilon k_d}\Big)^2}}) \\
    &\qquad \hspace{1in} +\exp({\tfrac{-2(n-m+k_d)(\eta-2|\alpha - \frac{m+k_e}{n+k_e+\epsilon k_d}|)^2}{\min\Big(1-\alpha,\frac{n-m+\epsilon k_d}{n+k_e+\epsilon k_d}\Big)^2}})\Bigg)\mathbb{P}(l=0|\theta)  \\
    &\qquad \hspace{-0.2in}+ \sum_{m=1}^{n-1}\Bigg(\exp({\tfrac{-2m(\eta-|\beta - \frac{m}{n} |)^2}{\min\Big(\beta, \frac{m}{n}\Big)^2}})+\exp({\tfrac{-2k_e\Big(\eta-|\beta - \frac{m}{n}| - \Big|\alpha -\beta - \frac{n-m}{n}\frac{k_e}{n-m+k_e+\epsilon k_d}\Big|\Big)^2}{\min\Big(\alpha-\beta,\frac{n-m}{n}\frac{k_e}{n-m+k_e+\epsilon k_d}\Big)^2}})\\ 
    &\qquad \hspace{1in}+\exp({\tfrac{-2(n-m+k_d)(\eta-2|\alpha - \frac{m}{n} -  \frac{n-m}{n}\frac{k_e}{n-m+k_e+\epsilon k_d}|)^2}{\min\Big(1-\alpha,\frac{n-m}{n}\frac{n-m+\epsilon k_d}{n-m+k_e+\epsilon k_d}\Big)^2}})\Bigg)\mathbb{P}(l=m|\theta) \allowdisplaybreaks \\
    &\qquad + \sum_{l=1}^{m-1} \Bigg (\exp({\tfrac{-2l(\eta-|\beta - \frac{l}{n} |)^2}{\min\Big(\beta, \frac{l}{n}\Big)^2}})+\exp({\tfrac{-2(n-l+k_e)\Big(\eta-|\beta - \frac{l}{n}| - \Big|\alpha -\beta - \frac{n-l}{n}\frac{n-l+k_e}{n-l+k_e+\epsilon k_d}\Big|\Big)^2}{\min\Big(\alpha-\beta,\frac{n-l}{n}\frac{n-l+k_e}{n-l+k_e+\epsilon k_d}\Big)^2}}) \\
    &\qquad \hspace{1in}+ \exp({\tfrac{-2k_d(\eta-2|\alpha - \frac{l}{n} -  \frac{n-l}{n}\frac{n-l+k_e}{n-l+k_e+\epsilon k_d}|)^2}{\min\Big(1-\alpha,\frac{n-l}{n}\frac{\epsilon k_d}{n-l+k_e+\epsilon k_d}\Big)^2}})\Bigg)\mathbb{P}(m=n|\theta)\\
    &\qquad + 2\Bigg(1+\exp({\tfrac{-2k_e\Big(\eta-\beta - \Big|\alpha -\beta - \frac{k_e}{n+k_e+\epsilon k_d}\Big|\Big)^2}{\min\Big(\alpha-\beta,\frac{k_e}{n+k_e+\epsilon k_d}\Big)^2}}) \\
    &\qquad \hspace{1in}+ \exp({\tfrac{-2(n-k_d)(\eta-2|\alpha - \frac{k_e}{n+k_e+\epsilon k_d}|)^2}{\min\Big(1-\alpha,\frac{n+\epsilon k_d}{n+k_e+\epsilon k_d}\Big)^2}})\Bigg)\mathbb{P}(l=0,m=0|\theta)\allowdisplaybreaks \\
    &\qquad + 2\Bigg(\exp({\tfrac{-2n(\eta-|\beta - 1|)^2}{\beta^2}})+1 + 1\Bigg)\mathbb{P}(l=n,m=n|\theta)\\
    &\qquad + 2\Bigg(1+\exp({\tfrac{-2(n+k_e)\Big(\eta-\beta - \Big|\alpha -\beta - \frac{n+k_e}{n+k_e+\epsilon k_d}\Big|\Big)^2}{\min\Big(\alpha-\beta,\frac{n+k_e}{n+k_e+\epsilon k_d}\Big)^2}}) \\
    &\qquad \hspace{1in}+ \exp({\tfrac{-2k_d(\eta-2|\alpha - \frac{n+k_e}{n+k_e+\epsilon k_d}|)^2}{\min\Big(1-\alpha,\frac{\epsilon k_d}{n+k_e+\epsilon k_d}\Big)^2}})\Bigg) \mathbb{P}(l=0,m=n|\theta)\Bigg ]\mathbb{P}(\theta)d\theta
    \end{align*}}
\end{corollary}

Similar to the Theorem~\ref{thm:two_subdomains_full}, this \emph{ex-ante analysis} case considers the decision threshold $\theta$ as a data-dependent random variable and applies the law of total probability by conditioning on the realization of $\theta$, $l$ of samples in the censored region, and $m-l$ of samples in the exploration region. 
 
\section{Expected and empirical loss derivation} \label{app:loss_derivation}

Following the setup described in Section~\ref{sec:model}, let $F^y$ and $F_n^y$ denote the CDFs and empirical CDFs, respectively, with $n$ realizations of the distribution of label $y \in \{0,1\}$. We assume that among these $n$ samples, $n_1$ of them belong to qualified agents ($y=1$), while the remaining ones are from unqualified agents ($y=0$). We suppose the model is equipped with a 0-1 loss function and a threshold-based binary classifier $f_{\theta}(x):\mathcal{X}\rightarrow \{0,1\}$. We then can derive the expected and empirical loss as follows:
\begin{align*}
    R(\theta)&=E_{XY}l(f(X),Y)\\
    &\qquad=\mathbbm{P}(Y=1)\cdot E_{X|Y=1}l(f(X),Y=1) + \mathbbm{P}(Y=0)\cdot E_{X|Y=0}l(f(X),Y=0)\\
    &\qquad = p_1 E\Big[\mathbbm{1}\{f(X) \neq Y, Y=1\}\Big] + p_0 E\Big[\mathbbm{1}\{f(X) \neq Y, Y=0\}\Big]\\
    &\qquad = p_1 \mathbbm{P}\Big(f(X) \leq \theta, Y=1\Big) + p_0 \mathbbm{P}\Big(f(X) \geq \theta, Y=0\Big)\\
    &\qquad = p_1F^1(\theta) + p_0(1-F^0(\theta))
\end{align*}
\begin{align*}
    R_{emp}(\theta) &=\frac{1}{n}\sum_{(x_i,y_i)} l(f(x_i),y_i) = \frac{1}{n}\sum_{(x_i,y_i)} \mathbbm{1}\{f(x_i) \neq y_i\}\\
    &\qquad = \frac{1}{n}\Big[\sum_{y_i = 1} \mathbbm{1}\{f(x_i) \neq 1\} + \sum_{y_i = 0} \mathbbm{1}\{f(x_i) \neq 0\}\Big]\\
    &\qquad =\frac{n_1}{n}\frac{1}{n_1}\sum_{(x_i,y_i)} \mathbbm{1}{\{x_i \leq \theta,y_i = 1\}} + \frac{n_0}{n}\Big(1-\frac{1}{n_0}\sum_{(x_i,y_i)} \mathbbm{1}{\{x_i \leq \theta,y_i = 0\}}\Big)
\end{align*}

\section{Analysis with higher dimensional samples} \label{app:higher_dimension}

In this section, we expand our analysis from real-valued data to a higher dimensional space using the multivariate DKW inequality introduced in \citep{naaman2021tight}. We first state the inequality and subsequently show how we will split the data domain to accommodate censored feedback. Although this analysis applies to samples with $k>1$ dimensions, for the sake of notation simplicity, we focus on samples with two dimensions.  

\begin{thm}[{Multivariate DKW inequality \citep{naaman2021tight}}]\label{thm:multivariate_dkw}
Let $Z_1, \ldots, Z_n$ be $k$ dimensional IID random variables with the empirical distribution function be $F_n(z) = \frac{1}{n}\sum_{i=1}^{n}\mathbbm{1}(Z_i\leq z)$. Let $F(z)$ be the expectation of $F_n(z)$. Then, for every $n$ and $\eta {> 0}$,
\[\mathbb{P}\bigg(\sup_{z\in \mathbb{R}^k} \Big|F(z) - F_n(z)\Big|\geq \eta \bigg) \leq 2k\exp{(-2n\eta^2)}~.\]
\end{thm}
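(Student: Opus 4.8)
The plan is to reduce the $k$-dimensional statement to repeated applications of the one-dimensional DKW inequality (Theorem~\ref{thm:GC_theorem}), arranged so that only the prefactor, and not the exponent, grows with the dimension. First I would normalize the marginals: applying the probability integral transform to each coordinate separately replaces $Z_i$ by a vector with uniform $[0,1]$ marginals. Since each such transform is nondecreasing, it maps lower-left orthants to lower-left orthants and leaves $\sup_{z}|F(z)-F_n(z)|$ unchanged. Hence it suffices to prove the bound when every marginal is uniform, which removes the nuisance of differing marginal shapes across coordinates.

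Next I would argue by induction on the dimension $k$. The base case $k=1$ is exactly Theorem~\ref{thm:GC_theorem}, delivering the prefactor $2$. For the inductive step, I would condition on the first $k-1$ coordinates and decompose the orthant indicator multiplicatively as $\mathbbm{1}(Z \le z)=\mathbbm{1}(Z^{(1:k-1)}\le z^{(1:k-1)})\,\mathbbm{1}(Z^{(k)}\le z^{(k)})$. The target of the decomposition is a set containment of the form
\[\Big\{\sup_{z\in\mathbb{R}^k}|F(z)-F_n(z)|\ge \eta\Big\}\subseteq \Big\{\sup_{z'\in\mathbb{R}^{k-1}}|F^{(k-1)}(z')-F_n^{(k-1)}(z')|\ge \eta\Big\}\cup\Big\{D_k\ge \eta\Big\},\]
where $F^{(k-1)}$ is the $(k-1)$-dimensional marginal CDF and $D_k$ is a one-dimensional deviation associated with the last coordinate. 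Crucially, each of the two events on the right must carry the \emph{full} tolerance $\eta$: the monotonicity of CDFs together with the product structure of the indicators should let any multivariate overshoot be witnessed by an overshoot in a single stage rather than shared additively across stages. Applying the inductive hypothesis to the first event (prefactor $2(k-1)$) and Theorem~\ref{thm:GC_theorem} to the second (prefactor $2$), the union bound then yields $2(k-1)\exp(-2n\eta^2)+2\exp(-2n\eta^2)=2k\exp(-2n\eta^2)$.

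The main obstacle is precisely establishing the containment above with the full tolerance $\eta$ preserved on each sub-event. A naive telescoping $F_n-F=\sum_j (F^{(j)}-F^{(j-1)})$ bounds the deviation by a \emph{sum} of $k$ one-dimensional deviations, which forces each to be controlled at level $\eta/k$ and degrades the exponent to $(\eta/k)^2$ --- exactly what must be avoided to obtain the tight linear-in-$k$ prefactor. The delicate part is therefore to replace this additive split by a single-witnessing-coordinate argument: showing that whenever the joint empirical CDF deviates by at least $\eta$ somewhere, either the $(k-1)$-dimensional marginal empirical process already deviates by $\eta$, or the conditional one-dimensional empirical distribution in the last coordinate does. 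Making this rigorous is subtle because the conditional law in the last coordinate is itself data-dependent; handling the conditioning so that Theorem~\ref{thm:GC_theorem} still applies to genuinely IID one-dimensional samples is where the real work of \citet{naaman2021tight} lies, and I would follow their treatment for that step rather than reproving the tight constant from scratch.
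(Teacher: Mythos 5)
The paper does not prove this statement: Theorem~\ref{thm:multivariate_dkw} is imported verbatim from \citet{naaman2021tight} and justified by citation alone, so there is no in-paper argument to compare yours against. Judged on its own terms, your proposal has a genuine gap, and you in fact name it yourself: the set containment
\[
\Big\{\sup_{z}|F(z)-F_n(z)|\ge \eta\Big\}\subseteq \Big\{\sup_{z'}|F^{(k-1)}(z')-F_n^{(k-1)}(z')|\ge \eta\Big\}\cup\{D_k\ge \eta\}
\]
with the \emph{full} tolerance $\eta$ carried by each sub-event is the entire content of the theorem, and you defer its proof to \citet{naaman2021tight}. A proof whose one nontrivial step is outsourced to the reference being proved is a citation, not a proof.

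Moreover, the containment is not merely unproved but delicate to the point of being false in the form closest to what you write. Take $k=2$ with independent uniform marginals and a sample configuration concentrated near the diagonal, say points at $(i/n,i/n)$: both one-dimensional marginal empirical processes deviate by at most $1/n$, yet $F_n(1/2,1/2)-F(1/2,1/2)\approx 1/4$, because the joint deviation is driven entirely by the empirical dependence structure, which no marginal sees. Since the containment is a deterministic set inclusion, this single configuration rules out any $D_k$ that is a deviation of the last coordinate's marginal; $D_k$ must instead be a conditional quantity involving a supremum over the first $k-1$ coordinates, evaluated on data-dependent conditioning sets $\{i: Z_i^{(1:k-1)}\le z'\}$ of random, $z'$-varying cardinality. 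Theorem~\ref{thm:GC_theorem} does not apply to such an object with prefactor $2$, and controlling it without splitting the tolerance (which would degrade the exponent to $-2n(\eta/k)^2$, as you correctly note) is exactly where the work lies. The probability-integral-transform reduction and the induction scaffold are fine as far as they go, but they contribute nothing toward the constant $2k$ or the undegraded exponent $-2n\eta^2$; those come entirely from the step you skip. For the purposes of this paper, the correct justification is simply the citation to \citet{naaman2021tight}.
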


In the 2D space, the threshold classifier is a linear classifier, with the decision boundary (line) represented as ($\mathbf{W}^T\mathbf{X}-b = 0$). Here, the bold terms denote vector forms, where $\mathbf{W}^T = [w_1, w_2]$ and $\mathbf{X}^T = [X_1, X_2]$. This line splits the full data domain ($\mathbb{R}^2$) into two subspaces: the subspace where $\mathbf{W}^T\mathbf{X}-b < 0$ is the censored region, while the remaining half space is the disclosed region. 

We first establish the relation between the deviation of the censored partial empirical CDF from its expectation, and the deviation of the full empirical CDF from its expectation. The proof mirrors that of Lemma ~\ref{lemma:left}.  

\begin{lemma}[Multivariate Censored Region]\label{lemma:multivariate_left}
Let $\textbf{Z} = \{\textbf{X} |\mathbf{W}^T\mathbf{X}-b < 0\}$ denote the $m$ out of $n+k$ samples that are in the censored region ($C$). Let $\alpha = F(\mathbf{W}^T\mathbf{X}-b < 0)$. Let $G$ and $G_{m}$ be the theoretical and empirical CDFs of $\textbf{Z}$, respectively. Then, 
{\begin{align*}
    \sup_{\mathbf{x} \in C}|F(\mathbf{x})-F_{n+k}(\mathbf{x})| \leq  \sup_{\mathbf{x} \in C}\underbrace{\Big|\min\Big(\alpha, \frac{m}{n}\Big)(G(\mathbf{x})  - G_{m}(\mathbf{x}))\Big|}_{\text{scaled censored subdomain error}} + \underbrace{\Big|\alpha - \frac{m}{n} \Big|}_{\text{scaling error}} .
\end{align*}}
\end{lemma}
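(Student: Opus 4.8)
The plan is to reproduce the structure of the proof of Lemma~\ref{lemma:left}, replacing the one-dimensional threshold $(-\infty,\theta)$ by the censored half-plane $C=\{\mathbf{x}:\mathbf{W}^T\mathbf{x}-b<0\}$. Two identities are needed: a theoretical one, $F(\mathbf{x})=\alpha\,G(\mathbf{x})$ for $\mathbf{x}\in C$, and an empirical one, $F_{n+k}(\mathbf{x})=\tfrac{m}{n}\,G_m(\mathbf{x})$ for $\mathbf{x}\in C$. Once these are established, the bound follows from a single add-and-subtract step together with the triangle inequality, exactly as in the scalar case.

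For the theoretical identity, I would first note that the standard joint CDF $F(\mathbf{x})=\Pr(X_1\le x_1,\,X_2\le x_2)$ measures the mass of the lower-left quadrant $Q(\mathbf{x})=\{\mathbf{X}:\mathbf{X}\le\mathbf{x}\}$. The key geometric fact is that, for $\mathbf{x}\in C$ and a suitably oriented boundary (a normal vector with nonnegative components), the entire quadrant $Q(\mathbf{x})$ lies inside $C$: decreasing either coordinate only decreases $\mathbf{W}^T\mathbf{X}$, so the inequality $\mathbf{W}^T\mathbf{X}-b<0$ is preserved. Consequently $\Pr(\mathbf{X}\le\mathbf{x})=\Pr(\mathbf{X}\in C)\,\Pr(\mathbf{X}\le\mathbf{x}\mid\mathbf{X}\in C)=\alpha\,G(\mathbf{x})$. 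This containment step is the main obstacle: for a general decision line the quadrant $Q(\mathbf{x})$ need not be contained in $C$, so the axis-aligned CDF is mismatched with the half-plane partition. This is precisely the difficulty flagged in the main text, which is why the rotated/adjusted-CDF device of Appendix~\ref{app:higher_dimension} is invoked to reduce to the favorably oriented case before this lemma is applied.

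For the empirical identity, I would count the samples falling in $Q(\mathbf{x})$ for $\mathbf{x}\in C$. By the same containment, every such sample lies in $C$; since the $k$ additional samples are all collected in the disclosed region, only the $m$ original censored-region samples can contribute, giving $\sum_i \mathbf{1}(\mathbf{X}_i\le\mathbf{x})=m\,G_m(\mathbf{x})$ and hence $F_{n+k}(\mathbf{x})=\tfrac{m}{n}G_m(\mathbf{x})$ under the reweighted construction of the full empirical CDF. Finally I would combine the two identities: writing $|F-F_{n+k}|=|\alpha G-\tfrac{m}{n}G_m|$, adding and subtracting $\alpha G_m$ and applying the triangle inequality yields $\sup_{\mathbf{x}\in C}|\alpha(G-G_m)|+|\alpha-\tfrac{m}{n}|$, where the second term is constant since $G_m\le 1$ with its supremum attained on $C$. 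Adding and subtracting $\tfrac{m}{n}G$ instead produces the same bound with $\tfrac{m}{n}$ in place of $\alpha$; retaining the smaller scaling factor $\min(\alpha,\tfrac{m}{n})$ gives the stated inequality.
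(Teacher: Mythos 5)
Your proposal is correct and follows essentially the same route as the paper, whose entire stated proof is that the argument mirrors that of Lemma~\ref{lemma:left}: the identities $F(\mathbf{x})=\alpha\,G(\mathbf{x})$ and $F_{n+k}(\mathbf{x})=\tfrac{m}{n}G_m(\mathbf{x})$ on $C$, a single add-and-subtract with the triangle inequality, the bound $G_m\le 1$ for the scaling-error term, and the $\min\big(\alpha,\tfrac{m}{n}\big)$ from the two choices of intermediate term. Your explicit identification of the quadrant-containment condition $Q(\mathbf{x})\subseteq C$ (a favorably oriented normal $\mathbf{W}$), without which both identities fail for a general decision line, is a condition the paper leaves implicit for the censored-region lemma, so spelling it out is a small improvement in rigor rather than a different approach.
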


In the disclosed region, we could derive the expression following a similar approach. However, in 2D space, for any given $\textbf{x}$ in the disclosed region, the number of samples in the disclosed region enclosed by $F(\textbf{x})$ should be no less than $n+k-m$ because we only need to subtract the number of samples from the area of $\mathbf{W}^T\mathbf{X}-b < 0$, $X_1 \leq x_1$ and $X_2 \leq x_2$ out of $m$. Since we only know there are a total of $m$ samples located in the censored region, we introduce an adjusted CDF $\Tilde{F}$, compared to the standard CDF $F(\textbf{x}) = Pr(X_1 \leq x_1, X_2 \leq x_2)$.

\begin{definition}[Adjusted CDF]
Let the decision line $\mathbf{W}^T\mathbf{X}-b = 0$ split the entire data domain into two subspaces. Let the adjusted CDF be 
\[\tilde{F}(x_1, b') = \int_{-\infty}^{\infty}\int_{-\infty}^{-\frac{w_1}{w_2}x_1 + \frac{w_1}{w_2}b'} f_{x_1,x_2} d_{x_1}d_{x_2}\]
where $w_1, w_2$ are the parameters defined by the line, and $b' \in (-\infty, \infty)$. 
\end{definition}

\begin{remark}
The standard CDF $F$ calculates the area of ($X_1 \leq x_1, X_2 \leq x_2$), where $x_1$ and $x_2$ are horizontal and vertical thresholds, respectively. In contrast, the adjusted CDF measures the area below the line $\mathbf{W}^T\mathbf{X}=b'$. When $b' = -\infty (\text{resp. } \infty)$, it is equivalent to the case of ($x_1 = x_2 = -\infty (\text{resp. } \infty)$). Similarly, when $b' = b$, $\tilde{F}(x_1, b'=b) = F(\mathbf{W}^T\mathbf{X}-b < 0) = \alpha$. The adjusted CDF and standard CDF are both non-decreasing functions ranging from 0 to 1, and they only differ in the ways of calculating the enclosed area. By using adjusted CDFs as a proxy, we can derive such relations in the disclosed region.
\end{remark}
  
\begin{lemma}[Multivariate Disclosed Region]\label{lemma:multivariate_right}
Let $\textbf{Z} = \{\textbf{X} |\mathbf{W}^T\mathbf{X}-b > 0\}$ denote the $n-m+k$ out of the $n+k$ samples in the disclosed region (D). Let $\alpha = F(\mathbf{W}^T\mathbf{X}-b < 0)$. Let $\tilde{K}$ and $\tilde{K}_{n-m+k}$ be the adjusted theoretical and empirical CDFs of $\textbf{Z}$, respectively. Then, 
{{\begin{align*} 
    \sup_{\mathbf{x} \in D}|F(\mathbf{x})-F_{n+k}(\mathbf{x})| &\leq  \sup_{\mathbf{x} \in D}\underbrace{\Big|\min(1-\alpha,1-\frac{m}{n}) (\tilde{K}(\mathbf{x})  - \tilde{K}_{n-m+k}(\mathbf{x}))\Big|}_{\text{scaled disclosed subdomain error}}\\
    &\qquad + \underbrace{2\Big|\alpha - \frac{m}{n}\Big|}_{\text{shifting and scaling errors}}
\end{align*}}}
\end{lemma}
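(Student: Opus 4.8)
The plan is to mirror the one–dimensional argument of Lemma~\ref{lemma:right}, but to carry it out on the \emph{adjusted} CDFs $\tilde F$, $\tilde K$, $\tilde K_{n-m+k}$ rather than on the standard ones. The reason this reduction works is that the disclosed region $\{\mathbf{W}^T\mathbf{X} > b\}$ is exactly a threshold cut on the scalar projection $\mathbf{W}^T\mathbf{X}$, so along the normal direction the adjusted CDF behaves precisely like the one–dimensional CDF of Lemma~\ref{lemma:right}, with the scalar threshold $\theta$ replaced by the level $b'=b$ (at which $\tilde F = \alpha$, as noted in the Remark). This lets me reuse the conditioning and scaling bookkeeping essentially verbatim in the projected coordinate.

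Concretely, I would first establish the two structural identities that parallel the scalar case: the conditioning relation $\tilde K(\mathbf{x}) = \frac{\tilde F(\mathbf{x}) - \alpha}{1-\alpha}$ for $\mathbf{x}\in D$ (since the disclosed mass is $1-\alpha$ and the adjusted CDF at the boundary equals $\alpha$), and the empirical decomposition $\tilde F_{n+k}(\mathbf{x}) = \frac{m}{n} + \frac{n-m}{n}\,\tilde K_{n-m+k}(\mathbf{x})$, which says the full adjusted empirical CDF starts at the empirical boundary mass $\frac{m}{n}$ and then accrues the $n-m+k$ disclosed samples, reweighted by $\frac{n-m}{n}$. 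Substituting both into $\tilde F - \tilde F_{n+k}$, adding and subtracting $(1-\alpha)\tilde K_{n-m+k}$ (or, symmetrically, $\frac{n-m}{n}\tilde K$), and applying the triangle inequality reproduces the scaled disclosed error with factor $\min(1-\alpha, 1-\frac{m}{n})$ together with the doubled term $2|\alpha - \frac{m}{n}|$ — one copy for shifting the starting point of the partial CDF and one for rescaling it — exactly as in Lemma~\ref{lemma:right}.

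The main obstacle, and the genuinely new step relative to the scalar proof, is bridging from the \emph{standard} CDF deviation on the left–hand side, $\sup_{\mathbf{x}\in D}|F(\mathbf{x}) - F_{n+k}(\mathbf{x})|$, to the \emph{adjusted} deviation just bounded. The difficulty is geometric: $F(\mathbf{x})$ measures an axis–aligned lower–left quadrant $\{X_1 \le x_1, X_2 \le x_2\}$, which cuts across the decision line, whereas $\tilde F$ measures a half–plane parallel to it, so the censored samples sitting inside a quadrant must be accounted for. I would handle this by decomposing each quadrant $Q(\mathbf{x}) = (Q(\mathbf{x})\cap C)\cup(Q(\mathbf{x})\cap D)$ and using that the censored portion contributes at most the fixed $m$ initial samples (there is no new data below the line), so its contribution is already controlled by the censored subdomain of Lemma~\ref{lemma:multivariate_left}, while the disclosed portion is dominated — monotonically in $\mathbf{x}$ — by the adjusted disclosed CDF. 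The crux is verifying that this domination holds uniformly over $\mathbf{x}\in D$, so that the supremum over quadrants is bounded by the supremum over the parallel half–planes to which the multivariate DKW inequality (Theorem~\ref{thm:multivariate_dkw}) can then be applied; once that monotone containment is in place, the remainder of the bound follows by the same steps as Lemma~\ref{lemma:right}.
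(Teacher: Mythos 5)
Your first step --- working with the adjusted CDFs, establishing $\tilde K(\mathbf{x}) = (\tilde F(\mathbf{x})-\alpha)/(1-\alpha)$ and $\tilde F_{n+k}(\mathbf{x}) = \frac{m}{n} + \frac{n-m}{n}\tilde K_{n-m+k}(\mathbf{x})$, and then repeating the add-and-subtract argument --- is exactly what the paper does; its proof literally ends with ``the remainder of the proof follows that of Lemma~\ref{lemma:right}.'' The divergence, and the gap, is in the bridge from the standard CDF deviation to the adjusted one. The paper handles this with a single containment claim: for $\mathbf{x}\in D$ the quadrant $Q(\mathbf{x})=\{X_1\le x_1, X_2\le x_2\}$ sits inside the half-plane below the parallel line through $\mathbf{x}$, and from this it asserts $\sup_{\mathbf{x}\in D}|F-F_{n+k}| \le \sup_{\mathbf{x}\in D}|\tilde F - \tilde F_{n+k}|$, after which everything is carried out on the adjusted CDFs only. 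Your proposed bridge is different: you split each quadrant into its censored and disclosed pieces and control the two pieces separately.

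That decomposition cannot deliver the stated inequality. Writing $F(\mathbf{x})-F_{n+k}(\mathbf{x})$ as the sum of the deviations on $Q(\mathbf{x})\cap C$ and on $Q(\mathbf{x})\cap D$ leaves you with a censored-portion term of the form $\min\big(\alpha,\frac{m}{n}\big)\sup|G-G_m|$ plus its scaling error, and this term simply does not appear on the right-hand side of Lemma~\ref{lemma:multivariate_right}, which contains only the adjusted disclosed subdomain error and $2\big|\alpha-\frac{m}{n}\big|$. Saying the censored piece is ``already controlled by Lemma~\ref{lemma:multivariate_left}'' does not make it vanish from \emph{this} bound: Lemma~\ref{lemma:multivariate_left} is a separate statement about $\mathbf{x}\in C$ that enters Theorem~\ref{thm:multivariate_two_subdomains} through its own term in the union bound, so the disclosed-region lemma must stand on its own. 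Moreover, the step you yourself flag as the crux --- that containment of regions implies domination of the supremum of \emph{absolute} deviations --- is left unverified, and it is not automatic: $Q(\mathbf{x})\subseteq H(\mathbf{x})$ only gives $F\le\tilde F$ and $F_{n+k}\le\tilde F_{n+k}$ pointwise, which does not by itself imply $|F-F_{n+k}|\le|\tilde F-\tilde F_{n+k}|$. (The paper's own one-line justification of this same inequality is equally terse on that point, but that single inequality, not the quadrant decomposition, is what you would need to establish to match the stated bound.)
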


\begin{proof}
Given any $\mathbf{x} = (x_1, x_2)$, we can always find a line passing through the point $(x_1, x_2)$ with the same $\mathbf{W}$ as the decision line. Moreover, the area below the line is greater than that measured by the standard CDF. Since our focus is on $ \sup_{\mathbf{x}}|F(\mathbf{x})-F_{n+k}(\mathbf{x})|$, the maximum differences between empirical and theoretical CDFs can be upper bounded in a larger area (e.g., the area below the line). Therefore, we can obtain the following inequality:
\[\sup_{\mathbf{x} \in D}|F(\mathbf{x})-F_{n+k}(\mathbf{x})| \leq \sup_{\mathbf{x} \in D}|\tilde{F}(\mathbf{x})-\tilde{F}_{n+k}(\mathbf{x})|\]
The equality holds when the maximum difference occurs in the area of ($X_1 \leq x_1, X_2 \leq x_2$). The remainder of the proof follows that of Lemma ~\ref{lemma:right}.
\end{proof}

Now, we present our theorem, extending the multivariate DKW inequality to problems with censored feedback. Analogous to Theorem~\ref{thm:two_subdomains}, we derive the following based on Lemma ~\ref{lemma:multivariate_left} and \ref{lemma:multivariate_right}:

\begin{thm}\label{thm:multivariate_two_subdomains}
Let $\textbf{X}_1, \textbf{X}_2, \ldots, \textbf{X}_n$ be initial/historical IID data samples with cumulative distribution function $F(\textbf{X})$. Let $\mathbf{W}^T\mathbf{X}-b=0$ partition the data domain into two regions, such that $\alpha=F(\mathbf{W}^T\mathbf{X}-b < 0)$, and $m$ of the initial $n$ samples are located in the censored region. Assume we collect $k$ additional samples from the disclosed region, and let $F_{n+k}$ denote the empirical CDF estimated from these $n+k$ (non-IID) data. 
Then, for every $\eta >0$, 
{\begin{align*}
     \mathbb{P}\bigg[\sup_{\textbf{x}\in \mathbb{R}^2} \Big|F(\textbf{x}) - F_{n+k}(\textbf{x})\Big|  \geq \eta \bigg] &  \leq \underbrace{4\exp\Big({\tfrac{-2m(\eta-|\alpha - \frac{m}{n}|)^2}{\min\big(\alpha, \frac{m}{n}\big)^2}}\Big)}_{\text{censored region error (constant)}}\\
     &\qquad + \underbrace{4\exp\Big({\tfrac{-2(n-m+k)(\eta-2|\alpha - \frac{m}{n}|)^2}{\min\big(1-\alpha,\frac{n-m}{n}\big)^2}}\Big)}_{\text{disclosed region error (decreasing with additional data)}}~
\end{align*}
}
\end{thm}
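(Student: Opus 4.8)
The plan is to replay the proof of Theorem~\ref{thm:two_subdomains} line by line, swapping in the multivariate ingredients at each step. First I would split the full-domain supremum across the two subspaces cut out by the decision line, writing
\[
\sup_{\mathbf{x}\in\mathbb{R}^2}\big|F(\mathbf{x})-F_{n+k}(\mathbf{x})\big| = \max\Big(\sup_{\mathbf{x}\in C}\big|F-F_{n+k}\big|,\ \sup_{\mathbf{x}\in D}\big|F-F_{n+k}\big|\Big),
\]
and then apply the union bound, so that the probability the left-hand side exceeds $\eta$ is at most the sum of the probabilities attached to the censored region $C$ and the disclosed region $D$.

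Next I would invoke Lemma~\ref{lemma:multivariate_left} on $C$ and Lemma~\ref{lemma:multivariate_right} on $D$. Exactly as in the scalar argument, these bound the full-domain deviation on each subspace by a scaled subdomain deviation plus a constant (scaling, or shifting-and-scaling) error. Since the scaling factors $\min(\alpha,\tfrac{m}{n})$ and $\min(1-\alpha,\tfrac{n-m}{n})$ are constants once the line is fixed, I can pull them out of the supremum and rearrange each event into
\[
\sup_{\mathbf{x}\in C}\big|G-G_m\big| \geq \frac{\eta-\big|\alpha-\tfrac{m}{n}\big|}{\min(\alpha,\tfrac{m}{n})}, \qquad \sup_{\mathbf{x}\in D}\big|\tilde{K}-\tilde{K}_{n-m+k}\big| \geq \frac{\eta-2\big|\alpha-\tfrac{m}{n}\big|}{\min(1-\alpha,\tfrac{n-m}{n})}.
\]
The $m$ points in $C$ and the $n-m+k$ points in $D$ are genuinely IID within their respective regions, so each of these two events is exactly of the form controlled by the multivariate DKW inequality, Theorem~\ref{thm:multivariate_dkw}. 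Applying that inequality to each subdomain and noting that the ambient dimension is $2$ produces a prefactor of $2\times 2 = 4$ (rather than the $2$ of the scalar DKW bound) on each exponential; this is precisely the source of the coefficient $4$ in the claimed bound.

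The main obstacle, and the only real departure from the one-dimensional proof, is in the disclosed region, where Lemma~\ref{lemma:multivariate_right} is stated in terms of the \emph{adjusted} CDF $\tilde{K}$ rather than the standard CDF. I would need to justify that Theorem~\ref{thm:multivariate_dkw} still controls $\sup_{\mathbf{x}\in D}|\tilde{K}-\tilde{K}_{n-m+k}|$. This is legitimate because the adjusted CDF is just the standard CDF read off in a rotated coordinate frame in which the line $\mathbf{W}^T\mathbf{X}-b=0$ becomes axis-aligned; the DKW estimate is invariant under such an orthogonal change of coordinates, so the same $4\exp(-2(n-m+k)\eta^2)$-type bound applies to the adjusted deviation. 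Moreover, the proof of Lemma~\ref{lemma:multivariate_right} already furnishes the domination $\sup_{\mathbf{x}\in D}|F-F_{n+k}| \leq \sup_{\mathbf{x}\in D}|\tilde{F}-\tilde{F}_{n+k}|$, so it genuinely suffices to bound the adjusted CDF error, which makes this step clean once the coordinate-invariance is spelled out.

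Finally, I would collect the two exponential estimates to obtain the stated bound, and remark that everything carries over verbatim to $d>2$ dimensions, simply replacing the prefactor $4$ by $2d$. I expect no further subtleties beyond the adjusted-CDF justification above, since the algebraic rearrangement of the scaling and shifting errors is identical to that already carried out in the proofs of Lemmas~\ref{lemma:left} and~\ref{lemma:right} and of Theorem~\ref{thm:two_subdomains}.
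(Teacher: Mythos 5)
Your proposal is correct and follows essentially the same route as the paper, which itself only sketches this proof by stating that it ``follows the same logic'' as Theorem~\ref{thm:two_subdomains} with the multivariate DKW inequality substituted in: split the supremum over the two half-planes, apply the union bound, invoke Lemmas~\ref{lemma:multivariate_left} and~\ref{lemma:multivariate_right}, and apply Theorem~\ref{thm:multivariate_dkw} with $k=2$ to obtain the prefactor $4$. Your extra care with the adjusted CDF is sound (and in fact the supremum over $\tilde K$ ranges only over lines parallel to the decision boundary, i.e.\ a one-parameter family, so even the univariate DKW bound for the scalar projection $\mathbf{W}^T\mathbf{X}$ would suffice there), which is consistent with, and slightly more explicit than, what the paper records.
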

Similarly, incorporating the exploration, the $LB$ can now be defined as a new line sharing the same $\mathbf{W}$ as the decision line but with a different intercept (e.g., $b_{\text{LB}} < b$). Consequently, we have the following:
\begin{thm}\label{thm:multivariate_three_subdomains}
Let $\textbf{X}_1, \textbf{X}_2, \ldots, \textbf{X}_n$ be initial/historical IID data samples with cumulative distribution function $F(\textbf{X})$. Let $\mathbf{W}^T\mathbf{X}-b_{\text{LB}}=0$ and $\mathbf{W}^T\mathbf{X}-b=0$ partition the data domain into three regions, such that $\beta=F(\mathbf{W}^T\mathbf{X}-b_{\text{LB}} < 0)$ and $\alpha=F(\mathbf{W}^T\mathbf{X}-b < 0)$, with $l$ and $m$ of the initial $n$ samples located in the censored region, and the combined censored and exploration region, respectively. Assume we collect an additional $k_e$ samples from the exploration region, under an exploration probability $\epsilon$, and an additional number of $k_d$ samples from the disclosed region. Let $F_{n+k_e+k_d}$ denote the empirical CDF estimated from these $n+k_e+k_d$ non-IID samples. Then, for every $\eta >0$, 
{
\begin{align*}
    &\mathbb{P}\bigg[\sup_{\textbf{x}\in \mathbb{R}^2} \Big|F(\textbf{x}) - F_{n+k_e+k_d}(\textbf{x})\Big|\geq \eta \bigg] \leq 4\exp\Big({\tfrac{-2l(\eta-|\beta - \frac{l}{n} |)^2}{\min\big(\beta, \frac{l}{n}\big)^2}}\Big) \\
    &\qquad \hspace{1.5in} + 4\exp\Big({\tfrac{-2(m-l+k_e)\big(\eta-|\beta - \frac{l}{n}| - \big|\alpha -\beta - \frac{n-l}{n}\frac{m-l+k_e}{n-l+k_e+\epsilon k_d}\big|\big)^2}{\min\big(\alpha-\beta,\frac{n-l}{n}\frac{m-l+k_e}{n-l+k_e+\epsilon k_d}\big)^2}}\Big) \\
    &\qquad \hspace{1.5in} + 4\exp\Big({\tfrac{-2(n-m+k_d)\big(\eta-2\big|\alpha - \frac{l}{n} -  \frac{n-l}{n}\frac{m-l+k_e}{n-l+k_e+\epsilon k_d}\big|\big)^2}{\min\big(1-\alpha,\frac{n-l}{n}\frac{n-m+\epsilon k_d}{n-l+k_e+\epsilon k_d}\big)^2}}\Big).
\end{align*}
}
\end{thm}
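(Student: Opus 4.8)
The plan is to mirror the proof of Theorem~\ref{thm:three_subdomains}, replacing the one-dimensional subdomain lemmas by their two-dimensional counterparts and invoking the multivariate DKW inequality (Theorem~\ref{thm:multivariate_dkw}) on each IID block. First I would let the two parallel lines $\mathbf{W}^T\mathbf{X}-b_{\text{LB}}=0$ and $\mathbf{W}^T\mathbf{X}-b=0$ partition $\mathbb{R}^2$ into the censored half-plane ($\mathbf{W}^T\mathbf{x}<b_{\text{LB}}$, carrying mass $\beta$ and $l$ initial samples), the exploration band ($b_{\text{LB}}\le\mathbf{W}^T\mathbf{x}\le b$, mass $\alpha-\beta$, with $m-l$ initial and $k_1$ newly collected samples), and the disclosed half-plane ($\mathbf{W}^T\mathbf{x}>b$, mass $1-\alpha$, with $n-m$ initial and $k_2$ new samples). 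Within each block the samples are IID, so Theorem~\ref{thm:multivariate_dkw} applies; the work, as before, lies in reassembling the three blocks into a bound on the full-domain deviation.

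The second step is to bound the global supremum by a union over the three regions. For $\mathbf{x}$ in the censored half-plane, the rectangle $(-\infty,x_1]\times(-\infty,x_2]$ defining the standard CDF is contained in that half-plane, so the deviation is controlled directly by Lemma~\ref{lemma:multivariate_left}. For $\mathbf{x}$ in the exploration band or the disclosed half-plane the rectangle straddles several regions, so I would pass to the adjusted CDF $\tilde{F}$: exactly as in the proof of Lemma~\ref{lemma:multivariate_right}, the line through $\mathbf{x}$ parallel to the decision boundary encloses a larger area than the rectangle, yielding $\sup_{\mathbf{x}}|F-F_{n+k_1+k_2}|\le\sup_{\mathbf{x}}|\tilde{F}-\tilde{F}_{n+k_1+k_2}|$ on each of those two regions.

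The main new ingredient is an exploration-region lemma analogous to Lemma~\ref{lemma:exploration}, but stated for the band between the two parallel lines. Using the adjusted CDF to handle the upper boundary (the line at $b$) and a shift by $\beta$ to handle the lower boundary (the line at $b_{\text{LB}}$), I would decompose the adjusted deviation into a scaled exploration-subdomain error, a shifting error $|\beta-\tfrac{l}{n}|$, and a re-estimated scaling error $|\alpha-\beta-\tfrac{n-l}{n}\tfrac{m-l+k_1}{n-l+k_1+\epsilon k_2}|$, with scaling factor $\min(\alpha-\beta,\tfrac{n-l}{n}\tfrac{m-l+k_1}{n-l+k_1+\epsilon k_2})$, reflecting the same update of the empirical estimate of $\alpha$ as in the one-dimensional case. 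The disclosed region is then handled by this re-estimation together with Lemma~\ref{lemma:multivariate_right}.

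Finally, I would assemble the pieces: apply the union bound, substitute the three subdomain lemmas, and rearrange each term into the form $\mathbb{P}[\sup|\text{block deviation}|\ge(\eta-\text{errors})/\text{scale}]$. Applying Theorem~\ref{thm:multivariate_dkw} to each block with dimension $k=2$, so that the prefactor is $2k=4$, yields the three $4\exp(\cdot)$ terms in the statement. I expect the main obstacle to be the exploration band: unlike the single-line disclosed case, it is bounded by two parallel lines, so I must verify that the adjusted-CDF upper bound still holds there and that both the lower-boundary shift and the re-estimated scaling are propagated correctly, without double-counting the samples that migrate between blocks as $k_1$ and $k_2$ grow.
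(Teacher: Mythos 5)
Your proposal takes essentially the same route as the paper, whose own ``proof'' of Theorem~\ref{thm:multivariate_three_subdomains} is a one-sentence remark that it follows the logic of Theorem~\ref{thm:three_subdomains} with the multivariate DKW inequality (hence the prefactor $2k=4$) and the re-estimated $\alpha$; your union bound over the three IID blocks, the use of Lemma~\ref{lemma:multivariate_left} in the censored half-plane, the passage to the adjusted CDF via Lemma~\ref{lemma:multivariate_right}, and the 2D analogue of Lemma~\ref{lemma:exploration} for the band are exactly the intended ingredients. You in fact supply more detail than the paper, correctly isolating the one delicate point (verifying the adjusted-CDF comparison and the shift/scale bookkeeping in the region bounded by the two parallel lines).
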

The proofs for Theorem~\ref{thm:multivariate_two_subdomains} and ~\ref{thm:multivariate_three_subdomains} follow the same logic as the proofs for Theorem~\ref{thm:two_subdomains} and~\ref{thm:three_subdomains}, with an application of the multivariate DKW inequality and the re-estimated $\alpha$.

\section{Additional experiments} \label{app:more-experiments}

\textbf{Additional experiments on CDF error bounds}

\begin{figure}[ht]
	\centering
    \includegraphics[width=0.5\textwidth]{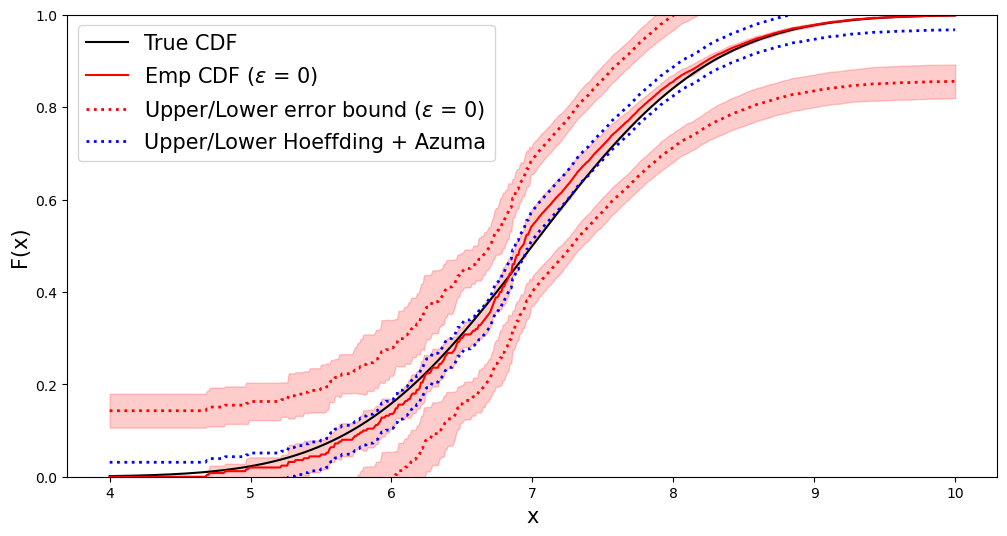}
    \caption{Comparison with our CDF error bound and existing bound.}
	\label{fig:DKW_appendix}
\end{figure}

In this experiment, we aim to compare the performance of our bounds to the existing `Hoeffding + Azuma' bounds derived from Hoeffding and Azuma inequalities \citep{hoeffding1994probability,azuma1967weighted} through the 
confidence bounds ($\delta = 0.015$) around the empirical distribution function. We proceed with the same experiment setting on the synthetic data as in Fig~\ref{fig:DKW}, but without exploration. We run the experiments 5 times and report the average results with its error bars. 

From Figure~\ref{fig:DKW_appendix}, we can clearly see that the existing generalization bound (blue dotted lines) fails to capture the true distribution, especially in the censored and exploration regions, from where we do not collect samples due to the censored feedback. This is because the discrepancy between the empirical and true distributions is large in regions with fewer samples, and the existing generalization bound does not account for censored feedback, decreasing the error bounds faster. 

\textbf{Comparison of \emph{a priori} and average of \emph{a posteriori} bounds.} The Fig.~\ref{fig:compare-apriori-posterior} shows that the average of \emph{a posterior} bounds over multiple runs can be viewed as an approximation of the \emph{a priori} bounds. 
\begin{figure}[ht]
	\centering
    \includegraphics[width=0.5\textwidth]{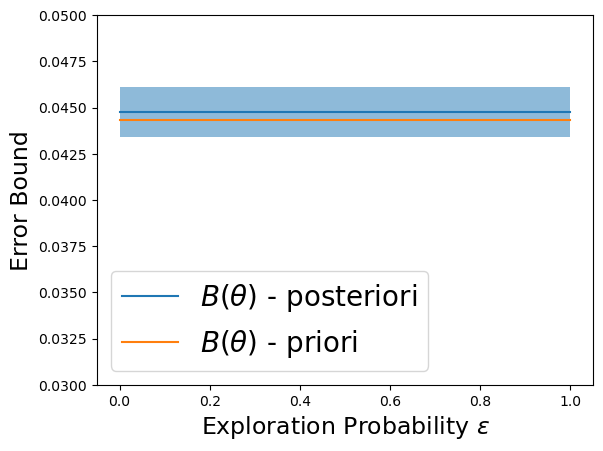}
    \caption{Comparison of Corollary~\ref{cor:a-priori-bound} and average of Theorem~\ref{thm:two_subdomains} bounds.}
	\label{fig:compare-apriori-posterior}
\end{figure} 

\end{appendices}

\section*{Declarations} 
\begin{itemize}
\item Funding: This work is supported in part by the National Science Foundation (NSF) program on Fairness in AI in collaboration with Amazon under grant IIS-2040800, NSF grant CMMI-2411000, and Cisco Research. Any opinion, findings, and conclusions or recommendations expressed in this material are those of the authors and do not necessarily reflect the views of the NSF, Amazon, or Cisco. 
\item Conflict of interest: Not applicable.
\item Ethics approval and consent to participate: Not applicable.
\item Consent for publication: Not applicable.
\item Data availability: All datasets are publicly available.
\item Materials availability: Not applicable.
\item Code availability: The code is included in the supplementary material. 
\item Author contribution: Contributing authors are Yifan Yang, Ali Payani, and Parinaz Naghizadeh. 
\end{itemize}

\bibliography{sn-bibliography}

\end{document}